\documentclass[11pt]{article}

\usepackage[utf8]{inputenc}
\usepackage[T1]{fontenc}
\usepackage[margin=1in]{geometry}
\usepackage{authblk}
\usepackage[round,authoryear]{natbib}
\usepackage{microtype}
\usepackage{amsmath,amssymb,amsthm,mathtools}
\usepackage{pifont}
\usepackage{xcolor}
\usepackage{algorithm}
\usepackage{algpseudocode}
\usepackage{tikz}
\usetikzlibrary{arrows.meta,calc}
\usepackage{titletoc}
\usepackage{graphicx}
\usepackage{caption}
\usepackage{enumitem}
\usepackage{xurl}
\usepackage{amsmath,amsfonts,bm}

\def\1{\bm{1}}

\def\ve{{\bm{e}}}

\def\vx{{\bm{x}}}
\def\vy{{\bm{y}}}
\def\vz{{\bm{z}}}

\def\mP{{\bm{P}}}
\def\mQ{{\bm{Q}}}

\DeclareMathAlphabet{\mathsfit}{\encodingdefault}{\sfdefault}{m}{sl}
\SetMathAlphabet{\mathsfit}{bold}{\encodingdefault}{\sfdefault}{bx}{n}

\newcommand{\E}{\mathbb{E}}
\newcommand{\Prob}{\mathbb{P}}

\newcommand{\KL}{D_{\mathrm{KL}}}
\newcommand{\TV}{d_{\mathrm{TV}}}
\newcommand{\dHam}{d_{\mathrm{Ham}}}
\newcommand{\DBr}{D_{\mathrm{Br}}}

\DeclareMathOperator{\supp}{supp}
\usepackage{hyperref}
\usepackage{setspace}

\hypersetup{
  pdftitle={Uniform Discrete Diffusion Models are Minimax Optimal for Estimating Distributions with Small Effective Support Size},
  pdfauthor={Dongsun Yoon, Saptarshi Chakraborty},
  colorlinks=true,
  linkcolor=blue,
  citecolor=blue,
  urlcolor=teal,
  bookmarksnumbered=true
}

\theoremstyle{plain}
\newtheorem{theorem}{Theorem}
\newtheorem{proposition}[theorem]{Proposition}
\newtheorem{lemma}[theorem]{Lemma}
\newtheorem{corollary}[theorem]{Corollary}

\theoremstyle{definition}
\newtheorem{definition}[theorem]{Definition}
\newtheorem{assumption}[theorem]{Assumption}
\theoremstyle{remark}

\title{Uniform Discrete Diffusion Models are Minimax Optimal for Estimating Distributions with Small Effective Support Size}
\author[1]{Dongsun Yoon\thanks{Email: \texttt{dsyoon@umich.edu}}}
\author[1]{Saptarshi Chakraborty\thanks{Email: \texttt{saptarsc@umich.edu}}}
\affil[1]{Department of Statistics, University of Michigan}
\date{}

\begin{document}
\maketitle

\begin{abstract}
Discrete diffusion models have emerged as a practically successful framework for generative modeling on discrete product spaces, yet their statistical generalization properties remain poorly understood.
Discrete real-world data such as text or biological sequences often concentrate on a small fraction of the astronomically large ambient space because of semantic or physical constraints, but existing bounds fail to capture this distributional structure and instead scale with the size of the ambient space, giving rise to almost vacuous error bounds.
We address this gap for uniform discrete diffusion, one of the two dominant discrete diffusion paradigms alongside masking diffusion, by deriving statistical guarantees governed by the \emph{effective support size} $\mathfrak s_n(P_0)$, a sample-size-dependent measure of distributional complexity.
Given $n$ independent and identically distributed (i.i.d.) samples from an unknown data distribution $P_0$ on $[K]^d$, we show that, with appropriate choices of network size and hyperparameters, the expected total variation (TV) loss scales as $\mathcal{O}\left(\sqrt{\mathfrak{s}_n(P_0)/n}\right)$, while the expected Kullback--Leibler (KL) divergence is bounded by \(\mathcal{O}\left(\frac{1}{n}\mathfrak{s}_n(P_0)\log(eK^d/\mathfrak{s}_n(P_0))\log n\right)\).
Furthermore, we show that the TV rate is minimax optimal and that the KL rate is minimax optimal up to a factor of $\log n$.
Together, these upper and lower bounds show that uniform discrete diffusion successfully avoids the curse of dimensionality for distributions with small effective support size: the TV error rate depends on the ambient state-space size only through $\mathfrak{s}_n(P_0)$, while the corresponding KL rate incurs only an additional logarithmic dependence on the ambient state-space size.
\end{abstract}
% \endgroup
\par

\section{Introduction}
\label{sec:introduction}

Diffusion models have become a central paradigm for generative modeling \citep{sohl-dickstein_deep_2015,ho_denoising_2020,song_score-based_2021}.
Extensions to discrete state spaces have carried this paradigm to categorical domains such as text, graphs, molecules, and combinatorial optimization \citep{he_diffusionbert_2023,hoogeboom_argmax_2021,vignac_digress_2023,sun_difusco_2023}.
\citet{austin_structured_2021} introduced structured categorical corruption kernels in discrete time, and \citet{campbell_continuous_2022} cast both the forward and reverse processes as continuous-time Markov chains (CTMCs).
Score-based approaches that estimate probability ratios through score entropy have since achieved strong empirical performance in language modeling \citep{lou_discrete_2024}.
Two forward processes dominate: \emph{uniform diffusion}, in which each coordinate evolves toward uniformity, and \emph{masking diffusion}, in which each coordinate is absorbed into a special mask state.
They induce distinct reverse-time dynamics and hence different statistical and algorithmic properties \citep{dmitriev_efficient_2026,zhang_generalization_2026}.
In this work, we focus on coordinatewise uniform diffusion on the product space $[K]^d$.

The empirical success of discrete diffusion has spurred substantial theoretical work.
Most of this work concerns the convergence of reverse-time samplers \citep{campbell_continuous_2022,chen_uniformization_2024,ren_stochastic_2025,zhang_convergence_2025,pham_discrete_2025,liang_discrete_2025,dmitriev_efficient_2026}, particularly fixed-grid $\tau$-leaping and its variants.
For uniform discrete diffusion, representative results \citep{liang_discrete_2025,dmitriev_efficient_2026} take the form
\[
D_{\mathrm{KL}}(P_\delta\|Q_\delta)
\lesssim
\varepsilon_{\mathrm{score}}
+ e^{-T}d\log K
+ \mathcal R_{\mathrm{disc}}.
\]
Here, $P_\delta$ is the forward marginal at the early-stopping time $\delta$, and $Q_\delta$ is the output distribution of the sampler initialized from the uniform distribution at time $T$.
The first term $\varepsilon_{\mathrm{score}}$ bounds the interval-weighted score-entropy error on the sampling grid, and the remaining terms account for initialization and time discretization.
These guarantees take an $\varepsilon_{\mathrm{score}}$-accurate score estimate as given and do not establish that such an estimate can be learned.
It therefore remains unclear how they translate into finite-sample bounds on the distribution-estimation risk when the score is learned from $n$ training samples.

Without exploiting structure in the data distribution, finite-sample guarantees face a fundamental obstacle: the size of the ambient space.
For arbitrary distributions on $[K]^d$, the minimax expected risk is of order $\Theta\left(\min\{1,\sqrt{K^d/n}\}\right)$ under TV loss \citep{han_minimax_2015,kamath_learning_2015} and $\Theta\left(\log(1+K^d/n)\right)$ under KL loss \citep{ye_instance_2025, mourtada_estimation_2026}.
Consistent estimation over the unrestricted class therefore requires $K^d/n\to 0$, a condition far from realistic.
The Score Entropy Discrete Diffusion (SEDD) language experiments \citep{lou_discrete_2024}, for instance, use a vocabulary size $K=50{,}257$ and sequence length $d=1024$. The resulting ambient space of size $50{,}257^{1024}$ is astronomically larger than any feasible training set.

The finite-sample statistical theory of discrete diffusion that has recently begun to emerge does not escape this barrier \citep{wakasugi_state_2025,cho_minimax_2026,srikanth_sample_2026,zhang_generalization_2026}.
The KL bounds of \citet{wakasugi_state_2025} and \citet{cho_minimax_2026} are of order $\widetilde{\mathcal O}(K^d/n)$, and the TV bounds of \citet{zhang_generalization_2026} grow exponentially in $d$, so all become vacuous when $n\ll K^d$.
Refinements that avoid this dependence either rely on assumptions on $P_0$ that are unverifiable in practice, such as fast spectral decay \citep{wakasugi_state_2025}, or depend on quantities that can themselves be as large as $K^d$.
For example, the bound of \citet{srikanth_sample_2026} scales polynomially with $\max_{d_{\mathrm{Ham}}(\vx,\vy)=1} P_0(\vy)/P_0(\vx)$, which is enormous when a single-token substitution sharply reduces sequence probability.

Such pessimistic guarantees are at odds with the structure of real data.
Text and biological sequences concentrate on a vanishing fraction of $[K]^d$ because of semantic or physical constraints, so the ambient cardinality greatly overstates their statistical complexity.
A natural first refinement replaces $K^d$ by the support size $|\supp(P_0)|$.
However, the support can itself be comparable to $K^d$ even when most of the mass lies on far fewer states.
Moreover, at sample size $n$, states with probability well below $1/n$ are unlikely to be observed and should not each count as a full unit of complexity.

We therefore measure complexity by the \emph{effective support size}
\[
\mathfrak{s}_n(P_0):=\sum_{\vx\in[K]^d}\min\{nP_0(\vx),1\}.
\]
Each state with probability at least $1/n$ counts as one effective state, while rarer states contribute according to their expected counts in $n$ samples.
Up to universal constants, $\mathfrak{s}_n(P_0)$ is the expected number of distinct states in a sample of size $n$ (Lemma~\ref{lem:D_n_s_n}).
It satisfies $\mathfrak{s}_n(P_0)\le\min\{n,|\supp(P_0)|\}$ and can be far smaller than both. Under Zipfian decay $p_{(j)}\asymp j^{-\alpha}$, for instance, $\mathfrak{s}_n(P_0)\asymp\min\{|\supp(P_0)|,n^{1/\alpha}\}$ (Section~\ref{subsec:statistical-setup}).
The notion of effective support size has been used in the classical literature on occupancy counts, unseen species, and missing mass \citep{good_population_1953,karlin_central_1967,benhamou_concentration_2017}.
Related occupancy-based notions have also yielded distribution-dependent guarantees for discrete distribution estimation \citep{falahatgar_power_2017,mourtada_estimation_2026}.

This leads to the central question of the paper:
\begin{quote}
\emph{Can discrete diffusion models attain distribution-estimation rates that adapt to $\mathfrak{s}_n(P_0)$ and remain informative even when $n\ll K^d$? }
\end{quote}

\paragraph{Summary of contributions.}
We answer this question affirmatively for uniform discrete diffusion.

\begin{itemize}[leftmargin=10pt]
\item \textbf{Adaptive guarantees for uniform discrete diffusion.}
We establish end-to-end finite-sample guarantees for uniform discrete diffusion with a learned score. They account for score estimation with ReLU networks as well as initialization and time-discretization errors, and they apply to any fixed-grid sampler satisfying a standard score-entropy guarantee, including $\tau$-leaping.
The expected TV loss is $\mathcal O\big(\sqrt{\mathfrak{s}_n(P_0)/n}\big)$, with number of sampling steps and network depth, width, and sparsity polynomial in $K$, $d$, and $n/\mathfrak{s}_n(P_0)$.
With early stopping at $\delta=1/n$, the expected KL divergence is $\mathcal O\big(\frac 1n \mathfrak{s}_n(P_0)\log(eK^d/\mathfrak{s}_n(P_0))\log n\big)$.
Both bounds depend on $K^d$ at most logarithmically and remain informative when $n\ll K^d$.
The KL guarantee carries particular weight: the empirical distribution has infinite KL risk whenever some states go unobserved, so attaining it requires generalizing beyond the training samples.

\item \textbf{Matching minimax lower bounds.}
For $2\le s_n\le \min\{n,K^d\}$, let $\mathcal P_{s_n}:=\{P_0\in\Delta_{K^d}: \mathfrak{s}_n(P_0)\le s_n\}$. We establish the non-asymptotic minimax rates
\[
\begin{aligned}
\inf_{\widetilde P}\sup_{P_0\in\mathcal P_{s_n}}
\E_{P_0}\TV(P_0,\widetilde P)
&\asymp \sqrt{\frac{s_n}{n}},\qquad
\inf_{\widetilde P}\sup_{P_0\in\mathcal P_{s_n}}
\E_{P_0}\KL(P_0\Vert\widetilde P)
&\asymp \frac{s_n\log(eK^d/s_n)}{n}.
\end{aligned}
\]
Hence, uniform discrete diffusion is minimax optimal under TV loss and optimal up to a $\log n$ factor under KL loss.
When $s_n=\min\{n,K^d\}$, \(\mathcal P_{s_n}=\Delta_{K^d}\) holds, and our rates reduce to the classical minimax rates for arbitrary distributions on \(K^d\) states.
Thus, our result extends the classical unrestricted theory, while yielding potentially much sharper rates for distributions with small effective support size.
\par\smallskip
\noindent
\begin{minipage}[t]{0.55\linewidth}
\vspace{-15pt}
Allowing \(K\) and \(d\) to vary with \(n\), these rates also characterize when consistent estimation is possible.
Under TV loss, this holds if and only if $s_n/n\to0$, and under KL loss, if and only if $(s_n/n)\log(eK^d/s_n)\to0$.
The gap between the two conditions yields three regimes (Figure~\ref{fig:asymptotic_regimes}): consistent estimation is impossible, possible under TV loss only, or possible under both losses.
\end{minipage}\hfill%
\begin{minipage}[t]{0.42\linewidth}
\vspace{-40pt}
\centering
\resizebox{\linewidth}{!}{%
\begin{tikzpicture}[
x=0.95cm, y=0.82cm, >=Latex, font=\small ]

\def\xcut{1.333} \def\xmax{5.833} \def\ymax{4.7}

\def\curve#1{1.00 + 0.09*(exp(0.80*(#1-\xcut))-1)}

\begin{scope}
\clip (0,0) rectangle (\xmax,\ymax);

\fill[red!18] (0,0) rectangle (\xcut,\ymax);

\fill[green!18] (\xcut,0) -- (\xmax,0) -- plot[ domain=\xmax:\xcut, samples=120, smooth ] (\x,{\curve{\x}}) -- cycle;

\fill[yellow!35] (\xcut,\ymax) -- (\xmax,\ymax) -- plot[ domain=\xmax:\xcut, samples=120, smooth ] (\x,{\curve{\x}}) -- cycle;

\draw[thick] (\xcut,0) -- (\xcut,\ymax);

\draw[thick] plot[ domain=\xcut:\xmax, samples=120, smooth ] (\x,{\curve{\x}});
\end{scope}

\draw[->,thick] (0,0) -- (\xmax+0.35,0) node[right] {$n/s_n$};

\draw[->,thick] (0,0) -- (0,\ymax+0.35) node[above] {$K^d/s_n$};

\node[below=4pt] at (0.667,0) {$n/s_n = O(1)$};

\node[below=4pt] at (3.783,0) {$n/s_n \to \infty$};

\node[align=center] at (0.667,2.25)
{
TV: \ding{55}\\[-1mm] KL: \ding{55} };

\node[align=center] at (3.633,3.8)
{
TV: \ding{51}\\[-1mm] KL: \ding{55} };

\node[align=center] at (3.183,2.55)
{
$\displaystyle \frac{s_n}{n}\log\!\left(\frac{eK^d}{s_n}\right) \not\to 0$ };

\node[align=center] at (5.133,1.6)
{
TV: \ding{51}\\[-1mm] KL: \ding{51} };

\node[align=center] at (3.983,0.63)
{
$\displaystyle \frac{s_n}{n}\log\!\left(\frac{eK^d}{s_n}\right)\to 0$ };

\end{tikzpicture}%
}
\captionof{figure}{Asymptotic statistical regimes for estimating discrete distributions on $[K]^d$ with $n$ i.i.d.\ observations.}
\label{fig:asymptotic_regimes}
\end{minipage}
\par\medskip

\end{itemize}

\section{Background and Problem Setting}
\label{sec:problem-setting}

\paragraph{Notations.}
We use plain uppercase letters such as $P$ and $Q$ for probability distributions and bold uppercase letters such as $\mP$ and $\mQ$ for transition and generator matrices.
For a positive integer $K$, write $[K]:=\{1,\ldots,K\}$, and let $\Delta_{K^d}$ denote the probability simplex on $[K]^d$.
For a probability distribution $P$ on $[K]^d$, let $\supp(P):=\{\vx:P(\vx)>0\}$ denote its support, and write $\E_P$ for expectation under $P$.
For probability distributions $P$ and $Q$ on $[K]^d$, the Total Variation (TV) distance and Kullback-Leibler divergence are defined as, 
\(\TV(P,Q):=\frac12\sum_{\vx \in [K]^d}|P(\vx)-Q(\vx)|, \,
  \KL(P\Vert Q):=\sum_{\vx \in [K]^d} P(\vx)\log\left(P(\vx)/Q(\vx)\right),\)
where, we use the conventions $0\log(0/q):=0$ and $p\log(p/0):=\infty$ for $p>0$.
For nonnegative quantities $u_n$ and $v_n$, which are functions of $n$, we write,  $u\lesssim v$ if $u_n\le Cv_n $ for a universal constant $C>0$, independent of $n$. We write $u_n\asymp v_n$ if both $u_n \lesssim v_n$ and $v_n \lesssim u_n$.
For $\vx=(x^1,\ldots,x^d),\vy=(y^1,\ldots,y^d)\in[K]^d$,  the Hamming distance between $\vx$ and $\vy$ is defined as,  $\dHam(\vx,\vy):=\sum_{i=1}^d \1\{x^i\neq y^i\}$, where $\1(\cdot)$ denotes the indicator function.
For $u,v>0$, define the Bregman divergence generated by $f(u):=u\log u$ as $\DBr(u,v):=f(u)-f(v)-f'(v)(u-v)=u\log(u/v)-u+v$.
For $a\in[K]$, let $\ve_a\in\{0,1\}^K$ be the $a$-th standard basis vector, and define the coordinatewise one-hot encoding $\ve(\vx):=(\ve_{x^1}^{\top},\ldots,\ve_{x^d}^{\top})^{\top}\in\{0,1\}^{dK}$.
$\delta_{\vx}$ denotes the Dirac distribution at $\vx$.
For $\ell\le r$, define $\operatorname{clip}_{[\ell,r]}(u):=\min\{\max\{u,\ell\},r\}$.
All logarithms are with respect to (w.r.t.) the natural base.

\subsection{Statistical Setup and Effective Support Size}
\label{subsec:statistical-setup}

We consider distribution estimation on the product space $[K]^d$ for $K,d \ge 2$, where $K$ can be interpreted as the vocabulary size and $d$ is the sequence length. Let $P_0\in\Delta_{K^d}$ be an unknown data distribution, and suppose that we observe training $n$ i.i.d. samples $X_0^{(1)},\ldots,X_0^{(n)}\stackrel{\mathrm{i.i.d.}}{\sim}P_0$ for $n \ge 2$.
A distribution estimator $\widetilde P \in \Delta_{K^d}$ is any (possibly randomized) data-dependent estimator of $P_0$.
We study its expected total variation and Kullback--Leibler risks,
\(\E_{P_0}\TV(P_0,\widetilde P) \; \text{and} \;
  \E_{P_0}\KL(P_0\Vert\widetilde P).
\)

\paragraph{Effective support size.} As a measure of the complexity of $P_0$ relative to a sample of size $n$, we use its \emph{effective support size} \citep{mourtada_estimation_2026,falahatgar_power_2017},
\[
\mathfrak{s}_n(P_0):=\sum_{\vx\in[K]^d}\min\{nP_0(\vx),1\} = \left| \left\{\vx \in [K]^d: P_0(\vx) \ge 1/n\right\}\right| + n \sum_{\vx \, :\, P_0(\vx) < 1/n } P_0(\vx),
\]
where states above the sampling resolution $1/n$ each contribute a unit, while states below it contribute only their rescaled aggregate mass. Thus $\mathfrak{s}_n(P_0)$ is nondecreasing in $n$, satisfies $\mathfrak{s}_n(P_0) \le \min\{n, |\operatorname{supp}(P_0)|\}$, and recovers the exact support size as $n \to \infty$, i.e., $\lim_{n \to \infty}\mathfrak{s}_n(P_0)  = \mathfrak{s}_\infty(P_0) := \left| \left\{\vx \in [K]^d: P_0(\vx) > 0\right\}\right|$. This refines the crude sparsity measure $|\operatorname{supp}(P_0)|$ into one calibrated to the sample size, and is sensitive not just to $|\operatorname{supp}(P_0)|$ but to the decay of the point masses. Writing $p_{(1)}\ge\cdots\ge p_{(|\supp(P_0)|)}>0$ for the sorted nonzero masses of $P_0$, Lemma~\ref{lem:s_n_behavior} gives, for $n \ge 2$,
\begin{equation}
\label{eq:s_n_behavior}
\mathfrak{s}_n(P_0)\asymp
\begin{cases}
  \min\{|\supp(P_0)|,n\},
  & \text{ if }p_{(j)}\asymp|\supp(P_0)|^{-1},\\[2pt]
  \min\{|\supp(P_0)|,n^{1/\alpha}\},
  & \text{ if } p_{(j)}\asymp j^{-\alpha},\\[2pt]
  \min\{|\supp(P_0)|,\log n\},
  & \text{ if } p_{(j)}\asymp e^{-cj},
\end{cases}
\end{equation}
uniformly over $1 \le j \le |\supp(P_0)|$, with $\alpha>1$, $c>0$ fixed and implicit constants independent of $n$ and $|\supp(P_0)|$. These regimes span the full range of behavior: linear growth under near-uniform mass, sublinear polynomial growth $n^{1/\alpha}$ under Zipfian decay, and merely logarithmic growth under geometric decay. It is this decay-dependent behavior, rather than the ambient size $K^d$ or the raw support size $|\supp(P_0)|$, that governs the finite-sample rates established below.

For $1\le s_n\le \min\{n,K^d\}$, we define the effective support class
\(\mathcal P_{s_n}=\mathcal P_{s_n}(K^d,n):=\{P_0\in\Delta_{K^d}:\mathfrak{s}_n(P_0)\le s_n\}.\)
The corresponding minimax risks in the TV and KL are
\[
\inf_{\widetilde P}\sup_{P_0\in\mathcal P_{s_n}}\E_{P_0}\TV(P_0,\widetilde P)
  \qquad\text{and}\qquad \inf_{\widetilde P}\sup_{P_0\in\mathcal P_{s_n}}
  \E_{P_0}\KL(P_0\Vert\widetilde P).
\]
The goal of this paper is to study the behavior of the above minimax risks and to understand under what model and hyperparameter choices discrete diffusion models can achieve this optimal rate.
\subsection{Uniform Discrete Diffusion}
\label{subsec:uniform-discrete-diffusion}
The uniform discrete diffusion consists of a forward noising process, its score-based time reversal denoising, a fixed-grid score estimator, and a reverse sampler.

\paragraph{Forward process.}
The forward process is a continuous-time Markov chain (CTMC) on $[K]^d$ with independently evolving coordinates.
The time-homogeneous generator of each coordinate is $\mQ^{\mathrm{coord}}(a,b):=1/K-\1\{a=b\}$ for every $a,b\in[K]$, which means that each coordinate jumps to another state with rate $1/K$.
Following \citet{campbell_continuous_2022}, the induced generator $\mQ$ on $[K]^d$ is
\[ \mQ(\vx,\vy)=   K^{-1} \1\left\{ \dHam(\vx,\vy)=1\right\} -d K^{-1}(K-1) \1 \left\{\vx=\vy\right\}.\] 
The one-coordinate transition kernel is given by  \(\mP_t^{\mathrm{coord}}(a,b) =e^{-t}\1\{a=b\}+(1-e^{-t})/K,\) and coordinatewise independence gives \(\mP_t(\vx,\vy)
=\prod_{i=1}^d \mP_t^{\mathrm{coord}}(x^i,y^i)\) and \(\mP_t=e^{t\mQ}.\)
Writing $P_t:=P_0\mP_t$, symmetry of $\mQ$ implies that the uniform distribution $\pi(\vx):=K^{-d}$ is stationary.
Since the chain is irreducible on the finite state space $[K]^d$, $\pi$ is the unique stationary distribution and $P_t\to\pi$ as $t\to\infty$ \citep[Theorem~3.6.2]{norris_markov_1997}.

\paragraph{Reverse process and discrete score.}
Fix a terminal time $T>0$, and let $u\in[0,T]$ denote reverse time.
Initialized from $P_T$, the time reversal of the forward process is a time-inhomogeneous CTMC whose off-diagonal rates, for $u\in[0,T)$, are
\(\overleftarrow{\mQ}_u^P(\vx,\vy) = \mQ(\vy,\vx)\frac{P_{T-u}(\vy)}{P_{T-u}(\vx)}, \; \vx\neq\vy. \)
The diagonal entries are chosen so that each row sums to zero.
With the forward generator fixed, the probability ratio $P_{T-u}(\vy)/P_{T-u}(\vx)$ determines the reverse generator.
This motivates defining the true discrete score, for $t>0$, as
\(\sigma_t^\star(\vy,\vx):=\frac{P_t(\vy)}{P_t(\vx)}.
\)
This ratio plays a role analogous to the continuous score $\nabla\log p_t$, which governs the reverse drift in continuous diffusion models.
For our coordinatewise uniform forward process, the off-diagonal reverse rates become
\[\overleftarrow{\mQ}_u^P(\vx,\vy) =
\begin{cases}
  K^{-1}\sigma_{T-u}^\star(\vy,\vx),
  & \dHam(\vx,\vy)=1,\\
  0, & \dHam(\vx,\vy)\ge2,
\end{cases}
\qquad \vx\neq\vy.
\]
For sufficiently large $T$, generation starts from the tractable stationary distribution $\pi$ in place of $P_T$ and simulates the reverse process using a learned approximation of $\sigma_t^\star$.

\paragraph{Fixed-grid score estimation.}
Although the reverse process evolves continuously in time, a practical sampler evaluates the score at only finitely many grid-points. In this paper, we choose a deterministic forward-time grid $\mathcal T:=\{\delta=t_0<t_1<\cdots<t_N=T\}$,
where $\delta\ge0$ is the early-stopping time and scores are estimated only at $t_1,\ldots,t_N$.
For a positive candidate score $\sigma_t$, the score-entropy loss at time $t>0$ is given by, 
\[ \mathcal L_t^{\mathrm{SE}}(\sigma;P_0):=\E_{X_t\sim P_t}\left[ \sum_{\vy:\mQ(\vy,X_t)>0}\mQ(\vy,X_t) \DBr\left(\sigma_t^\star(\vy,X_t),\sigma_t(\vy,X_t)\right) \right].\]
This loss is nonnegative and minimized by the true score $\sigma_t^\star$ on all pairs $(\vy,\vx)$ satisfying $\dHam(\vy,\vx)=1$ \citep{lou_discrete_2024}.
We aggregate the score error over the grid using the interval lengths as weights:
\(\mathcal L_{\mathcal T}^{\mathrm{SE}}(\sigma;P_0)
:=\sum_{k=1}^N(t_k-t_{k-1})
\mathcal L_{t_k}^{\mathrm{SE}}(\sigma;P_0).\)
Since $\mathcal L_t^{\mathrm{SE}}$ depends on the intractable marginal $P_t$, we instead use its denoising form. Denoting the true conditional score by $\sigma_{t\mid0}^\star(\vy,\vx\mid\vx_0):=\mP_t(\vx_0,\vy)/\mP_t(\vx_0,\vx)$, the denoising score-entropy loss is constructed as: 
\begin{align*}
    & \mathcal L_t^{\mathrm{DSE}}(\sigma;P_0) := &\E\left[
\sum_{\vy:\mQ(\vy,X_t)>0}\mQ(\vy,X_t)
\left(\sigma_t(\vy,X_t)
-\sigma_{t\mid0}^\star(\vy,X_t\mid X_0)\log\sigma_t(\vy,X_t)\right)
\right],
\end{align*}
where in the above expectation, $X_0\sim P_0, \text{ and } X_t|X_0\sim\mP_t(X_0,\cdot)$.
As shown in Lemma~\ref{lem:se_dse_equivalent}, the score-entropy and denoising score-entropy losses differ only by a term independent of $\sigma$ and therefore have the same minimizer.
Accordingly, we define the denoising score-matching loss as
\(\mathcal L_{\mathcal T}^{\mathrm{DSE}}(\sigma;P_0)
:=\sum_{k=1}^N(t_k-t_{k-1})
\mathcal L_{t_k}^{\mathrm{DSE}}(\sigma;P_0).\)
During training, since $P_0$ is unknown, it is substituted by the empirical distribution $\widehat P_0:=n^{-1}\sum_{i=1}^n\delta_{X_0^{(i)}}$, yielding the empirical training objective 
\begin{equation}
    \widehat{\mathcal L}_{\mathcal T}^{\mathrm{DSE}}(\sigma)
:=\mathcal L_{\mathcal T}^{\mathrm{DSE}}(\sigma;\widehat P_0). \label{trobj}
\end{equation}
\paragraph{Neural-network score class.}
For positive integers $L,W,S$ and $B>0$, let $\mathcal G(L,W,S,B)$ denote the class of scalar-output ReLU networks on $\mathbb R^{1+2dK}$ with depth at most $L$, width at most $W$, at most $S$ nonzero parameters, and all weights and biases bounded in absolute value by $B$.
For $G\in\mathcal G(L,W,S,B)$ and $M\ge1$, define
\[
\sigma_{t_k}^G(\vy,\vx)
:=\operatorname{clip}_{[M^{-1},M]}
\left(G\left(k/N,\ve(\vy),\ve(\vx)\right)\right),
\qquad k\in[N],\quad \vx,\vy\in[K]^d.
\]
Here, $k/N$ encodes only the grid index and does not represent the physical diffusion time $t_k$.
Although $\sigma_{t_k}^G(\vy,\vx)$ is defined for all $\vx,\vy\in[K]^d$, only values with $\dHam(\vx,\vy)=1$ enter the score-estimation objective and reverse dynamics, since $\mQ(\vy,\vx)=0$ otherwise.
Thus, the network needs only represent the score at the prescribed grid times and on pairs of states at Hamming distance one.
Let $\mathcal F(L,W,S,B;M):=\{\sigma^G:G\in\mathcal G(L,W,S,B)\}$ denote the resulting clipped score class.
We assume that the learned score function $\hat{\sigma}$ is an $\varepsilon_{\mathrm{opt}}$-approximate  minimizer of the empirical objective \eqref{trobj}, i.e., 
\begin{equation}
\label{eq:score_estimate}
\widehat{\mathcal L}_{\mathcal T}^{\mathrm{DSE}}(\widehat\sigma)
\le
\inf_{\sigma\in\mathcal F(L,W,S,B;M)}
\widehat{\mathcal L}_{\mathcal T}^{\mathrm{DSE}}(\sigma)
+\varepsilon_{\mathrm{opt}}, 
\end{equation}
where $\varepsilon_{\mathrm{opt}}\ge0$ is the optimization error.

\paragraph{Reverse sampler.}
At forward time $t$, the learned score $\widehat\sigma$ provides the estimated reverse rates $\mQ(\vy,\vx)\widehat\sigma_t(\vy,\vx)$ for $\vx\neq\vy$.
Starting from the stationary distribution $\pi$ at time $T$, a reverse sampler uses these rate estimates to approximate the reverse-time evolution down to the early-stopping time $\delta$.
Examples of fixed-grid samplers include $\tau$-leaping \citep{gillespie_approximate_2001,campbell_continuous_2022}, as well as the Euler method and the Tweedie $\tau$-leaping sampler \citep{lou_discrete_2024}.
Non-fixed-grid methods include Gillespie-based samplers \citep{gillespie_general_1976,liu_think_2025} and uniformization \citep{jensen_markoff_1953,chen_uniformization_2024}, with score evaluations at random or adaptively chosen times.
We focus on fixed-grid samplers, whose score evaluations are restricted to a prescribed deterministic time grid $\mathcal T$.

\begin{definition}[Fixed-grid score-based sampler]
A fixed-grid score-based sampler $\mathcal A$ takes as input an initial draw $X_T\sim\pi$, a grid $\mathcal T$, and the grid score functions $\{\sigma_{t_k}\}_{k=1}^N$, and returns a random state $X_\delta^{\mathcal A,\mathcal T,\sigma}\in[K]^d$.
We denote its output distribution by $Q_\delta^{\mathcal A,\mathcal T,\sigma}$.
\end{definition}

For the fixed grid $\mathcal T$ and the learned score $\widehat\sigma$ in \eqref{eq:score_estimate}, we abbreviate the output state and distribution as $X_\delta^{\mathcal A}$ and $Q_\delta^{\mathcal A}$, respectively.
For $M\ge1$, define the class of $M$-bounded grid scores by
\(\Sigma_M(\mathcal T) := \left\{ \sigma: M^{-1}\le \sigma_{t_k}(\vy,\vx)\le M \text{ for every } k\in[N],\ \vx,\vy\in[K]^d \right\}.
\)
We formulate our sampler-agnostic analysis through the following assumption, which separates the score-estimation error from an approximation remainder accounting for initialization and time discretization.

\begin{assumption}[Score-based sampler guarantee]
\label{ass:score_based_sampler}
For any $P_0\in\Delta_{K^d}$, grid $\mathcal T$, $M\ge1$, and score $\sigma\in\Sigma_M(\mathcal T)$,
\(\KL\left(
P_\delta\Vert Q_\delta^{\mathcal A,\mathcal T,\sigma}
\right)
\le
C_{\mathcal A}
\left\{
\mathcal L_{\mathcal T}^{\mathrm{SE}}(\sigma;P_0)
+
\mathcal R_{\mathcal A}(\mathcal T;M)
\right\},\)
where $C_{\mathcal A}>0$ depends only on the sampler.
The remainder $\mathcal R_{\mathcal A}(\mathcal T;M)\ge0$ is deterministic and independent of $P_0$ and of the particular score values beyond the bound $M$.
\end{assumption}

Guarantees based on interval-weighted grid score-entropy loss are established for fixed-grid samplers under uniform discrete diffusion in \citet{liang_discrete_2025,dmitriev_efficient_2026}.
In particular, Theorem~1 of \citet{dmitriev_efficient_2026}, restated in our notation as Theorem~\ref{thm:tau_leaping_sampler}, provides a guarantee for $\tau$-leaping under its stated grid conditions, with remainder $\mathcal R_\tau(\mathcal T;M) = e^{-T}d\log K + \Delta d\log(K/\Delta)$, where $\Delta$ denotes the largest grid interval.

\section{Statistical Guarantees}
\subsection{Guarantee in Total Variation}

We begin our study of distribution estimation under the TV loss, with proofs appearing in Appendix~\ref{sec:tv_bounds}.
For any fixed-grid sampler satisfying Assumption~\ref{ass:score_based_sampler}, Theorem~\ref{thm:tv} shows that uniform discrete diffusion attains the rate $\mathcal{O}(\sqrt{\mathfrak{s}_n(P_0)/n})$ through score estimation and reverse-time sampling.
We complement this upper bound with a matching minimax lower bound over $\mathcal P_{s_n}$ in Theorem~\ref{thm:tv_lower}.
Let $D_n:=|\{X_0^{(1)},\ldots,X_0^{(n)}\}|$ denote the number of distinct observations in the training sample.
\begin{theorem}[TV upper bound for uniform discrete diffusion]
\label{thm:tv}
Define the natural clipping bound
\(M:=\frac{1+(K-1)e^{-t_1}}{1-e^{-t_1}},\)
which uniformly bounds the true score between states of Hamming distance one for all $t\ge t_1$.
Suppose that the fixed-grid sampler $\mathcal A$ satisfies Assumption~\ref{ass:score_based_sampler} and that
\(\mathcal R_{\mathcal A}(\mathcal T;M)
  \lesssim\mathfrak{s}_n(P_0)/n,
  \; \varepsilon_{\mathrm{opt}}
  \lesssim \mathfrak{s}_n(P_0)/n, \;  d\delta
  \lesssim\sqrt{\mathfrak{s}_n(P_0)/n}, \text{ and } T\ge1.\)
Then there exists a clipped ReLU score class $\mathcal F(L,W,S,B;M)$ with
\(L\lesssim d^2 \log^2\left(\frac{K}{1-e^{-t_1}}\right)
  +\log^2\left(\frac{nT}{\mathfrak{s}_n(P_0)}\right),\)
\(W\lesssim dKN+dD_n +d^3\log^3\left(\frac{K}{1-e^{-t_1}}\right)
  +\log^3\left(\frac{nT}{\mathfrak{s}_n(P_0)}\right),\)
\(S\lesssim dK^2N +d^2D_n\log\left(\frac{K}{1-e^{-t_1}}\right)
  +dD_n\log\left(\frac{nT}{\mathfrak{s}_n(P_0)}\right)
  +d^4\log^4\left(\frac{K}{1-e^{-t_1}}\right)
  +\log^4\left(\frac{nT}{\mathfrak{s}_n(P_0)}\right),\)
\(B\lesssim N +\left(\frac{K}{1-e^{-t_1}}\right)^{2d}
  +\frac{dnMT}{\mathfrak{s}_n(P_0)},\)
such that every $\varepsilon_{\mathrm{opt}}$-optimizer $\widehat{\sigma}$ of $\widehat{\mathcal L}_{\mathcal T}^{\mathrm{DSE}}$ over this class satisfies 
\[
\E_{P_0}\TV\left(P_0,Q_\delta^{\mathcal A}\right)
  \lesssim \sqrt{\frac{\mathfrak{s}_n(P_0)}{n}}.
\]
\end{theorem}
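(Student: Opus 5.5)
We split the TV error at the early-stopping time with the triangle inequality,
\[
\TV(P_0,Q_\delta^{\mathcal A})\le \TV(P_0,P_\delta)+\TV(P_\delta,Q_\delta^{\mathcal A}),
\]
and treat the two terms separately.

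\textbf{Early-stopping term.} Under $\mP_\delta$, each coordinate is resampled by time $\delta$ with probability at most $1-e^{-\delta}\le\delta$. Coupling $X_0$ with $X_\delta$ therefore gives $\TV(P_0,P_\delta)\le \Prob(X_0\ne X_\delta)\le d\delta\lesssim\sqrt{\mathfrak{s}_n(P_0)/n}$.

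\textbf{Sampling term.} By Pinsker's inequality and Jensen's inequality,
\[
\E\,\TV(P_\delta,Q^{\mathcal A}_\delta)\le \sqrt{\tfrac12\E\,\KL(P_\delta\Vert Q_\delta^{\mathcal A})}.
\]
Since $\widehat\sigma\in\Sigma_M(\mathcal T)$ by clipping, Assumption~\ref{ass:score_based_sampler} bounds this KL by $C_{\mathcal A}\{\mathcal L^{\mathrm{SE}}_{\mathcal T}(\widehat\sigma;P_0)+\mathcal R_{\mathcal A}\}$. The remainder is $\lesssim \mathfrak{s}_n(P_0)/n$ by hypothesis. It therefore suffices to show $\E\,\mathcal L^{\mathrm{SE}}_{\mathcal T}(\widehat\sigma;P_0)\lesssim \mathfrak{s}_n(P_0)/n$.

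\textbf{Population score entropy.} We do not bound this by a generic uniform-deviation argument over the network class. That route would produce complexity terms scaling with $S\log B$, which is too lossy. Instead we pass to the empirical score. By Lemma~\ref{lem:se_dse_equivalent}, applied both to $P_0$ and to $\widehat P_0$, the DSE losses equal the SE losses up to $\sigma$-free constants. Hence $\widehat\sigma$ is an $\varepsilon_{\mathrm{opt}}$-approximate minimizer of $\mathcal L^{\mathrm{SE}}_{\mathcal T}(\cdot;\widehat P_0)$ over $\mathcal F$.

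The key construction is a network $G^\dagger\in\mathcal G(L,W,S,B)$ that approximates the \emph{empirical} true score $\widehat\sigma^\star_{t_k}(\vy,\vx)=\widehat P_{t_k}(\vy)/\widehat P_{t_k}(\vx)$ on Hamming-one pairs, to accuracy $\mathfrak{s}_n/(nT)$ in score entropy. Here $\widehat P_t(\vx)=n^{-1}\sum_i\prod_j \mP^{\mathrm{coord}}_t(X^{(i),j}_0,x^j)$ is a mixture over the $D_n$ distinct atoms, each a product of $d$ factors taking one of two values per coordinate.

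\begin{itemize}[leftmargin=10pt]
\item One-hot inner products with the stored atoms compute the Hamming agreement counts. This requires $dD_n$ width.
\item A lookup on the grid index $k/N$ gives $e^{-t_k}$. This accounts for the $dKN$ and $dK^2N$ terms.
\item Exponentiation and products are implemented through approximate $\log$/$\exp$ and multiplication subnetworks. Their depth is polylogarithmic in the dynamic range $(K/(1-e^{-t_1}))^{d}$, which gives the $d^2\log^2$ depth terms.
\item A final approximate division followed by clipping at $M$ completes the network. The bound $M$ is valid because ratios of $\mP_t$ entries at Hamming distance one lie in $[M^{-1},M]$, and this passes to mixtures.
\end{itemize}

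This yields
\[
\mathcal L^{\mathrm{SE}}_{\mathcal T}(\widehat\sigma;\widehat P_0)\lesssim \mathfrak{s}_n/n+\varepsilon_{\mathrm{opt}}\lesssim\mathfrak{s}_n/n.
\]

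\textbf{Empirical to population.} It remains to compare $\mathcal L^{\mathrm{SE}}_{\mathcal T}(\widehat\sigma;P_0)$ with its empirical counterpart. The quantity to control is essentially the score entropy of using the empirical-mixture score under the true $P_t$. One would bound this by the path KL between the reverse processes, which is at most $\KL$ between the initial marginals $P_{t_1}$ and $\widehat P_{t_1}$, and in expectation this is controlled by missing mass. This is the step I expect to be the main obstacle. An arbitrary clipped $\widehat\sigma$ that fits $\widehat P_0$ well need not fit $P_0$. The cleanest fix is to show that near-optimality on $\widehat P_0$ forces $\widehat\sigma\approx\widehat\sigma^\star$, using strong convexity of $\DBr$ on $[M^{-1},M]$. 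Then $\mathcal L^{\mathrm{SE}}(\widehat\sigma^\star;P_0)$ is bounded through $\E$ of a chi-square or KL-type discrepancy between $P_{t}$ and $\widehat P_{t}$.

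That discrepancy is $\lesssim \E D_n/n\asymp\mathfrak{s}_n/n$ (Lemma~\ref{lem:D_n_s_n}). Two ingredients enter here: unseen mass contributes at most $\E[\text{missing mass}]\le \mathfrak{s}_n/n$, and observed atoms contribute variance of order $\sum_x\min\{P_0(x),1/n\}$. The change of measure between $\widehat P_t$ and $P_t$ would be handled with the bounded ratio $M$ and a truncation on low-count atoms. Combining the three terms gives $\E\,\TV\lesssim\sqrt{\mathfrak{s}_n(P_0)/n}$.
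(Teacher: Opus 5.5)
\textbf{There is a genuine gap, at exactly the step you flagged.} Because you triangulate through the population marginal $P_\delta$, you must apply Assumption~\ref{ass:score_based_sampler} with $P_0$ itself. That requires $\E\,\mathcal L^{\mathrm{SE}}_{\mathcal T}(\widehat\sigma;P_0)\lesssim\mathfrak s_n(P_0)/n$, and your route cannot deliver this bound. Even the empirical score $\widehat\sigma^\star$ generally fails it.

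Your own path-KL heuristic shows why. In continuous time, the population score entropy of $\widehat\sigma^\star$ integrated over $[t_1,T]$ equals $\KL(P_{t_1}\Vert\widehat P_{t_1})-\KL(P_T\Vert\widehat P_T)$. This is not of order the missing mass. An unseen state at Hamming distance $R$ from the sample receives only mass of order $((1-e^{-t_1})/K)^{R}$ under $\widehat P_{t_1}$. Its contribution is therefore its mass times a log-ratio of order $R\log(K/t_1)$, which can reach $d\log(K/t_1)$ for well-separated rare states. The unseen mass thus enters multiplied by this log-ratio, not merely as $\E[\text{missing mass}]$; compare Lemma~\ref{lem:kl_empirical_1/n}, where such logarithmic factors appear explicitly. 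A $\chi^2$-type discrepancy is even worse and can be exponentially large in $d$.

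Your change-of-measure step also fails. Strong convexity of $\DBr$ on $[M^{-1},M]$ gives closeness of $\widehat\sigma$ to $\widehat\sigma^\star$ only in $L^2(\widehat P_{t_k})$. Transferring this to $P_{t_k}$ needs a bound on $P_{t_k}/\widehat P_{t_k}$, which can be as large as $(K/(1-e^{-t_1}))^d$. The level $M$ controls only ratios of a single marginal at Hamming neighbours, not ratios between two different marginals. Even if completed, this route would lose factors like $\sqrt{d\log(K/t_1)}$ relative to the claimed rate.

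\textbf{The missing idea} is that Assumption~\ref{ass:score_based_sampler} holds for \emph{every} distribution in $\Delta_{K^d}$, and the sampler output depends on the data only through the score. You may therefore apply it conditionally on the sample with $\widehat P_0$ in place of $P_0$. This gives, for every realization,
\[
\KL(\widehat P_\delta\Vert Q_\delta^{\mathcal A})\le C_{\mathcal A}\bigl\{\mathcal L^{\mathrm{SE}}_{\mathcal T}(\widehat\sigma;\widehat P_0)+\mathcal R_{\mathcal A}(\mathcal T;M)\bigr\}\lesssim\mathfrak s_n(P_0)/n.
\]
The right-hand side is exactly the empirical bound you already obtained from the network approximation of $\widehat\sigma^\star$ and Lemma~\ref{lem:se_dse_equivalent}.

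Then triangulate through the empirical distribution instead of $P_\delta$:
\[
\TV(P_0,Q_\delta^{\mathcal A})\le\TV(P_0,\widehat P_0)+\TV(\widehat P_0,\widehat P_\delta)+\TV(\widehat P_\delta,Q_\delta^{\mathcal A}).
\]
The first term is bounded in expectation by Lemma~\ref{lem:tv_emp}. The second is at most $d\delta$ by your coupling, applied to $\widehat P_0$. The third follows from Pinsker's inequality.

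This is the paper's proof. The statistical error enters only through $\TV(P_0,\widehat P_0)$, where no logarithmic factors arise. No comparison between empirical and population score entropies is ever needed.
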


Theorem~\ref{thm:tv} is stated for a generic fixed-grid sampler and applies once its remainder term $\mathcal R_{\mathcal A}$ is controlled at the stated rate. For the $\tau$-leaping sampler, instantiating Theorem~\ref{thm:tv} with this remainder yields an explicit sampling grid and matching architecture bounds as shown in Corollary~\ref{cor:tv}.
\begin{corollary}[TV guarantee for $\tau$-leaping sampler]
\label{cor:tv}
For the $\tau$-leaping sampler, the grid choice
\(\mathcal T=\left\{\delta=0,\Delta,\ldots,(N-1)\Delta,T=\log\left(
  \frac{edn\log K}{\mathfrak{s}_n(P_0)}\right)\right\},\)
where
$\Delta=\frac{\mathfrak{s}_n(P_0)}{dn\log(Kdn/\mathfrak{s}_n(P_0))},
  \text{ and } N=\left\lceil
  \frac{T}{\Delta}\right\rceil$
satisfies the sampler conditions of Theorem~\ref{thm:tv}.
The clipped ReLU score class can be chosen with
\(M=\frac{1+(K-1)e^{-\Delta}}{1-e^{-\Delta}}, \; L\lesssim d^2\log^2\left(
  \frac{Kdn}{\mathfrak{s}_n(P_0)}\right),\) \(W\lesssim \frac{d^2Kn}{\mathfrak{s}_n(P_0)}
  \log^2\left(\frac{Kdn}{\mathfrak{s}_n(P_0)}\right) + dD_n
  +d^3\log^3\left(\frac{Kdn}{\mathfrak{s}_n(P_0)}\right),\)
\(S\lesssim \frac{d^2K^2n}{\mathfrak{s}_n(P_0)}
  \log^2\left(\frac{Kdn}{\mathfrak{s}_n(P_0)}\right)
  +d^2D_n\log\left(\frac{Kdn}{\mathfrak{s}_n(P_0)}\right)
  +d^4\log^4\left(\frac{Kdn}{\mathfrak{s}_n(P_0)}\right),
\) and
\(B\lesssim \left[ \frac{Kdn}{\mathfrak{s}_n(P_0)} \log\left(
  \frac{Kdn}{\mathfrak{s}_n(P_0)}\right) \right]^{2d}.
\)
\end{corollary}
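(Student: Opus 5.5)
The corollary is a direct instantiation of Theorem~\ref{thm:tv} with the $\tau$-leaping remainder of Theorem~\ref{thm:tau_leaping_sampler}. I would proceed in three steps.

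\textbf{Step 1: sampler conditions.} Write $s=\mathfrak{s}_n(P_0)$. First I would check that the uniform grid satisfies the grid conditions of \citet{dmitriev_efficient_2026}, so that Assumption~\ref{ass:score_based_sampler} holds with
\[
\mathcal R_\tau(\mathcal T;M)=e^{-T}d\log K+\Delta d\log(K/\Delta).
\]
The last grid interval may be shorter than $\Delta$ because of the ceiling in $N$; I would either absorb this or note that only the maximal interval enters. With $T=\log(edn\log K/s)$ we get
\[
e^{-T}d\log K=\frac{s}{en}.
\]
For the second term, $\Delta d = s/(n\log(Kdn/s))$ and
\[
\log(K/\Delta)=\log K+\log\frac{dn\log(Kdn/s)}{s}\lesssim \log\frac{Kdn}{s}.
\]
This uses $\log\log(Kdn/s)\le\log(Kdn/s)$. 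Hence $\Delta d\log(K/\Delta)\lesssim s/n$, and so $\mathcal R_\tau\lesssim s/n$.

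The remaining hypotheses are immediate. Since $\delta=0$, we have $d\delta=0$. We have $T\ge1$ because $dn\log K/s\ge 2\log 2>1/e\cdot e$; more simply $e\cdot dn\log K/s\ge e$, using $s\le n$, $d\ge2$ and $K\ge2$. The condition $\varepsilon_{\mathrm{opt}}\lesssim s/n$ is carried over as an assumption.

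\textbf{Step 2: translating the architecture bounds.} Here $t_1=\Delta$, so $M$ is as stated. Since $\Delta\le 1$, we have $1-e^{-\Delta}\ge \Delta/2$, which gives
\[
\frac{K}{1-e^{-\Delta}}\lesssim \frac{K}{\Delta}=\frac{Kdn\log(Kdn/s)}{s}
\quad\text{and}\quad
\log\frac{K}{1-e^{-\Delta}}\lesssim \log\frac{Kdn}{s}.
\]
Also $T\lesssim\log(Kdn/s)$, so $\log(nT/s)\lesssim\log(Kdn/s)$. The terms $\log^j(nT/s)$ are therefore dominated by $d^j\log^j(Kdn/s)$. This yields the stated bound on $L$, and the depth-type pieces of $W$ and $S$.

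For the $N$-terms, note
\[
N\lesssim \frac{T}{\Delta}\lesssim \frac{dn}{s}\log^2\frac{Kdn}{s},
\]
so $dKN$ and $dK^2N$ become the displayed leading terms. The $D_n$-terms are copied directly, with $\log(nT/s)$ again absorbed.

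\textbf{Step 3: the weight bound, the main point of care.} For $B$ there are three contributions:
\begin{itemize}
\item $N$, which is at most polynomial in the base quantity;
\item $(K/(1-e^{-\Delta}))^{2d}\lesssim \bigl[(Kdn/s)\log(Kdn/s)\bigr]^{2d}$, with a constant raised to the power $2d$; since $\lesssim$ hides a universal constant, I would instead use the explicit inequality $1-e^{-\Delta}\ge \Delta(1-\Delta/2)\ge\Delta/2$ and verify that the factor $2^{2d}$ can be absorbed by enlarging the base;
\item $dnMT/s$ with $M\lesssim K/\Delta$, which is again polynomial.
\end{itemize}
The last two are dominated by the $2d$-power term since $d\ge2$. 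Everything else is bookkeeping, and the conclusion of Theorem~\ref{thm:tv} then gives the corollary.
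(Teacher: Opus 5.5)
\textbf{Step 3 has a gap.} Steps 1 and 2 follow the paper's proof essentially line by line and are correct. Writing $\Lambda:=\log(Kdn/\mathfrak s_n(P_0))$, they give $e^{-T}d\log K=\mathfrak s_n(P_0)/(en)$, $\Delta d\log(K/\Delta)=\frac{\mathfrak s_n(P_0)}{n}(1+\log\Lambda/\Lambda)$, $\log(K/(1-e^{-\Delta}))\lesssim\Lambda$, $T\lesssim\Lambda$, and $N\lesssim (dn/\mathfrak s_n(P_0))\Lambda^2$.

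The problem is the one place you flagged yourself, and your proposed fix does not work. Using $1-e^{-\Delta}\ge\Delta/2$ gives
\[
\Bigl(\frac{K}{1-e^{-\Delta}}\Bigr)^{2d}\le 4^d\Bigl(\frac{K}{\Delta}\Bigr)^{2d}=4^d\Bigl[\frac{Kdn}{\mathfrak s_n(P_0)}\Lambda\Bigr]^{2d}.
\]
The factor $4^d$ cannot be ``absorbed by enlarging the base.'' The implicit constant in $\lesssim$ must be universal, so it cannot depend on $d$. Moreover, the base in the claimed bound is exactly $K/\Delta$, so there is no slack left to absorb anything. Replacing the base by $2K/\Delta$ proves a strictly weaker statement than the corollary.

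The missing observation is that the grid spacing is small on the scale $1/d$:
\[
d\Delta=\frac{\mathfrak s_n(P_0)}{n\Lambda}\le\frac{1}{\Lambda}<1.
\]
Combine this with the sharper inequality $1-e^{-\Delta}\ge\Delta e^{-\Delta}$. Equivalently, keep your own $1-e^{-\Delta}\ge\Delta(1-\Delta/2)$ and use $(1-\Delta/2)^{-1}\le e^{\Delta}$ for $\Delta\le 1$, instead of crudely bounding $1-\Delta/2\ge 1/2$. Either way you get
\[
\Bigl(\frac{K}{1-e^{-\Delta}}\Bigr)^{2d}\le e^{2d\Delta}\Bigl(\frac{K}{\Delta}\Bigr)^{2d}\le e^{2}\Bigl[\frac{Kdn}{\mathfrak s_n(P_0)}\Lambda\Bigr]^{2d},
\]
which is what the paper does. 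Because the factor is raised to the power $2d$, the per-coordinate loss must be $1+O(\Delta)$ rather than a fixed constant such as $2$.

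With this fix, your handling of the other two contributions to $B$ goes through. The terms $N$ and $dnMT/\mathfrak s_n(P_0)$ are of polynomial degree about two in the base, so they are dominated by the $2d$-power term for $d\ge 2$.
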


\paragraph{Interpretation of the sampling and network parameters.}
The choices of the terminal time \(T\) and grid spacing \(\Delta\) in Corollary~\ref{cor:tv} calibrate the initialization and discretization errors to the statistical scale \(\mathfrak s_n(P_0)/n\), while no early stopping is needed under TV loss, so we take \(\delta=0\).
The resulting number of score evaluations and clipping level satisfy
\(N=\widetilde{\mathcal O}(dn/\mathfrak s_n(P_0))\) and
\(M=\widetilde{\mathcal O}(Kdn/\mathfrak s_n(P_0))\), respectively.
Together with the displayed bounds on the network depth \(L\), width \(W\), and sparsity \(S\), these quantities depend polynomially on \(K\), \(d\), and \(n/\mathfrak s_n(P_0)\), rather than on the ambient cardinality \(K^d\).
Thus, the TV guarantee avoids polynomial dependence on the ambient state-space size in the number of score evaluations, the clipping level, and the network size.
The number of distinct states in the sample, $D_n$, can be upper bounded by $\mathfrak s_n(P_0)$ in expectation and by high probability by Lemma~\ref{lem:D_n_s_n}.

\paragraph{Minimax lower bound.}
As a classical benchmark, Lemma~\ref{lem:tv_emp} shows that the empirical distribution $\widehat P_0$ also achieves the rate $\sqrt{\mathfrak{s}_n(P_0)/n}$.
Theorem~\ref{thm:tv_lower} shows that this rate is minimax optimal over $\mathcal P_{s_n}$.

\begin{theorem}[Minimax rate under TV loss]
\label{thm:tv_lower}
For every $2\le s_n\le \min\{n,K^d\}$,
\[\inf_{\widetilde P}\sup_{P_0\in\mathcal P_{s_n}} \E_{P_0}\TV(P_0,\widetilde P)
  \asymp \sqrt{s_n/n}.
\]
\end{theorem}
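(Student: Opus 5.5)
The upper bound follows from the empirical distribution. Lemma~\ref{lem:tv_emp} gives $\E_{P_0}\TV(P_0,\widehat P_0)\lesssim\sqrt{\mathfrak{s}_n(P_0)/n}$, and $\mathfrak{s}_n(P_0)\le s_n$ on $\mathcal P_{s_n}$. If one wants the argument self-contained, Cauchy--Schwarz gives $\E|\widehat P_0(\vx)-P_0(\vx)|\le\min\{2P_0(\vx),\sqrt{P_0(\vx)/n}\}\le 2\min\{nP_0(\vx),1\}/\sqrt{n\max\{nP_0(\vx),1\}}$. Summing and splitting into states with $P_0(\vx)\ge1/n$ and states with $P_0(\vx)<1/n$, the first group contributes at most $\sqrt{m/n}$ with $m$ the number of heavy states (by Cauchy--Schwarz over them), and the light states contribute at most $2\sum P_0(\vx)\le 2\mathfrak s_n/n\le 2\sqrt{\mathfrak s_n/n}$. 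So the work lies in the lower bound.

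For the lower bound I will use Assouad's lemma (or Fano via Varshamov--Gilbert) on a perturbed near-uniform family supported on $m\asymp s_n$ points. Take $m:=2\lfloor s_n/4\rfloor\ge$ const (handle $s_n<8$ separately, where the target $\sqrt{s_n/n}\asymp n^{-1/2}$ follows from Le Cam's two-point bound with $P_\pm=(1/2\pm n^{-1/2}/4)\delta_{\vx_1}+(1/2\mp n^{-1/2}/4)\delta_{\vx_2}$). Pick $m$ distinct states in $[K]^d$, which is possible since $s_n\le K^d$, and group them into $m/2$ pairs. For $\omega\in\{-1,1\}^{m/2}$, let $P_\omega$ put mass $(1+\omega_j\epsilon)/m$ and $(1-\omega_j\epsilon)/m$ on the two states of pair $j$, with $\epsilon=c\sqrt{m/n}\le 1/2$, which is valid since $m\le s_n\le n$. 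Membership holds because each $P_\omega$ has support size $m$, so $\mathfrak{s}_n(P_\omega)\le m\le s_n$.

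Assouad then needs two ingredients. First, the separation: for $\omega,\omega'$ differing in coordinate $j$, the $\TV$ contribution from pair $j$ is $2\epsilon/m$, so $\TV(P_\omega,P_{\omega'})\ge(2\epsilon/m)\,\dHam(\omega,\omega')$ in the additive form Assouad requires. Second, the neighbor closeness: $\KL(P_\omega^{\otimes n}\Vert P_{\omega'}^{\otimes n})=n\KL(P_\omega\Vert P_{\omega'})\lesssim n\epsilon^2/m=c^2$, which is a small constant, so by Pinsker the neighboring product measures have TV bounded away from $1$. Assouad's lemma then yields a minimax risk of at least $(m/2)\cdot(\epsilon/m)\cdot(1-c')\gtrsim\epsilon\asymp\sqrt{s_n/n}$. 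Randomized estimators are covered because Assouad's bound applies to any measurable map of data and independent randomness.

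The main obstacle is merely bookkeeping: ensuring $\epsilon\le1/2$ (so masses stay positive) while $n\epsilon^2/m$ stays a constant, and treating small $s_n$. Both are resolved by the constant choice $c\le1/2$ together with the two-point case above.
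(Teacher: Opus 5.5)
Your proposal is correct and follows essentially the paper's proof: the upper bound is Lemma~\ref{lem:tv_emp}, and the lower bound uses the same paired $(1\pm\epsilon)$-perturbation family on $\asymp s_n$ states with an Assouad argument (the paper carries this out by hand as coordinatewise Le Cam plus averaging, and because it takes $\lfloor s_n/2\rfloor$ pairs it needs no separate small-$s_n$ case). One harmless slip in your optional self-contained aside: $\min\{2P_0(\vx),\sqrt{P_0(\vx)/n}\}$ is bounded by $2\min\{nP_0(\vx),\sqrt{nP_0(\vx)}\}/n$, not by the expression you wrote, which fails for heavy states; the heavy/light split you then carry out is nevertheless correct.
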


\paragraph{Comparison with existing TV bounds.}
Since the score-entropy loss is naturally aligned with KL divergence, relatively few works derive TV guarantees for uniform discrete diffusion.
To our knowledge, only \cite{zhang_generalization_2026} obtain a TV bound for uniform discrete diffusion, of order
\(\widetilde{\mathcal O}\!\left(\sqrt{dK^{d+1}/n}\right)\), which becomes vacuous when $n \ll K^d$.
In contrast, our rate depends on the effective support size \(\mathfrak s_n(P_0)\) and, by Theorem~\ref{thm:tv_lower}, attains the minimax-optimal rate over the corresponding effective support classes.
In the non-saturated regimes of \eqref{eq:s_n_behavior}, this yields rates
\(\mathcal O(n^{(\alpha^{-1}-1)/2})\) and
\(\mathcal O(\sqrt{\log n/n})\)
for Zipfian and geometric decay, respectively.

\subsection{Guarantee in KL Divergence}
We now turn to distribution estimation under KL loss, with proofs appearing in Appendix~\ref{sec:kl_bounds}.
Unlike in the TV setting, the empirical distribution $\widehat P_0$ generally has infinite expected KL risk.
Thus, obtaining a meaningful KL guarantee requires assigning positive probability beyond the observed training samples, making KL loss a stringent measure of distributional generalization.
For any fixed-grid sampler satisfying Assumption~\ref{ass:score_based_sampler}, Theorem~\ref{thm:kl} shows that uniform discrete diffusion with early-stopping time $\delta=1/n$ attains the rate $\mathcal{O}\!\left(\frac 1n \mathfrak{s}_n(P_0)\log(eK^d/\mathfrak{s}_n(P_0))\log n\right)$.
We complement this upper bound with a minimax lower bound over $\mathcal P_{s_n}$ in Theorem~\ref{thm:kl_lower}, showing that uniform discrete diffusion is minimax optimal up to a factor of $\log n$.
Together, these results show that uniform discrete diffusion can generalize well beyond the observed training samples rather than merely memorizing the empirical distribution.

\begin{theorem}[KL guarantee for uniform discrete diffusion]
\label{thm:kl}
Set early-stopping time $\delta:=1/n$.
Define the natural clipping bound
\(M:=\frac{1+(K-1)e^{-t_1}}{1-e^{-t_1}},\)
which uniformly bounds the true score between states of Hamming distance one for all $t\ge t_1$.
Suppose that the fixed-grid sampler $\mathcal A$ satisfies Assumption~\ref{ass:score_based_sampler} and that
\(\mathcal R_{\mathcal A}(\mathcal T;M) \lesssim \left(
  \frac{1-e^{-1/n}}{K}\right)^d, \; \varepsilon_{\mathrm{opt}} \lesssim
  \left(\frac{1-e^{-1/n}}{K}\right)^d, \text{ and } T\ge1.\)
Then there exists a clipped ReLU score class $\mathcal F(L,W,S,B;M)$ with
\(L\lesssim d^2\log^2(Kn)+\log^2 T,\) \(W\lesssim dKN+dD_n +d^3\log^3(Kn)+\log^3 T,\) 
\(S\lesssim dK^2N +d^2D_n\log(Kn) +dD_n\log T +d^4\log^4(Kn) +\log^4 T,\) and \(B\lesssim N +\left(\frac{K}{1-e^{-t_1}}\right)^{2d} +dMT\left(\frac{K}{1-e^{-1/n}}\right)^d, \)
such that every $\varepsilon_{\mathrm{opt}}$-optimizer $\widehat{\sigma}$ of $\widehat{\mathcal L}_{\mathcal T}^{\mathrm{DSE}}$ over this class satisfies 
\[
\E_{P_0}\KL\left(P_0\Vert Q_\delta^{\mathcal A}\right) \lesssim
  \frac{1}{n}\mathfrak{s}_n(P_0)
  \log\!\left(eK^d/\mathfrak{s}_n(P_0)\right)\log n. 
\]
\end{theorem}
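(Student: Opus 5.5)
We plan to decompose $\KL(P_0\Vert Q_\delta^{\mathcal A})$ into an early-stopping term and a sampler term. The chain rule for KL is not available directly, since we compare $P_0$ rather than $P_\delta$ with $Q_\delta^{\mathcal A}$. We would instead write
\[
\KL(P_0\Vert Q)=\KL(P_0\Vert P_\delta)+\sum_{\vx}P_0(\vx)\log\frac{P_\delta(\vx)}{Q(\vx)} .
\]
The second sum is controlled by combining $\KL(P_\delta\Vert Q)$ with the pointwise lower bound $P_\delta(\vx)\ge e^{-d\delta}P_0(\vx)$. This bound holds because $\mP_\delta(\vx,\vx)\ge e^{-d\delta}$, so $P_0\le e^{d\delta}P_\delta$. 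A cleaner route is to bound $\log(P_\delta/Q)$ on the set where $Q\ge P_\delta/2$, and handle the complement via $\TV$ and Pinsker. We also note that $Q$ inherits a floor from the sampler only through the KL bound, so we cannot rely on it directly.

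The key pointwise fact is that $P_\delta(\vx)\ge \big(\frac{1-e^{-1/n}}{K}\big)^d$ for every $\vx$, since $\mP_\delta(\vx_0,\vx)\ge((1-e^{-\delta})/K)^d$. Hence $P_\delta$ has full support with a floor $\eta:=((1-e^{-1/n})/K)^d$. This is why the assumptions require $\mathcal R_{\mathcal A}$ and $\varepsilon_{\mathrm{opt}}$ to be of order $\eta$.

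Step 1 bounds the early-stopping bias $\KL(P_0\Vert P_\delta)$. We write $P_\delta(\vx)\ge e^{-d/n}\cdots$. The bound must depend on $\mathfrak s_n$, however, so we would instead use $P_\delta(\vx)\ge (1-\epsilon)P_0(\vx)+\epsilon\,\nu(\vx)$ with $\epsilon\approx 1-e^{-d/n}\lesssim d/n$, where $\nu=\pi_{\text{flip}}$ is a noise mixture that dominates $\eta$. The resulting bound is
\[
\KL(P_0\Vert P_\delta)\lesssim \sum_{\vx}P_0(\vx)\log\frac{P_0(\vx)}{(1-\epsilon)P_0(\vx)+\epsilon\eta}.
\]
This sum is small, essentially $\epsilon$, except for states with $P_0(\vx)\gtrsim \epsilon\eta\cdot$large. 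In fact the ratio is always at most $1/(1-\epsilon)$, so this term is $O(d/n)$, which is dominated by the target rate since $\log(eK^d/\mathfrak s_n)\gtrsim d\cdot$const in the relevant regime. We will check that $d/n\lesssim \frac{\mathfrak s_n}{n}\log(eK^d/\mathfrak s_n)$. This holds because $\mathfrak s_n\ge 1$ and $\log(eK^d/\mathfrak s_n)\ge \log(eK^d/n)$. If $n> K^d$, we use $\mathfrak s_n\log(eK^d/\mathfrak s_n)\ge \log(e)$ times $\min$, and we will handle that case separately, possibly needing $s\log(eK^d/s)\ge d$ for $s\ge1$, which is true.

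Step 2 handles the sampler term. By Assumption~\ref{ass:score_based_sampler}, it suffices to bound $\E\,\mathcal L^{\mathrm{SE}}_{\mathcal T}(\widehat\sigma;P_0)$ by the target rate. We use Lemma~\ref{lem:se_dse_equivalent} to pass to DSE and then follow the standard oracle inequality: the excess risk is at most approximation error plus generalization error plus $\varepsilon_{\mathrm{opt}}$. For the approximation, we construct a network that memorizes the \emph{empirical} score $\widehat\sigma^\star_t(\vy,\vx)=\widehat P_t(\vy)/\widehat P_t(\vx)$ on the grid, with $\widehat P_t=\widehat P_0\mP_t$. This requires computing $\mP_t(\vx_0^{(i)},\vx)$ for the $D_n$ distinct samples through products of $d$ per-coordinate factors. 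That explains the $dD_n$ width and the $\log^2$ depth from the multiplication and division gadgets. The empirical DSE is then minimized by this network up to small error.

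The generalization step is the main obstacle. We must show that the population SE loss of the empirical-score estimator is $\lesssim\frac{\mathfrak s_n}{n}\log(eK^d/\mathfrak s_n)\log n$ rather than $K^d/n$. Our plan is to bound $\mathcal L^{\mathrm{SE}}$ via the data-processing/chain-rule identity: the time-integrated SE loss of the score $\widehat P_t$ equals a path KL, which is at most $\KL(P_0\Vert\widehat P_0^{\text{smoothed}})$-type quantities. Concretely, we would bound $\sum_k (t_k-t_{k-1})\mathcal L^{\mathrm{SE}}_{t_k}\lesssim\KL(P_\delta\Vert \widehat P_\delta)$ plus discretization terms, and then bound $\E\KL(P_\delta\Vert\widehat P_\delta)$. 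To do so, we split states into frequent ones ($P_0\ge 1/n$), which contribute $\mathfrak s_n/n$ via a $\chi^2$ bound, and unseen or rare ones. The latter are covered by the noise floor $e^{-d\delta}$ times $\eta$-mixing, which contributes missing mass $\approx\mathfrak s_n/n$ times $\log(1/\eta')$, where $\eta'$ is the effective floor of $\widehat P_\delta$ on the relevant neighborhood. Optimizing the floor over the Hamming spread, via $\log\binom{}{}$-type counting, yields the factor $\log(eK^d/\mathfrak s_n)\log n$. The clipping at $M$ ensures that the empirical process bounds are stable, and the network complexity enters only logarithmically through covering numbers.
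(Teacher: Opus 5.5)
Your Step 1 and the floor observation are fine. The gap is in Step 2. Because you invoke Assumption~\ref{ass:score_based_sampler} with the true $P_0$, you need $\E\,\mathcal L^{\mathrm{SE}}_{\mathcal T}(\widehat\sigma;P_0)$ at the target rate, yet $\widehat\sigma$ is controlled only through the \emph{empirical} objective. Your path-KL identity concerns the exact empirical score $\widehat P_t(\vy)/\widehat P_t(\vx)$, not the network output $\widehat\sigma$. The bridge must therefore come from your ``standard oracle inequality'', i.e.\ a uniform deviation bound over a class that is, by construction, rich enough to memorize the sample. No such bound is available at this rate. Log-covering numbers of sparse ReLU classes grow like $SL\log(\cdot)$, not logarithmically. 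Here $S\gtrsim dK^2N+d^2D_n\log(Kn)$, and $N$ is unconstrained by the hypotheses; for $\tau$-leaping, Corollary~\ref{cor:kl} forces $N\gtrsim (Kn)^d$. A direct comparison does not rescue this either. Re-weighting the small empirical loss from $\widehat P_{t_k}$ to $P_{t_k}$ costs a factor $\max_{\vx}P_{t_k}(\vx)/\widehat P_{t_k}(\vx)$, which in general is only bounded by $B_0(t_k)$. Against $\varepsilon_{\mathrm{opt}}\lesssim B_0(\delta)^{-1}$, that gives an $O(1)$ bound, not the rate.

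The missing idea is that Assumption~\ref{ass:score_based_sampler} holds for \emph{every} distribution. You may therefore apply it conditionally on the data with $\widehat P_0$ as the target, and then only the empirical loss matters. Take the memorizing network of Lemma~\ref{lem:grid-indexed-score-network} at accuracy $(dMTB_0(\delta))^{-1/2}$. Together with Lemma~\ref{lem:se_dse_equivalent} and $\varepsilon_{\mathrm{opt}}$-optimality, it gives $\mathcal L^{\mathrm{SE}}_{\mathcal T}(\widehat\sigma;\widehat P_0)\lesssim B_0(\delta)^{-1}$. Hence $\KL(\widehat P_\delta\Vert Q_\delta^{\mathcal A})\lesssim B_0(\delta)^{-1}$, with no generalization step at all. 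Since $\widehat P_\delta\ge B_0(\delta)^{-1}$ pointwise, this is exactly the regime of Lemma~\ref{lem:local-kl-transfer}, which yields $\KL(P_0\Vert Q_\delta^{\mathcal A})\le 2\KL(P_0\Vert\widehat P_\delta)+C\,B_0(\delta)^{-1}$. All statistical error then sits in $\E\KL(P_0\Vert\widehat P_{1/n})$. Lemma~\ref{lem:kl_empirical_1/n} bounds this term through nearest-neighbour Hamming distances and the weighted Caro--Wei bound, and that is where your ``Hamming spread'' heuristic belongs. This also explains the floor-level hypotheses $\mathcal R_{\mathcal A},\varepsilon_{\mathrm{opt}}\lesssim((1-e^{-1/n})/K)^d$: they are dictated by the floor of $\widehat P_{1/n}$ in the transfer lemma. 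Your population route would only need target-rate accuracy, so attributing these hypotheses to the floor of $P_\delta$ does not fit your own argument.
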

Once again, for the $\tau$-leaping sampler, we can obtain specific sampling grid and network architecture choices as highlighted in Corollary~\ref{cor:kl}.
\begin{corollary}[KL guarantee for $\tau$-leaping sampler]
\label{cor:kl}
For the $\tau$-leaping sampler, the grid $\mathcal T=\{\delta=\tfrac1n,\tfrac1n+\Delta,\ldots,\tfrac1n+(N-1)\Delta,T\}$ with $T=\log\!\big[ed(K/(1-e^{-1/n}))^d\log K\big]$, $\Delta=\big(d^2(K/(1-e^{-1/n}))^d\log(Kn)\big)^{-1}$, and $N=\lceil(T-1/n)/\Delta\rceil$ satisfies the sampler conditions of Theorem~\ref{thm:kl}. The clipped ReLU score class can be chosen with $M=\frac{1+(K-1)e^{-(1/n+\Delta)}}{1-e^{-(1/n+\Delta)}}$, $L\lesssim d^2\log^2(Kn)$, $W\lesssim d^4K(K/(1-e^{-1/n}))^d\log^2(Kn)$, $S\lesssim d^4K^2(K/(1-e^{-1/n}))^d\log^2(Kn)$, and $B\lesssim(K/(1-e^{-1/n}))^{2d}$.
\end{corollary}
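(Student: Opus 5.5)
The plan is to instantiate Theorem~\ref{thm:kl} with the $\tau$-leaping guarantee of Theorem~\ref{thm:tau_leaping_sampler}. That theorem supplies Assumption~\ref{ass:score_based_sampler} with
\[
\mathcal R_\tau(\mathcal T;M)=e^{-T}d\log K+\Delta d\log(K/\Delta).
\]
It then suffices to check three things: the grid satisfies the stated grid conditions of \citet{dmitriev_efficient_2026}; the remainder is $\lesssim\varepsilon_n:=((1-e^{-1/n})/K)^d$; and the architecture bounds of Theorem~\ref{thm:kl} simplify to the displayed ones. The conditions $T\ge1$ and the optimization tolerance are immediate or assumed. Note that $t_1=1/n+\Delta$, so the clipping level $M$ of Theorem~\ref{thm:kl} is exactly the displayed $M$.

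\emph{Remainder.} Write $A:=(K/(1-e^{-1/n}))^d=\varepsilon_n^{-1}$.

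\begin{itemize}[leftmargin=10pt]
\item \emph{Initialization term.} With $T=\log(edA\log K)$ we get $e^{-T}d\log K=1/(eA)\lesssim\varepsilon_n$. Also $T\ge1$, since $dA\log K\ge1$.
\item \emph{Discretization term.} Set $\Delta=(d^2A\log(Kn))^{-1}$. Then
\[
\Delta d\log(K/\Delta)=\frac{\log(Kd^2A\log(Kn))}{dA\log(Kn)}.
\]
Using $1-e^{-1/n}\ge 1/(2n)$, we have $\log A\le d\log(2Kn)$, hence $\log(Kd^2A\log(Kn))\lesssim d\log(Kn)$, using $\log d\le d$ and $\log\log(Kn)\le\log(Kn)$. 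The term is therefore $\lesssim 1/A=\varepsilon_n$.
\item \emph{Final interval.} The last interval has length at most $\Delta$, so the largest interval is $\Delta$. I expect this to be consistent with the grid conditions, which typically require $\Delta\lesssim1$ or a step-size restriction that $\Delta\le1$ satisfies.
\end{itemize}

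\emph{Architecture.} First, $1-e^{-t_1}\ge1-e^{-1/n}\ge1/(2n)$, so
\[
\log\bigl(K/(1-e^{-t_1})\bigr)\lesssim\log(Kn).
\]
Next, $T\lesssim d\log(Kn)$, because $\log(edA\log K)\lesssim\log d+d\log(2Kn)+\log\log K$. Hence $\log T\lesssim\log d+\log\log(Kn)$, and every $\log^j T$ term is absorbed into $d^j\log^j(Kn)$.

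\emph{Width and sparsity.} The number of steps is $N\le T/\Delta+1\lesssim d\log(Kn)\cdot d^2A\log(Kn)=d^3A\log^2(Kn)$. This gives $dKN\lesssim d^4KA\log^2(Kn)$ and $dK^2N\lesssim d^4K^2A\log^2(Kn)$. For the $D_n$ terms, $D_n\le n$, and $A\ge(2Kn)^d\ge n$ since $d\ge2$. Hence $dD_n\lesssim dA$ and $d^2D_n\log(Kn)\lesssim d^4K^2A\log^2(Kn)$. The polylog terms $d^3\log^3(Kn)$ and $d^4\log^4(Kn)$ must also be dominated by $A$-terms. This holds because
\[
A\ge(2Kn)^d\ge 2^d(Kn)^{d}\gtrsim d^{-1}\log^{2}(Kn)\cdot\mathrm{poly},
\]
so, for example, $\log^2(Kn)\le (Kn)^2\le A$ and $d^{-1}\lesssim A$-slack. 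Tracking these comparisons carefully is routine but fiddly.

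\emph{Weights.} The bound on $B$ is $N+(K/(1-e^{-t_1}))^{2d}+dMTA$. The middle term is at most $A^2$, since $t_1\ge1/n$. For the last term, $M\le(1+K)/(1-e^{-t_1})\le 2nK\cdot 2$, so $dMTA\lesssim d^2nK\log(Kn)A$, and I need to show $d^2nK\log(Kn)\lesssim A$. This follows from $A\ge(2Kn)^d$ with $d\ge2$, since $(2Kn)^2\ge nK\cdot 4Kn\gtrsim d^2\log(Kn)$ requires care in $d$. The clean route is $(2Kn)^{d}\ge 2^{d}(Kn)^2\ge d^2 (Kn)\log(Kn)$. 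Finally $N\lesssim d^3A\log^2(Kn)\le A^2$ by the same comparison, giving $B\lesssim A^2$.

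The main obstacle is verifying that the chosen grid meets the precise hypotheses of Theorem~\ref{thm:tau_leaping_sampler}, such as a bounded step size relative to $M$ or to the early-stopping time. If a condition of the form $\Delta\lesssim\delta$ appears, it holds because $\Delta\le 1/A\le1/n=\delta$.
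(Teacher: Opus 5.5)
Your plan matches the paper's proof. Both instantiate Theorem~\ref{thm:kl} with the $\tau$-leaping remainder, use $T\lesssim d\log(Kn)$ and $N\lesssim d^3A\log^2(Kn)$ (the paper writes $\Xi^d$ for your $A$), and absorb the $D_n$, polylog, and $B$ terms via $D_n\le n\le A$.

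One correction: $A\ge(2Kn)^d$ is backwards. Since $1-e^{-1/n}\ge 1/(2n)$, one actually has $A\le(2Kn)^d$, which you correctly used for the remainder, so your two claims contradict each other. The valid lower bound is $A\ge(Kn)^d$, from $1-e^{-1/n}\le 1/n$. This is the bound the paper uses, and it is enough for every domination you need, e.g.\ $(Kn)^d\ge 4^{d-2}(Kn)^2\gtrsim d^3\log^2(Kn)$ and $(Kn)^d\gtrsim d^2Kn\log(Kn)$.
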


\paragraph{Interpretation of the sampling and network parameters.}
Unlike under TV loss, early stopping is essential under KL loss.
We take \(\delta=1/n\) since the early-stopped empirical distribution \(\widehat P_{1/n}\) already achieves the rate of Theorem~\ref{thm:kl}.
The choices of the terminal time \(T\) and grid spacing \(\Delta\) in Corollary~\ref{cor:kl} calibrate the initialization and discretization errors to the sampler-accuracy scale \(\left((1-e^{-1/n})/K\right)^d\), which is much smaller than the final statistical KL rate, unlike in the TV case.
This difference arises from the absence of a triangle inequality for KL divergence.
Instead, we use Lemma~\ref{lem:local-kl-transfer}, which requires the sampling error to be controlled at the uniform lower bound on the probabilities at time \(1/n\).
Although the clipping level \(M=\widetilde {\mathcal O}(Kn)\) and network depth \(L\) remain mild, the number of sampling steps \(N\), network width \(W\), and sparsity \(S\) incur exponential dependence on \(d\).
This unfavorable computational dependence is not intrinsic to the statistical KL rate: in Appendix~\ref{app:exact_sampler}, we construct an exact one-step empirical reverse sampler that achieves the same statistical rate with only \(\mathcal O(d(D_n+K))\) operations per generated sample.

\paragraph{Minimax lower bound.}
The following theorem characterizes the minimax rate over the effective support class $\mathcal P_{s_n}$, showing that uniform discrete diffusion achieves the minimax rate up to a factor of $\log n$.

\begin{theorem}[Minimax rate under KL loss]
\label{thm:kl_lower}
For every $2\le s_n \le \min\{n,K^d\}$,
\[
\inf_{\widetilde P}\sup_{P_0\in\mathcal P_{s_n}} \E_{P_0}\KL(P_0\Vert\widetilde P)
  \asymp \frac{s_n }{n} \log(eK^d/s_n).
\]
\end{theorem}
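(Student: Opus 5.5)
We prove the two directions separately.

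\textbf{Upper bound.} We do not use the diffusion estimator here, since it carries an extra $\log n$. Instead we take an explicit add-constant estimator. Let $N_{\vx}$ be the count of $\vx$ in the sample and set
\[
\widetilde P(\vx)=(1-\lambda)\frac{N_{\vx}}{n}+\lambda K^{-d},\qquad \lambda=\frac{s_n}{n}\le 1.
\]
The plan is to split $\KL(P_0\Vert\widetilde P)$ over states $\vx$ with $P_0(\vx)\ge 1/n$ and states with $P_0(\vx)<1/n$.

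On the light states, use $\widetilde P(\vx)\ge \lambda K^{-d}$. The light set contributes at most
\[
\sum_{\text{light}}P_0(\vx)\log\frac{P_0(\vx)K^d}{\lambda}.
\]
The light mass is at most $\mathfrak s_n/n\le s_n/n$, and each light point has $P_0(\vx)K^d/\lambda\le K^d/s_n$. So this part is $\lesssim (s_n/n)\log(eK^d/s_n)$.

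On the heavy states, use a local bound of the form $\E[P\log(P/\widetilde P)]\lesssim 1/n$ per state. This is obtained by bounding KL by $\chi^2$-type terms for the smoothed empirical distribution, together with a Poisson/binomial estimate that $\E[P/\widetilde P]\lesssim 1+O(1/(nP))$ when $nP\ge1$. I expect this heavy-state step to be the main technical obstacle: one must handle the event $N_{\vx}=0$, and there $\widetilde P(\vx)\ge\lambda K^{-d}$ only gives a $\log(K^d n)$ factor. That factor is multiplied by $\Prob(N_{\vx}=0)\le e^{-nP}$, and I would control the resulting sum $\sum P e^{-nP}\log(PK^d n/s_n)$ over heavy states via $\sum_{\text{heavy}} P e^{-nP}\lesssim s_n/n$.

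Summing over the at most $s_n$ heavy states gives $s_n/n$. The smoothing itself costs $-\log(1-\lambda)\lesssim\lambda$. Together these give the upper bound. (Alternatively, cite the instance bounds of \citet{mourtada_estimation_2026} stated in terms of $\mathfrak s_n$.)

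\textbf{Lower bound.} The bound $s_n/n\cdot\log(eK^d/s_n)$ splits into the regimes $K^d\le e^2 s_n$, where the target is $s_n/n$, and $K^d$ large, where the logarithmic term matters. In both regimes we reduce to Fano's method over a packing.

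In the first regime, the TV lower bound of Theorem~\ref{thm:tv_lower} together with Pinsker ($\KL\ge2\TV^2$) gives only $s_n/n$ in squared form. It is cleaner, though, to use Assouad directly on $\lfloor s_n/2\rfloor$ pairs of atoms of mass about $1/s_n$ perturbed by $\pm\sqrt{1/(ns_n)}$. Each such distribution lies in $\mathcal P_{s_n}$, since it has support of size at most $s_n$. This gives KL risk $\gtrsim s_n/n$.

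In the second regime, take $m=\lfloor s_n/2\rfloor$ and place mass $1/2$ uniformly on $m$ fixed atoms. Then choose a random subset $S$ of size $m$ among the remaining $K^d-m$ states, each point of $S$ receiving mass $1/(2m)\cdot(m/n)$, with the leftover mass returned to the fixed atoms. These distributions have $\mathfrak s_n\le 2m\le s_n$.

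By Gilbert–Varshamov there is a family of subsets with pairwise symmetric difference $\ge m/2$ and log-cardinality $\gtrsim m\log(K^d/m)$. The KL between the $n$-fold products is $\lesssim n\cdot(1/n)=O(1)\cdot m$, because the perturbed states carry total mass $1/2\cdot m/n$ with bounded log-ratios. After suitable rescaling this is a constant fraction of the log-cardinality.

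Any estimator $\widetilde P$ must put mass at most about $m/(K^d)$-ish on most points of a wrongly guessed $S$. More precisely, one uses the standard KL-to-testing reduction: if $\widetilde P$ assigns less than $\frac{1}{2n}\cdot\frac{s_n}{K^d}$ to half the true atoms, the loss is at least $(m/n)\log(K^d/m)$. Fano yields probability of this event bounded below. The delicate point is calibrating the per-atom mass $\epsilon/n$ so that both the Fano mutual-information bound and the loss separation hold simultaneously, and I would try $\epsilon$ a small constant first.
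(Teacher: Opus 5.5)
\textbf{Upper bound and regime 1.} Your upper bound is a valid, more self-contained alternative to the paper's route. The paper simply invokes the adaptive estimator $(N_{\vx}+D_n/K^d)/(n+D_n)$ of Mourtada et al.\ (Lemma~\ref{lem:kl_ad}). Yours is tuned to $s_n$, which is fine for a minimax statement. Two repairs are needed.
\begin{enumerate}
\item Take $\lambda=s_n/(2n)$. At $s_n=n$ your estimator is $\pi$, and $-\log(1-\lambda)$ blows up.
\item Drop the heuristic $\E[P/\widetilde P]\lesssim 1+O(1/(nP))$, which is false because of the event $\{N_{\vx}=0\}$. What works is Lemma~\ref{lem:binomial-log-fluctuation} on $\{N_{\vx}>0\}$, plus your floor $\widetilde P\ge\lambda K^{-d}$ on $\{N_{\vx}=0\}$. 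The extra $\log(nP)$ is absorbed by $\sum_{\text{heavy}}Pe^{-nP}\log(nP)\lesssim s_n/n$.
\end{enumerate}
Regime~1 of the lower bound is fine, and in fact immediate: $\sup_{P_0}\E_{P_0}\KL(P_0\Vert\widetilde P)\ge 2\bigl(\sup_{P_0}\E_{P_0}\TV(P_0,\widetilde P)\bigr)^2\gtrsim s_n/n$ by Pinsker, Jensen and Theorem~\ref{thm:tv_lower}.

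\textbf{The gap: regime 2.} This is exactly where the factor $\log(eK^d/s_n)$ must be produced, and two steps fail.

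First, your information bound is wrong. For $S\neq S'$ the laws $P_S$ and $P_{S'}$ have different supports, so $\KL(P_S^{\otimes n}\Vert P_{S'}^{\otimes n})=\infty$; there are no bounded log-ratios. You can bound the mutual information against a reference law that spreads the rare mass uniformly over the non-fixed states. For per-atom mass $\epsilon/n$ this gives $I\le\epsilon m\log(K^d/m)$. That is the same order as the Gilbert--Varshamov log-cardinality, not $O(m)$, so calibrating $\epsilon$ is forced, not optional.

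Second, and more seriously, the loss-separation step fails. KL loss is not a metric, and a single $\widetilde P$ can be simultaneously good for many codewords by hedging. So the usual Fano reduction, which needs $\ell(P_S,Q)+\ell(P_{S'},Q)\ge\Delta$ for every $Q$, is unavailable. Your specific event can even be avoided entirely. The estimator $(1-\frac{s_n}{2n})\widehat P_0+\frac{s_n}{2n}\pi$ puts mass at least $\frac{s_n}{2nK^d}$ on every state. For this estimator the event ``$\widetilde P<\frac{1}{2n}\frac{s_n}{K^d}$ on half the true atoms'' has probability zero, so no Fano argument can bound it from below. Rescuing Fano would require all of the following, with the constants balanced, and none of it is in the proposal:
\begin{itemize}
\item a threshold like $\tau=\frac1n(m/K^d)^c$ with $c<1$;
\item a penalty showing $\widetilde P$ cannot place much more than order $m/n$ mass off the fixed atoms;
\item a count of codewords lying mostly inside $\{\widetilde P\ge\tau\}$;
\item a list-decoding Fano inequality.
\end{itemize}

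\textbf{How to close it.} The paper avoids Fano altogether by computing a Bayes risk exactly, and your own family works verbatim this way.
\begin{itemize}
\item Put the uniform prior on all $m$-subsets $S$ of the non-fixed states; no code is needed.
\item The masses do not depend on $S$ beyond which states are rare. So the posterior is uniform over the sets containing the observed rare set $O$.
\item Under KL loss the Bayes act is the posterior mean $\overline P$, since $\E[\KL(P_S\Vert Q)\mid\text{data}]=C+\KL(\overline P\Vert Q)$ with $C$ independent of $Q$.
\item $\overline P$ spreads the unobserved rare mass uniformly over the $K^d-m-|O|$ unobserved non-fixed states.
\end{itemize}
Hence the Bayes risk equals $\E\bigl[\frac{m-|O|}{2n}\log\frac{K^d-m-|O|}{m-|O|}\bigr]$, which is at least $\frac{\E[m-|O|]}{2n}\log\frac{K^d-m}{m}$. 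Since $\E[m-|O|]=m(1-\frac{1}{2n})^n\ge m/2$, this finishes the bound. The paper runs this argument with one heavy atom, rare mass $1/n$, and $m=\min\{\lfloor s_n\rfloor,\lfloor K^d/2\rfloor\}$, which covers both of your regimes at once.
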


\paragraph{Comparison with classical distribution estimators.}

There has been substantial interest in estimating discrete distributions
\citep{braess_achieve_2002,paninski_variational_2004,braess_bernstein_2004,van_nearly_2025}
and the related missing mass problem
\citep{good_population_1953,orlitsky_competitive_2015}.
Classical asymptotic theory shows that, for a fixed state space of size \(K^d\), the minimax expected KL risk is \((K^d-1)/(2n)+o(1/n)\) as \(n\to\infty\).
This fixed state space asymptotic does not describe the large state space regime relevant to discrete diffusion.

Since the empirical distribution \(\widehat P_0\) generally has infinite KL risk, classical discrete distribution estimation has long relied on smoothing to assign positive probability to unseen states.
Among fixed additive-smoothing estimators of the form \((n\widehat P_0(x)+\lambda)/(n+\lambda K^d)\), the Laplace estimator with \(\lambda=1\) achieves the minimax-optimal expected risk of order \(\Theta \left(\log(1+K^d/n)\right)\) over the unrestricted class \(\Delta_{K^d}\)
\citep{paninski_variational_2004,mourtada_estimation_2026}.
However, this rate does not vanish when \(n\ll K^d\).
The adaptive additive-smoothing estimator with \(\lambda=D_n/K^d\)
\citep{mourtada_estimation_2026}
and the absolute-discounting estimator
\citep{falahatgar_power_2017}
both achieve the minimax-optimal rate over the effective support class of Theorem~\ref{thm:kl_lower}.
Moreover, when the effective support size is maximal, i.e., $s_n = \min\{n, K^d\}$, which corresponds to the unrestricted class, the minimax rates in Theorems~\ref{thm:tv_lower} and~\ref{thm:kl_lower} recover the corresponding classical unrestricted minimax rates (Lemma~\ref{lem:max-effective-support-kl}).
Thus, minimax theory based on effective support size extends the classical unrestricted theory while remaining informative for distributions whose effective support size is small.

\paragraph{Comparison with existing KL bounds.}
Several recent works derive finite-sample KL guarantees for uniform discrete diffusion.
\cite{cho_minimax_2026} establish KL rate of order $\widetilde{\mathcal O}(K^d/n)$, which they claim is nearly minimax optimal.
However, their guarantees are established in the regime $n\gtrsim K^d$, which does not capture the regime $n\ll K^d$ typical of practical discrete diffusion.
\cite{wakasugi_state_2025} obtain a high-probability KL bound of the same order, together with a refined bound that removes the explicit polynomial dependence on $K^d$.
This refinement requires decay of the target distribution's coefficients in an eigenbasis of the generator, a structural condition that is difficult to verify in practice.
Moreover, the refined bound is not minimax optimal.
\cite{srikanth_sample_2026} derive a high-probability KL guarantee with explicit neural-network complexity, but require unrealistic full support assumption and incur polynomial dependence on the maximum initial score ratio, which may be as large as the ambient cardinality $K^d$.
In contrast, our bound adapts to the effective support size \(\mathfrak s_n(P_0)\) without such structural assumptions, are nearly minimax optimal under effective support class $\mathcal P_{s_n}$, and yields rates \(\mathcal O\!\left(n^{\alpha^{-1}-1}\log\!\frac{eK^d}{n^{1/\alpha}}\log n\right)\)
and
\(\mathcal O\!\left(\frac{\log^2 n}{n}\log\!\frac{eK^d}{\log n}\right)\)
for Zipfian and geometric decay, respectively, in the non-saturated regimes of \eqref{eq:s_n_behavior}.

\section{Experiments}
\label{sec:experiments}

\paragraph{Controlled populations.}
We conduct a controlled experiment to evaluate whether uniform discrete diffusion models learn populations with smaller effective support sizes more accurately when the size of the ambient state space is held fixed. We first train a byte-level BPE tokenizer with vocabulary size $K=2048$ on TinyStories \citep{eldan_tinystories_2023}. Using the resulting tokenized corpus, we train an 88M-parameter, 12-layer decoder-only Transformer as an autoregressive teacher model that predicts each token conditioned on the preceding tokens. We then use the frozen teacher at temperatures $\{0.4,0.7,1.0\}$ to define three population distributions over $d=32$-token sequences, all on the same ambient state space of size $K^d=2048^{32}$. Lower temperatures concentrate the teacher's next-token probabilities and therefore produce populations with smaller effective support sizes. 
\paragraph{Training and Evaluation.}
For each population, we consider seven training-set sizes ($n=10\mathrm{K}$--$1\mathrm{M}$).
For every population and training-set size, we train a 96M-parameter uniform SEDD model \citep{lou_discrete_2024} for 50 epochs.
Each experimental setting is repeated three times. Directly computing full-sequence TV or KL is infeasible because it would require tractable access to the model probabilities over $2048^{32}$ possible sequences.
We therefore draw 100K samples from each trained SEDD model and 100K fresh samples from the corresponding teacher population and compare them using two sample-based discrepancies.
Teacher-NLL KS compares the distributions of sequence negative log-likelihood under the teacher; the corresponding population discrepancy lower-bounds, but does not estimate, full-sequence TV.
The second metric, $1-\mathrm{MAUVE}$, compares the samples in a fixed GPT-2-large representation \citep{pillutla_mauve_2021}.
We apply both metrics to two independent sets of 100K samples from each teacher population as a teacher-versus-teacher null control, which indicates their finite-sample evaluation floors. Lower values indicate better agreement for both metrics. 
\paragraph{Results.}
Figure~\ref{fig:fixed-pass-results} shows that both teacher-NLL KS and $1-\mathrm{MAUVE}$ decrease as the training-set size increases.
At every training-set size, uniform SEDD models trained on lower-temperature populations achieve smaller discrepancies under both metrics.
Since these populations have smaller effective support sizes while sharing the same ambient state space, the results support effective support size as a meaningful measure of statistical difficulty for uniform discrete diffusion.
\begin{figure}[t]
    \centering
    \includegraphics[width=\linewidth]{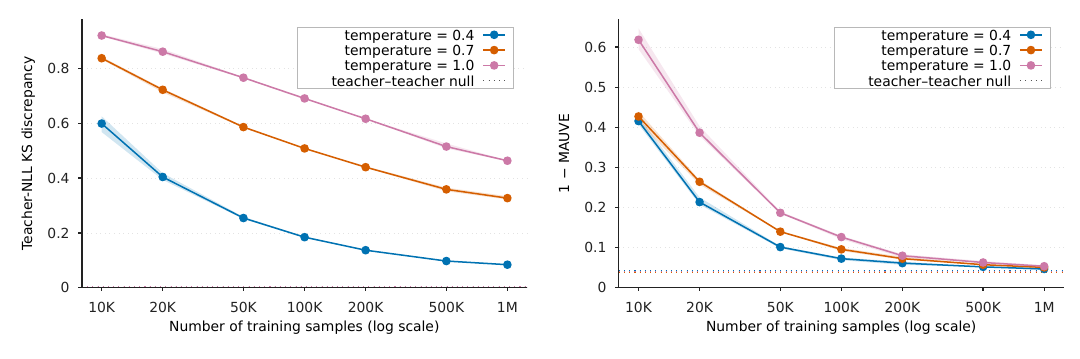}
    \caption{%
    \textbf{Uniform discrete diffusion on controlled populations.}
    Teacher-NLL KS (left) and $1-\mathrm{MAUVE}$ (right) versus
    training-set size for teacher temperatures $\{0.4,0.7,1.0\}$,
    with ambient state space $2048^{32}$.
    Lines and markers show means over three runs after 50 training
    epochs, and shaded bands span the minimum and maximum across runs.
    Dotted lines indicate teacher--teacher null controls.
    Lower values indicate better agreement between population and sample distributions. 
    }
    \label{fig:fixed-pass-results}
\end{figure}

\section{Conclusion}
\label{sec:conclusion}

In this paper, we develop a statistical theory for uniform discrete diffusion models on finite product spaces whose ambient cardinality $K^d$ can be astronomically large.
Our analysis is governed by the effective support size $\mathfrak s_n(P_0)$, a sample-size-dependent measure of distributional complexity.
We show that uniform discrete diffusion achieves the minimax-optimal TV rate $\mathcal{O}(\sqrt{\mathfrak s_n(P_0)/n})$.
Under KL loss, early stopping at time $1/n$ yields the rate $\mathcal{O}\left(\frac{1}{n}\mathfrak{s}_n(P_0)\log(eK^d/\mathfrak{s}_n(P_0))\log n\right)$, which is within a factor of $\log n$ of the minimax benchmark.
This KL guarantee captures generalization beyond the observed samples, since the empirical distribution generally assigns zero probability to unseen states and therefore has infinite KL risk.
These minimax rates identify three regimes in which consistent distribution estimation is impossible, possible only under TV loss, or possible under both TV and KL losses.
Our controlled experiment with uniform SEDD models complements these theoretical findings and supports effective support size as a meaningful measure of statistical difficulty beyond ambient cardinality alone.
Important directions for future work include closing the remaining logarithmic gap under KL loss, controlling optimization error in score estimation, and extending the analysis to masking discrete diffusion models.

% \section*{Disclosure of AI Use}

% We used generative AI tools to assist with writing and editing experimental code, checking mathematical proofs developed by the authors, retrieving potentially relevant related work, and polishing the manuscript for clarity and readability.
% The research ideas and original mathematical proofs were developed by the authors without generative AI assistance.
% All AI-assisted outputs were thoroughly reviewed and verified by the authors.
% We take full responsibility for the final content of this paper, including its text, mathematical claims and proofs, experimental results, citations, and accompanying code.
% \clearpage
% \singlespacing
% \bibliographystyle{plainnat}
% \bibliography{references}

\clearpage
\appendix
\numberwithin{equation}{section} \numberwithin{theorem}{section}

% Included by main_arxiv.tex; compile the main file.
\section*{Appendices}
\phantomsection
\addcontentsline{toc}{section}{Appendices}
\startcontents[sections]
\printcontents[sections]{l}{1}{\setcounter{tocdepth}{2}}
\paragraph{Notation.}
We retain the notation of the main text and introduce only appendix-specific shorthand here.
For any probability distribution $\mu$ on $[K]^d$, write $\mu_t:=\mu \mP_t$ for its marginal after evolving under the forward process for time $t$.
When the data distribution is $P_0$, we abbreviate its point masses as $p_{\vx}:=P_0(\vx)$.
For $t>0$, define
\[
B_0(t):=\left(\frac{K}{1-e^{-t}}\right)^d, \qquad
  B_1(t):=\frac{1+(K-1)e^{-t}}{1-e^{-t}}.
\]
For $u\in\mathbb R$, write $[u]_+:=\max\{u,0\}$; in the neural-network constructions, $\rho(u):=\max\{u,0\}$ denotes the ReLU activation.

\section{Preliminary Results}

\subsection{Some Facts on Effective Support Size}

\begin{lemma}[Effective support size under representative decay regimes]
\label{lem:s_n_behavior}
Let \(p_{(1)}\ge\cdots\ge p_{(|\supp(P_0)|)}>0\) denote nonzero masses of $P_0$ in decreasing order.
Suppose \(n\ge2\).
In each condition below, \(\asymp\) means that the comparison holds uniformly over \(1\le j\le|\supp(P_0)|\) with universal constants.
Then
\[
\mathfrak s_n(P_0)\asymp
\begin{cases}
\min\{|\supp(P_0)|,n\}, & \text{if } p_{(j)}\asymp |\supp(P_0)|^{-1},\\[2pt]
\min\{|\supp(P_0)|,n^{1/\alpha}\}, & \text{if } p_{(j)}\asymp j^{-\alpha},\\[2pt]
\min\{|\supp(P_0)|,\log n\}, & \text{if } p_{(j)}\asymp e^{-cj},
\end{cases}
\]
where \(\alpha>1\) and \(c>0\) are fixed.
The implicit constants in the conclusion depend only on the comparison constants and, in the second and third cases, on \(\alpha\) and \(c\), respectively.
\end{lemma}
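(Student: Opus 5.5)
Write $m:=|\supp(P_0)|$ and work directly from the sorted form
\[
\mathfrak s_n(P_0)=\sum_{j=1}^m \min\{np_{(j)},1\}.
\]
We will use two elementary facts. The first is the trivial bound $\mathfrak s_n(P_0)\le m$. The second is monotonicity: if $c_1 q_j\le p_{(j)}\le c_2 q_j$ for a reference sequence $(q_j)$, then, since $\min\{c u,1\}$ lies between $\min\{c,1\}\min\{u,1\}$ and $\max\{c,1\}\min\{u,1\}$, we get
\[
\mathfrak s_n(P_0)\asymp \sum_{j\le m}\min\{nq_j,1\},
\]
with constants depending only on $c_1,c_2$. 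So in each case it suffices to evaluate the sum for the reference sequence.

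\textbf{Near-uniform case.} Take $q_j=1/m$. Every term equals $\min\{n/m,1\}$, so the sum is $m\min\{n/m,1\}=\min\{n,m\}$.

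\textbf{Zipfian case.} Take $q_j=j^{-\alpha}$ and set $J:=n^{1/\alpha}$, so that $nq_j\ge1$ exactly when $j\le J$.
\begin{itemize}
\item If $m\le J$, every term is $1$ and the sum equals $m=\min\{m,J\}$.
\item If $m>J$, split the sum at $\lfloor J\rfloor$. The head contributes $\lfloor J\rfloor\asymp J$, using $J\ge 2^{1/\alpha}>1$ for $n\ge2$. The tail satisfies
\[
n\sum_{j>J} j^{-\alpha}\le n\Bigl(J^{-\alpha}+\int_J^\infty x^{-\alpha}\,dx\Bigr)=1+\frac{n J^{1-\alpha}}{\alpha-1}=1+\frac{J}{\alpha-1}.
\]
This gives the upper bound $\lesssim_\alpha J$. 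The lower bound $\gtrsim J$ comes from the head alone.
\end{itemize}
Hence the sum is $\asymp_\alpha \min\{m,n^{1/\alpha}\}$.

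\textbf{Geometric case.} Take $q_j=e^{-cj}$ and set $J:=\log(n)/c$, so that $nq_j\ge1$ exactly when $j\le J$. The head contributes $\min\{m,\lfloor J\rfloor\}$. The tail is a geometric series bounded by $n e^{-cJ}/(1-e^{-c})=1/(1-e^{-c})$. This gives the upper bound $\lesssim_c \min\{m,J\}+1$.

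\textbf{Main obstacle: small $n$ and the additive constant.} The one delicate point is when $J<1$, that is, $\log n<c$, where the head may be empty and the additive constant $1$ is not obviously absorbed. Here $\log n\ge\log 2$ and $m\ge1$, so $\min\{m,\log n\}\ge\log 2$ and the constant $1$ is absorbed into $\lesssim_c\min\{m,\log n\}$. For the lower bound in the same regime, use the first term alone: $\min\{ne^{-c},1\}\ge 2e^{-c}$, which is $\gtrsim_c \min\{m,\log n\}$ when $\log n<c$. When $J\ge1$, the head gives $\lfloor J\rfloor\ge J/2$ if $m\ge J$, and gives $m$ otherwise.

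The same first-term argument handles the analogous boundary case in the Zipfian regime; it is automatic there because $J>1$. Combining the three cases gives the claimed equivalences, with the constant dependence stated in Lemma~\ref{lem:s_n_behavior}.
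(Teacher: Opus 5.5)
Your proof is correct and follows the same route as the paper. You reduce to the reference sequences via $\min\{nu,1\}\asymp\min\{nv,1\}$, split each sum at the index where $nq_j$ crosses $1$, and bound the head and tail separately; your handling of the geometric case with $c^{-1}\log n<1$ (first term for the lower bound, $\min\{m,\log n\}\ge\log 2$ to absorb the additive constant) also fills in a detail the paper's proof passes over.
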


\begin{proof}
Since \(u\asymp v\) implies \(\min\{nu,1\}\asymp\min\{nv,1\}\), we may work directly with the representative sequences in the three cases.

If \(p_{(j)}\asymp |\supp(P_0)|^{-1}\), direct calculation gives \(\mathfrak s_n(P_0)\asymp \min\{|\supp(P_0)|,n\}\).

If \(p_{(j)}\asymp j^{-\alpha}\), then
\[
\mathfrak s_n(P_0)\asymp\sum_{j=1}^{|\supp(P_0)|}\min\{nj^{-\alpha},1\}.
\]
If \(|\supp(P_0)|\le n^{1/\alpha}\), all summands equal \(1\), so \(\mathfrak s_n(P_0)\asymp|\supp(P_0)|\).
If \(|\supp(P_0)|>n^{1/\alpha}\), the first \(\lfloor n^{1/\alpha}\rfloor\) summands equal \(1\), while
\[
n\sum_{j>\lfloor n^{1/\alpha}\rfloor}j^{-\alpha}\lesssim n^{1/\alpha}.
\]
Thus \(\mathfrak s_n(P_0)\asymp n^{1/\alpha}\), and hence \(\mathfrak s_n(P_0)\asymp\min\{|\supp(P_0)|,n^{1/\alpha}\}\).

If \(p_{(j)}\asymp e^{-cj}\), then
\[
\mathfrak s_n(P_0)\asymp\sum_{j=1}^{|\supp(P_0)|}\min\{ne^{-cj},1\}.
\]
If \(|\supp(P_0)|\le c^{-1}\log n\), all summands equal \(1\), so \(\mathfrak s_n(P_0)\asymp|\supp(P_0)|\).
If \(|\supp(P_0)|>c^{-1}\log n\), the summands equal \(1\) for \(j\le\lfloor c^{-1}\log n\rfloor\), while
\[
n\sum_{j>\lfloor c^{-1}\log n\rfloor}e^{-cj}\lesssim 1.
\]
Thus \(\mathfrak s_n(P_0)\asymp\log n\), and hence \(\mathfrak s_n(P_0)\asymp\min\{|\supp(P_0)|,\log n\}\).
\end{proof}

\begin{lemma}[Effective support size and distinct observations]
\label{lem:D_n_s_n}
Let \(X_1,\ldots,X_n\) be i.i.d. samples from \(P_0\), and let \(D_n:=|\{X_1,\ldots,X_n\}|\) denote the number of distinct observations.
Then
\[
(1-e^{-1})\mathfrak s_n(P_0)\le \E_{P_0}D_n\le \mathfrak s_n(P_0).
\]
Moreover, for every \(\eta\in(0,1)\), with probability at least \(1-\eta\),
\[
D_n\le2\bigl(\mathfrak s_n(P_0)+\log\eta^{-1}\bigr).
\]
\end{lemma}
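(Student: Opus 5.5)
The plan is to write $D_n$ as a sum of indicators, $D_n=\sum_{\vx}\1\{\vx\text{ appears among }X_1,\ldots,X_n\}$. Then
\[
\E_{P_0}D_n=\sum_{\vx}\bigl(1-(1-p_{\vx})^n\bigr),
\]
and both expectation bounds reduce to a termwise comparison of $1-(1-p)^n$ with $\min\{np,1\}$ for $p\in[0,1]$.

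For the upper bound, we use $1-(1-p)^n\le 1$ and Bernoulli's inequality $(1-p)^n\ge 1-np$. Together these give $1-(1-p)^n\le\min\{np,1\}$.

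For the lower bound, $(1-p)^n\le e^{-np}$ gives $1-(1-p)^n\ge 1-e^{-np}$. We then use concavity of $g(u)=1-e^{-u}$ on $[0,1]$ with $g(0)=0$:
\begin{itemize}
\item for $u\le 1$, $g(u)\ge (1-e^{-1})u$;
\item for $u\ge1$, $g(u)\ge 1-e^{-1}$ by monotonicity.
\end{itemize}
Hence $1-e^{-u}\ge(1-e^{-1})\min\{u,1\}$, and summing over $\vx$ yields $(1-e^{-1})\mathfrak s_n(P_0)\le\E_{P_0}D_n$.

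For the high-probability bound, we would use a Chernoff bound based on negative association. The occupancy indicators $Y_{\vx}=\1\{\vx\text{ observed}\}$ are increasing functions of the multinomial counts, which are negatively associated. Hence $\E e^{\lambda D_n}\le\prod_{\vx}\E e^{\lambda Y_{\vx}}$ for $\lambda>0$. This is the main obstacle: it requires invoking the standard negative-association property of balls-in-bins occupancy (Dubhashi--Ranjan), rather than anything proved earlier in the paper.

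Granting this, write $q_{\vx}=\E Y_{\vx}$, so that
\[
\E e^{\lambda D_n}\le\prod_{\vx}\bigl(1+q_{\vx}(e^\lambda-1)\bigr)\le\exp\bigl((e^\lambda-1)\E D_n\bigr)\le\exp\bigl((e^\lambda-1)\mathfrak s_n(P_0)\bigr).
\]
Take $\lambda=\log 2$, so $e^\lambda-1=1$. Markov's inequality then gives
\[
\Prob(D_n\ge a)\le 2^{-a}e^{\mathfrak s_n(P_0)}.
\]
Setting $a=2(\mathfrak s_n+\log\eta^{-1})$, we get
\[
2^{-a}e^{\mathfrak s_n}=\exp\bigl(\mathfrak s_n(1-2\log2)-2\log 2\,\log\eta^{-1}\bigr)\le\eta,
\]
since $2\log2>1$. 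This completes the plan.

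As a fallback that avoids negative association, one could use a self-bounding or bounded-differences argument: $D_n$ is a function of independent samples, changes by at most $1$ when one sample changes, and is self-bounding. The standard self-bounding concentration inequality gives $\Prob(D_n\ge \E D_n+t)\le\exp\bigl(-t^2/(2\E D_n+2t/3)\bigr)$, which also yields the bound $D_n\le 2(\mathfrak s_n+\log\eta^{-1})$ after routine algebra.
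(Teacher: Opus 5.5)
Your proposal is correct. The expectation bounds are exactly the paper's argument: the same indicator decomposition and the same termwise comparisons $1-(1-u)^n\le\min\{nu,1\}$ and $1-(1-u)^n\ge 1-e^{-nu}\ge(1-e^{-1})\min\{nu,1\}$.

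You differ from the paper on the high-probability bound. The paper does not prove it. It cites Lemma~10.7 of Mourtada et al., which gives $D_n\le 2(\E_{P_0}D_n+\log\eta^{-1})$ with probability at least $1-\eta$, and then substitutes $\E_{P_0}D_n\le\mathfrak s_n(P_0)$. You instead derive it from negative association of the occupancy indicators and the Chernoff bound $\E e^{\lambda D_n}\le\exp\{(e^\lambda-1)\E_{P_0} D_n\}$ at $\lambda=\log 2$. If you run the same computation with $\E_{P_0}D_n$ in place of $\mathfrak s_n(P_0)$, it reproduces the cited lemma in exactly that form. So you have effectively re-proved the paper's black box, at the cost of importing the standard negative-association fact for multinomial counts.

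Your self-bounding fallback also works. With $\mu=\E_{P_0}D_n$ and $L=\log\eta^{-1}$, solving $t^2=(2\mu+2t/3)L$ gives $t\le \tfrac23 L+\sqrt{2\mu L}$, hence $\mu+t\le 2\mu+\tfrac76 L$, slightly sharper than required. The paper's route is shorter; yours is self-contained modulo a textbook concentration fact.
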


\begin{proof}
We have \(\E_{P_0}D_n=\sum_x\{1-(1-P_0(x))^n\}\).
Since \(1-(1-u)^n\le\min\{nu,1\}\) for \(u\in[0,1]\), it follows that \(\E_{P_0}D_n\le\mathfrak s_n(P_0)\).
Conversely, \(1-(1-u)^n\ge1-e^{-nu}\ge(1-e^{-1})\min\{nu,1\}\), so \(\E_{P_0}D_n\ge(1-e^{-1})\mathfrak s_n(P_0)\).
Finally, \citet[Lemma~10.7]{mourtada_estimation_2026} gives, with probability at least \(1-\eta\),
\[
D_n\le2\bigl(\E_{P_0}D_n+\log\eta^{-1}\bigr),
\]
and the claim follows from \(\E_{P_0}D_n\le\mathfrak s_n(P_0)\).
\end{proof}

\begin{lemma}[Recovery of the unrestricted KL rate]
\label{lem:max-effective-support-kl}
Let $s_{\max}:=\min\{K^d,n\}$.
Then,
\[
\log\left(1+\frac{K^d}{n}\right)
\le
\frac{s_{\max}}{n}\log\left(\frac{eK^d}{s_{\max}}\right)
\le
2\log\left(1+\frac{K^d}{n}\right).
\]
\end{lemma}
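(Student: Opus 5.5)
Write $N:=K^d$ and $r:=N/n$. The plan is to split into the two cases that determine $s_{\max}$, namely $N\le n$ (so $s_{\max}=N$) and $N>n$ (so $s_{\max}=n$). In each case the middle quantity becomes an explicit function of $r$, and both inequalities reduce to elementary one-variable bounds on the logarithm.

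\emph{Case $N\le n$, i.e.\ $r\le 1$.} Here the middle quantity equals $\frac{N}{n}\log e = r$. The lower bound $\log(1+r)\le r$ is the standard inequality. For the upper bound $r\le 2\log(1+r)$ on $[0,1]$, use concavity of $\log(1+r)$: it lies above its chord from $0$ to $1$, so $\log(1+r)\ge r\log 2$. Since $2\log 2>1$, this gives the claim.

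\emph{Case $N>n$, i.e.\ $r>1$.} Here the middle quantity equals $\log(eN/n)=1+\log r$.

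For the lower bound, write $\log(1+r)=\log r+\log(1+1/r)$. Then use $\log(1+1/r)\le 1/r<1$.

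For the upper bound we need $1+\log r\le 2\log(1+r)$. Write $2\log(1+r)=2\log r+2\log(1+1/r)$. It then suffices to show $g(r):=\log r+2\log(1+1/r)\ge 1$ for $r\ge1$. Two facts give this:
\begin{itemize}
\item At $r=1$, $g(1)=2\log 2>1$.
\item The derivative is $g'(r)=\frac1r-\frac{2}{r(r+1)}=\frac{r-1}{r(r+1)}\ge 0$ for $r\ge1$.
\end{itemize}
Hence $g$ is nondecreasing on $[1,\infty)$ and $g(r)\ge 2\log 2>1$ there.

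The only mildly delicate step is the upper bound near $r=1$, where the constant $2$ is needed. It is handled by the two facts above, $2\log 2>1$ and the monotonicity of $g$. The boundary $r=1$ is consistent between the two cases, since both formulas give $1$ there.
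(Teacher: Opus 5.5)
Your proof is correct and follows essentially the same route as the paper: the same case split on $K^d\le n$ versus $K^d\ge n$, reducing each side to an elementary one-variable logarithm inequality. The one cosmetic difference is that for the upper bound in the case $K^d\ge n$ the paper uses $ex\le(1+x)^2$ for $x\ge1$ (immediate from $(1+x)^2\ge 4x$), which replaces your monotonicity argument for $g$.
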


\begin{proof}
If \(K^d\le n\), then \(s_{\max}=K^d\) and
\[
\frac{s_{\max}}{n}\log\left(\frac{eK^d}{s_{\max}}\right)=\frac{K^d}{n}.
\]
Since \(K^d/n\le1\), the inequalities \(x/2\le\log(1+x)\le x\) for \(x\in[0,1]\) give the result.

If \(K^d\ge n\), then \(s_{\max}=n\) and
\[
\frac{s_{\max}}{n}\log\left(\frac{eK^d}{s_{\max}}\right)
=
1+\log\left(\frac{K^d}{n}\right).
\]
Writing \(x:=K^d/n\ge1\), we have \(1+x\le ex\le(1+x)^2\), and hence
\[
\log(1+x)\le1+\log x\le2\log(1+x).
\]
Combining the two cases proves the claim.
\end{proof}

\subsection{Properties of Uniform Forward Process}

\begin{lemma}
\label{lem:upper_bounds}
For every \(t>0\) and \(\vx\in[K]^d\),
\[
P_t(\vx)\ge B_0(t)^{-1}.
\]
Moreover, for every \(t>0\) and \(\vx,\vy,\vz\in[K]^d\) with \(\dHam(\vx,\vy)=1\),
\[
B_1(t)^{-1}\le\frac{P_t(\vy)}{P_t(\vx)}\le B_1(t),
\qquad
B_1(t)^{-1}\le\frac{\mP_t(\vz,\vy)}{\mP_t(\vz,\vx)}\le B_1(t).
\]
\end{lemma}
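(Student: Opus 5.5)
The plan is to work coordinatewise, using the explicit product form \(\mP_t(\vz,\vy)=\prod_{i=1}^d \mP_t^{\mathrm{coord}}(z^i,y^i)\) with \(\mP_t^{\mathrm{coord}}(a,b)=e^{-t}\1\{a=b\}+(1-e^{-t})/K\). Write \(m_t:=(1-e^{-t})/K\) and \(M_t:=e^{-t}+(1-e^{-t})/K=(1+(K-1)e^{-t})/K\) for the smallest and largest single-coordinate transition probabilities. Every entry of \(\mP_t^{\mathrm{coord}}\) then lies in \([m_t,M_t]\), and \(M_t/m_t=B_1(t)\).

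\textbf{Lower bound on \(P_t(\vx)\).} For any \(\vz\) we have \(\mP_t(\vz,\vx)\ge m_t^d=B_0(t)^{-1}\). Averaging over \(\vz\sim P_0\) gives \(P_t(\vx)=\sum_{\vz}P_0(\vz)\mP_t(\vz,\vx)\ge B_0(t)^{-1}\).

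\textbf{Conditional ratio.} Let \(\dHam(\vx,\vy)=1\), differing only at coordinate \(i\). Then all factors of \(\mP_t(\vz,\vy)\) and \(\mP_t(\vz,\vx)\) cancel except the \(i\)-th one, so
\[
\frac{\mP_t(\vz,\vy)}{\mP_t(\vz,\vx)}=\frac{\mP_t^{\mathrm{coord}}(z^i,y^i)}{\mP_t^{\mathrm{coord}}(z^i,x^i)}\in\left[\frac{m_t}{M_t},\frac{M_t}{m_t}\right]=\left[B_1(t)^{-1},B_1(t)\right].
\]

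\textbf{Marginal ratio.} The pointwise bound gives \(\mP_t(\vz,\vy)\le B_1(t)\,\mP_t(\vz,\vx)\) for every \(\vz\). Multiplying by \(P_0(\vz)\ge0\) and summing over \(\vz\) yields \(P_t(\vy)\le B_1(t)P_t(\vx)\). Exchanging the roles of \(\vx\) and \(\vy\) gives the lower bound. The denominators are positive by the first part, so all the ratios are well defined.

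There is no real obstacle. The only points needing care are checking that the ratio \(M_t/m_t\) simplifies exactly to \(B_1(t)\), and noting that the mixture step preserves pointwise ratio bounds because the weights \(P_0(\vz)\) are nonnegative.
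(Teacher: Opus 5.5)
Your proposal is correct and follows the paper's proof essentially step for step: the same coordinatewise lower bound averaged over $P_0$, and the same cancellation reducing the conditional ratio to a single-coordinate ratio bounded by $B_1(t)$. For the marginal ratio, summing the pointwise inequality weighted by $P_0(\vz)$ is just the paper's convex-combination argument written without normalizing the weights.
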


\begin{proof}
For one coordinate, \(\mP_t^{\mathrm{coord}}(a,b)=e^{-t}\1\{a=b\}+(1-e^{-t})/K\ge(1-e^{-t})/K\).
Thus, by coordinatewise independence, \(\mP_t(\vz,\vx)\ge((1-e^{-t})/K)^d=B_0(t)^{-1}\), and averaging over \(\vz\sim P_0\) gives \(P_t(\vx)\ge B_0(t)^{-1}\).

Now suppose \(\dHam(\vx,\vy)=1\), and let \(j\) be the unique coordinate on which they differ.
All other coordinate factors cancel, so
\[
\frac{\mP_t(\vz,\vy)}{\mP_t(\vz,\vx)}
=
\frac{\mP_t^{\mathrm{coord}}(z^j,y^j)}{\mP_t^{\mathrm{coord}}(z^j,x^j)}.
\]
Since the smallest and largest entries of \(\mP_t^{\mathrm{coord}}\) are \((1-e^{-t})/K\) and \(e^{-t}+(1-e^{-t})/K\), respectively, the last ratio lies in \([B_1(t)^{-1},B_1(t)]\).

Finally,
\[
\frac{P_t(\vy)}{P_t(\vx)}
=
\sum_{\vz\in[K]^d}
\frac{P_0(\vz)\mP_t(\vz,\vx)}{P_t(\vx)}
\frac{\mP_t(\vz,\vy)}{\mP_t(\vz,\vx)}.
\]
The coefficients form a probability distribution over \(\vz\), so the same bounds hold for \(P_t(\vy)/P_t(\vx)\).
\end{proof}

\begin{lemma}[Uniform mixing of the forward process]
\label{lem:uniform-forward-mixing}
For every \(t\ge0\),
\[
\TV(P_t,\pi)\le 1-(1-e^{-t})^d\le de^{-t}.
\]
\end{lemma}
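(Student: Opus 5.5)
The plan is a coupling argument, after which convexity of total variation removes the dependence on $P_0$. Because $\TV(\cdot,\cdot)$ is jointly convex and $P_t=P_0\mP_t$, $\pi=\pi\mP_t$, it suffices to show that for every fixed $\vz\in[K]^d$,
\[
\TV\bigl(\mP_t(\vz,\cdot),\pi\bigr)\le 1-(1-e^{-t})^d .
\]
Averaging over $\vz\sim P_0$ then gives the bound for $P_t$. One could use stationarity of $\pi$ instead, but the pointwise statement is cleaner.

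For a fixed starting point, write the one-coordinate kernel as a mixture:
\[
\mP_t^{\mathrm{coord}}(a,\cdot)=e^{-t}\delta_a+(1-e^{-t})\,\mathrm{Unif}([K]).
\]
This has a direct interpretation: with probability $1-e^{-t}$ the coordinate has been refreshed and is exactly uniform, and otherwise it still equals $a$. Coordinatewise independence then lets us realize $X_t\sim\mP_t(\vz,\cdot)$ as follows:
\begin{itemize}
\item draw independent Bernoulli$(1-e^{-t})$ refresh indicators $R^1,\dots,R^d$;
\item set $X_t^i\sim\mathrm{Unif}([K])$ when $R^i=1$, and $X_t^i=z^i$ otherwise.
\end{itemize}
Couple this with $Y\sim\pi$ by using the same uniform draws on refreshed coordinates and fresh uniforms on the rest. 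On the event that every coordinate is refreshed, which has probability $(1-e^{-t})^d$, we get $X_t=Y$. The coupling inequality then gives $\TV\le\Prob(X_t\ne Y)\le 1-(1-e^{-t})^d$.

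As an alternative that avoids coupling, we can write $\mP_t(\vz,\cdot)=(1-e^{-t})^d\pi+\bigl(1-(1-e^{-t})^d\bigr)\nu$ for some probability measure $\nu$. This gives $\TV\le\bigl(1-(1-e^{-t})^d\bigr)\TV(\nu,\pi)\le 1-(1-e^{-t})^d$.

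The second inequality is Bernoulli's inequality, $(1-u)^d\ge1-du$ for $u=e^{-t}\in[0,1]$, which yields $1-(1-e^{-t})^d\le de^{-t}$. The case $t=0$ is trivial, since the bound equals $1$.

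There is no real obstacle here. The only point needing care is verifying that the mixture decomposition of the product kernel is valid, meaning the residual $\nu$ is a genuine nonnegative measure. That follows from expanding the product of the coordinate mixtures and keeping the all-uniform term separate.
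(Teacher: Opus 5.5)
Your proposal is correct and follows the paper's argument: the coordinatewise refresh decomposition $\mP_t^{\mathrm{coord}}(a,\cdot)=e^{-t}\delta_a+(1-e^{-t})\operatorname{Unif}([K])$ gives $P_t=(1-e^{-t})^d\pi+\bigl[1-(1-e^{-t})^d\bigr]Q_t$, and Bernoulli's inequality gives the second bound. The only difference is cosmetic: the paper applies the mixture decomposition to $P_t$ directly, since it holds for every initial state, so your preliminary reduction to point masses via convexity is valid but not needed.
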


\begin{proof}
For one coordinate, \(\mP_t^{\mathrm{coord}}(a,\cdot)=e^{-t}\delta_a+(1-e^{-t})\operatorname{Unif}([K])\).
Hence, with probability \((1-e^{-t})^d\), all \(d\) coordinates are refreshed independently from the uniform distribution, regardless of the initial state.
Therefore, for some distribution \(Q_t\),
\[
P_t=(1-e^{-t})^d\pi+\bigl[1-(1-e^{-t})^d\bigr]Q_t.
\]
It follows that \(\TV(P_t,\pi)\le1-(1-e^{-t})^d\le de^{-t}\), where the last inequality follows from \(1-(1-u)^d\le du\).
\end{proof}

\begin{lemma}
\label{lem:tv_emp}
The empirical distribution \(\widehat P_0\) satisfies
\[
\E_{P_0}\!\left[\TV(P_0,\widehat P_0)\right]\le \frac{3}{2}\sqrt{\frac{\mathfrak s_n(P_0)}{n}}.
\]
\end{lemma}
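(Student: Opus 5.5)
We bound the expected TV distance coordinatewise in $\vx$ and split the states according to whether they lie above or below the sampling resolution $1/n$. Writing $N_{\vx}:=n\widehat P_0(\vx)\sim\mathrm{Bin}(n,p_{\vx})$, we have
\[
\E_{P_0}\TV(P_0,\widehat P_0)=\frac12\sum_{\vx}\E\bigl|\widehat P_0(\vx)-p_{\vx}\bigr|.
\]
For each $\vx$ we use two elementary bounds on $\E|\widehat P_0(\vx)-p_{\vx}|$:
\begin{itemize}[leftmargin=10pt]
\item the variance bound (Jensen) $\sqrt{\operatorname{Var}(\widehat P_0(\vx))}\le\sqrt{p_{\vx}/n}$;
\item the crude bound $\E\widehat P_0(\vx)+p_{\vx}=2p_{\vx}$.
\end{itemize}

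Let $A:=\{\vx:p_{\vx}\ge1/n\}$, so that $|A|\le\mathfrak s_n(P_0)$.

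\emph{States in $A$.} Apply the variance bound and then Cauchy--Schwarz:
\[
\frac12\sum_{\vx\in A}\sqrt{p_{\vx}/n}\le\frac12\sqrt{|A|/n}\,\Bigl(\sum_{\vx\in A}p_{\vx}\Bigr)^{1/2}\le\frac12\sqrt{\mathfrak s_n(P_0)/n}.
\]

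\emph{States in $A^c$.} Apply the crude bound:
\[
\frac12\sum_{\vx\notin A}2p_{\vx}=\sum_{\vx\notin A}p_{\vx}=\frac1n\Bigl(n\sum_{\vx\notin A}p_{\vx}\Bigr)\le\frac{\mathfrak s_n(P_0)}{n}.
\]

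Since $\mathfrak s_n(P_0)\le n$, we have $\mathfrak s_n(P_0)/n\le\sqrt{\mathfrak s_n(P_0)/n}$. The two pieces therefore sum to at most $\tfrac32\sqrt{\mathfrak s_n(P_0)/n}$, which matches the stated constant exactly.

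The only step needing care is the tail, where the crude bound yields $\mathfrak s_n/n$ rather than a square root. This step is resolved by the inequality $\mathfrak s_n(P_0)\le n$, which holds because each summand $\min\{np_{\vx},1\}\le np_{\vx}$ and the $p_{\vx}$ sum to one. Everything else is routine.
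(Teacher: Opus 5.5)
Your proof is correct and is essentially the paper's argument: split at $p_{\vx}\ge 1/n$, use the Jensen/variance bound with Cauchy--Schwarz on the heavy states and the crude $2p_{\vx}$ bound on the light states. Then absorb $\mathfrak s_n(P_0)/n$ into $\sqrt{\mathfrak s_n(P_0)/n}$ using $\mathfrak s_n(P_0)\le n$, which gives the same constant $3/2$.
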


\begin{proof}
Let \(N_{\vx}:=\sum_{i=1}^n\1\{X_0^{(i)}=\vx\}\) and \(H:=\{\vx\in[K]^d:nP_0(\vx)\ge1\}\).
For every \(\vx\in[K]^d\), the triangle inequality and Jensen's inequality give
\[
\E_{P_0}\left|\frac{N_{\vx}}{n}-P_0(\vx)\right|\le \min\left\{2P_0(\vx),\sqrt{\frac{P_0(\vx)}{n}}\right\}.
\]
Therefore,
\[
\E_{P_0}\!\left[\TV(P_0,\widehat P_0)\right]\le \sum_{\vx\notin H}P_0(\vx)+\frac12\sum_{\vx\in H}\sqrt{\frac{P_0(\vx)}{n}}.
\]
The first term is at most \(\mathfrak s_n(P_0)/n\le\sqrt{\mathfrak s_n(P_0)/n}\).
Since \(|H|\le\mathfrak s_n(P_0)\), the Cauchy--Schwarz inequality gives
\[
\sum_{\vx\in H}\sqrt{\frac{P_0(\vx)}{n}}\le \sqrt{\frac{|H|}{n}\sum_{\vx\in H}P_0(\vx)}\le\sqrt{\frac{\mathfrak s_n(P_0)}{n}}.
\]
Combining the two bounds proves the claim.
\end{proof}

\subsection{Auxiliary Results for KL Bounds}

Lemmas~\ref{lem:binomial-chernoff} to \ref{lem:hamming-nearest-neighbor} are used for proving Lemma~\ref{lem:kl_empirical_1/n}, which gives the KL divergence upper bound for the empirical distribution at time $1/n$, $\widehat P_{1/n}$.

\begin{lemma}[Chernoff bounds for a binomial random variable]
\label{lem:binomial-chernoff}
Let \(N\) be a binomial random variable.
For every \(\delta\in(0,1)\),
\[
\Prob\{N\le(1-\delta)\E N\}\le \exp\left(-\frac{\delta^2\E N}{2}\right),
\qquad
\Prob\{N\ge(1+\delta)\E N\}\le \exp\left(-\frac{\delta^2\E N}{2+\delta}\right).
\]
\end{lemma}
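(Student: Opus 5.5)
We prove both tails by the standard exponential-moment (Chernoff) method, using that a binomial random variable $N\sim\mathrm{Bin}(m,q)$ is a sum of $m$ independent Bernoulli$(q)$ variables with $\mu:=\E N=mq$. For any $\lambda\in\mathbb R$, independence gives
\[
\E e^{\lambda N}=\bigl(1+q(e^\lambda-1)\bigr)^m\le \exp\bigl(\mu(e^\lambda-1)\bigr),
\]
using $1+u\le e^u$. This single moment-generating-function bound, which depends on $N$ only through $\mu$, drives both inequalities. If $\mu=0$, both events are trivial or have probability at most one, so we may assume $\mu>0$.

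For the upper tail, Markov's inequality with $\lambda=\log(1+\delta)>0$ gives
\[
\Prob\{N\ge(1+\delta)\mu\}\le \exp\bigl(\mu\delta-(1+\delta)\mu\log(1+\delta)\bigr)=\left(\frac{e^\delta}{(1+\delta)^{1+\delta}}\right)^{\mu}.
\]
It then remains to show the elementary inequality $(1+\delta)\log(1+\delta)-\delta\ge \delta^2/(2+\delta)$ for $\delta\in(0,1)$. This in fact holds for all $\delta\ge0$. We derive it from $\log(1+\delta)\ge 2\delta/(2+\delta)$, which follows because both sides vanish at $0$ and
\[
\frac{1}{1+\delta}\ge\frac{4}{(2+\delta)^2}\iff \delta^2\ge0.
\]
Multiplying by $1+\delta$ gives
\[
(1+\delta)\log(1+\delta)-\delta\ge \frac{2\delta(1+\delta)}{2+\delta}-\delta=\frac{\delta^2}{2+\delta}.
\]

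For the lower tail, apply Markov to $e^{-\lambda N}$ with $\lambda=-\log(1-\delta)>0$:
\[
\Prob\{N\le(1-\delta)\mu\}\le \exp\bigl(-\mu\delta-(1-\delta)\mu\log(1-\delta)\bigr).
\]
It then suffices to verify $g(\delta):=\delta+(1-\delta)\log(1-\delta)-\delta^2/2\ge0$ on $(0,1)$. We have $g(0)=0$ and
\[
g'(\delta)=-\log(1-\delta)-\delta\ge0,
\]
which gives the claim.

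The only mildly delicate step is the choice of the optimizing $\lambda$ together with the two scalar inequalities. Both are handled by the derivative comparisons above, so no step is a genuine obstacle.
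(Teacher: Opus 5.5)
Your proposal is correct and follows essentially the same route as the paper: the MGF bound $\E e^{\lambda N}\le\exp\{\mu(e^\lambda-1)\}$, Markov's inequality with $\lambda=\log(1+\delta)$ and $\lambda=-\log(1-\delta)$, and the two scalar inequalities. The only difference is that you verify those scalar inequalities explicitly by derivative comparisons (and handle the trivial case $\mu=0$), whereas the paper states them without proof.
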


\begin{proof}
Write \(N\sim\operatorname{Bin}(M,p)\), so \(\E N=Mp\).
For every \(\lambda\in\mathbb R\), \(\E e^{\lambda N}=(1-p+pe^\lambda)^M\le\exp\{(\E N)(e^\lambda-1)\}\).
Applying Markov's inequality with \(\lambda=\log(1+\delta)\) and using \((1+\delta)\log(1+\delta)-\delta\ge\delta^2/(2+\delta)\) gives the upper-tail bound.
Applying the same argument to \(e^{-\lambda N}\) with \(\lambda=-\log(1-\delta)\) and using \(\delta+(1-\delta)\log(1-\delta)\ge\delta^2/2\) gives the lower-tail bound.
\end{proof}

\begin{lemma}
\label{lem:binomial-log-fluctuation}
Let \(N\) be a binomial random variable.
If $\E N \ge 1$,
\[
\E\left[\1\{N>0\}\log\frac{\E N}{N}\right]\lesssim \frac 1{\E N}.
\]
If \(0<\E N<1\), then \(\1\{N>0\}\log(\E N/N)\le0\).
\end{lemma}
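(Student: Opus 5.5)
The second claim is immediate. If \(0<\E N<1\) and \(N>0\), then \(N\ge1>\E N\), so \(\log(\E N/N)<0\). For the first claim, write \(\mu:=\E N\ge1\) and split on the event \(\{N\ge\mu/2\}\) and its complement within \(\{N>0\}\). The idea is that on the bulk event the logarithm is essentially linear and its expectation is a second-order quantity of size \(\mathrm{Var}(N)/\mu^2\le1/\mu\). On the lower tail the logarithm is at most \(\log\mu\), but the tail probability is exponentially small.

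\emph{Bulk term.} On \(\{N\ge\mu/2\}\), use the elementary inequality
\[
\log(1/u)\le (1-u)+(1-u)^2 \qquad\text{for } u\ge 1/2,
\]
with \(u=N/\mu\). One way to check it is to verify \(-\log u\le(1-u)+c(1-u)^2\) on \([1/2,\infty)\) for a suitable absolute constant \(c\); for \(u\ge1\), \(-\log u\le 1-u+(1-u)^2/2\) already holds. This gives
\[
\E\Bigl[\1\{N\ge\mu/2\}\log\tfrac{\mu}{N}\Bigr]\le \E\Bigl[\1\{N\ge\mu/2\}\Bigl(1-\tfrac N\mu\Bigr)\Bigr]+c\,\frac{\mathrm{Var}(N)}{\mu^2}.
\]
The second term is at most \(c/\mu\). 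For the first term, use \(\E[1-N/\mu]=0\):
\[
\E\bigl[\1\{N\ge\mu/2\}(1-N/\mu)\bigr]=-\E\bigl[\1\{N<\mu/2\}(1-N/\mu)\bigr]\le 0,
\]
since \(1-N/\mu>0\) on \(\{N<\mu/2\}\). So the bulk contributes at most \(c/\mu\).

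\emph{Lower tail.} On \(\{0<N<\mu/2\}\), we have \(N\ge1\), so \(\log(\mu/N)\le\log\mu\). Lemma~\ref{lem:binomial-chernoff} with \(\delta=1/2\) bounds the probability by \(e^{-\mu/8}\). The contribution is therefore at most \(e^{-\mu/8}\log\mu\lesssim1/\mu\) for \(\mu\ge1\), because \(\mu\log\mu\,e^{-\mu/8}\) is bounded on \([1,\infty)\).

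Summing the two pieces gives \(\E[\1\{N>0\}\log(\mu/N)]\lesssim1/\mu\). The only step needing care is the quadratic inequality for \(\log(1/u)\) on \([1/2,\infty)\), a routine one-variable calculus check; the rest follows directly from Lemma~\ref{lem:binomial-chernoff} and \(\mathrm{Var}(N)\le\mu\).
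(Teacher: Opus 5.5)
Your proof is correct and follows essentially the same route as the paper's: a quadratic upper bound on $-\log$ over the bulk, the mean-zero identity for the linear term, $\mathrm{Var}(N)\le\mu$ for the quadratic term, and the Chernoff lower-tail bound combined with $\log(\mu/N)\le\log\mu$. Your one-sided bulk event $\{N\ge\mu/2\}$ is a small streamlining of the paper's two-sided $\{\mu/2\le N\le 3\mu/2\}$. With it, the linear term is nonpositive by a sign argument, and you avoid the paper's Cauchy--Schwarz step with its extra Chernoff bound on the complement.
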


\begin{proof}
If \(0<\E N<1\), then \(N>0\) implies \(N\ge1>\E N\), proving the second claim.
Suppose \(\lambda:=\E N\ge1\), and let \(A:=\{\lambda/2\le N\le3\lambda/2\}\) and \(Z:=(N-\lambda)/\lambda\).
On \(A\), since \(|Z|\le1/2\) and \(-\log(1+Z)\le-Z+Z^2\),
\[
\E\left[\1_A\log\frac{\lambda}{N}\right]\le-\E[\1_AZ]+\E[Z^2].
\]
Since \(\E Z=0\), \(\E Z^2\le1/\lambda\), and Lemma~\ref{lem:binomial-chernoff} gives \(\Prob(A^c)\le2e^{-\lambda/10}\), the Cauchy--Schwarz inequality yields
\[
-\E[\1_AZ]=\E[\1_{A^c}Z]\le\E[\1_{A^c}|Z|]\le\sqrt{\frac{2}{\lambda}}e^{-\lambda/20}\lesssim\frac1\lambda.
\]
Hence \(\E[\1_A\log(\lambda/N)]\lesssim1/\lambda\).

On \(A^c\), the logarithm is positive only when \(0<N<\lambda/2\), where \(\log(\lambda/N)\le\log\lambda\).
Therefore, Lemma~\ref{lem:binomial-chernoff} gives
\[
\E\left[\1_{A^c}\1\{N>0\}\log\frac{\lambda}{N}\right]\le\log\lambda\,e^{-\lambda/8}\lesssim\frac1\lambda.
\]
Combining the two bounds proves the claim.
\end{proof}

\begin{lemma}[Weighted Caro--Wei bound]
\label{lem:weighted-caro-wei}
Let \(G=(V,E)\) be a finite simple graph with positive vertex weights \((w_v)_{v\in V}\), and let \(N[v]:=\{v\}\cup\{u\in V:\{u,v\}\in E\}\) denote the closed neighborhood of \(v\).
Let
\[
\alpha(G):=\max\{|I|:I\subseteq V,\ \{u,v\}\notin E\text{ for all distinct }u,v\in I\}
\]
denote the independence number of \(G\).
Then
\[
\sum_{v\in V}\frac{w_v}{\sum_{u\in N[v]}w_u}\le\alpha(G).
\]
\end{lemma}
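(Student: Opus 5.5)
The plan is the standard probabilistic proof of the Caro--Wei bound, with the uniform random ordering replaced by a weight-biased one. We draw independent exponential clocks $E_v\sim\mathrm{Exp}(w_v)$, one for each $v\in V$. Almost surely the $E_v$ are distinct, so they define a random ordering of $V$. Let
\[
I:=\{v\in V: E_v<E_u\text{ for all }u\in N[v]\setminus\{v\}\},
\]
the set of vertices that ring strictly first within their closed neighborhood. Each realization of $I$ is an independent set: if $u,v\in I$ were adjacent, then $u\in N[v]$ and $v\in N[u]$, which would force both $E_v<E_u$ and $E_u<E_v$. Hence $|I|\le\alpha(G)$ almost surely, and so $\E|I|\le\alpha(G)$.

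Next we compute $\E|I|$ by linearity. By the competing-exponentials property, the minimum of independent exponentials with rates $(w_u)_{u\in N[v]}$ is attained at $v$ with probability $w_v/\sum_{u\in N[v]}w_u$. Summing over $v$ gives
\[
\E|I|=\sum_{v\in V}\frac{w_v}{\sum_{u\in N[v]}w_u},
\]
and together with $\E|I|\le\alpha(G)$ this proves the claim.

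The only step needing care is the exponential-race identity $\Prob(E_v=\min_{u\in S}E_u)=w_v/\sum_{u\in S}w_u$. It follows from integrating: the probability equals $\int_0^\infty w_ve^{-w_vt}\prod_{u\in S\setminus\{v\}}e^{-w_ut}\,dt=w_v/\sum_{u\in S}w_u$. Ties occur with probability zero, so the strict inequalities in the definition of $I$ cause no issues. Positivity of the weights is what makes the exponential rates well defined.

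If one prefers to avoid randomness, an alternative is a greedy argument. Repeatedly pick the vertex $v$ that minimizes $\sum_{u\in N[v]}w_u/w_v$ in the current graph, add it to the independent set, and delete $N[v]$. One then shows that each such step reduces the left-hand sum by at most $1$. This is messier, so the probabilistic route is the one I will follow.
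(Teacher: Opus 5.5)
Your proof is correct and is essentially the paper's own argument: independent exponential clocks with rates $w_v$, the set of vertices whose clock rings first in their closed neighborhood is almost surely independent, and the exponential-race identity plus linearity of expectation give $\E|I|$ equal to the left-hand side. On the greedy alternative you mention but do not use, the selection rule should minimize $\sum_{u\in N[v]}w_u$ rather than $\sum_{u\in N[v]}w_u/w_v$: with your rule, a star whose center has weight $3$ and whose four leaves have unit weight loses $3/7+1>1$ from the sum in the first step.
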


\begin{proof}
Let \((T_v)_{v\in V}\) be independent exponential random variables with rates \((w_v)_{v\in V}\), and define \(I:=\{v\in V:T_v=\min_{u\in N[v]}T_u\}\).
Since the \(T_v\)'s are continuous, the minima are unique almost surely.
If \(\{u,v\}\in E\) and \(u,v\in I\), then \(T_u<T_v\) and \(T_v<T_u\), a contradiction, so \(I\) is independent almost surely.
By the exponential race identity,
\[
\Prob(v\in I)=\frac{w_v}{\sum_{u\in N[v]}w_u}.
\]
Therefore,
\[
\sum_{v\in V}\frac{w_v}{\sum_{u\in N[v]}w_u}=\E|I|\le\alpha(G).
\]
\end{proof}

\begin{lemma}
\label{lem:hamming-nearest-neighbor}
Let \(X_0^{(1)},\ldots,X_0^{(n)},Y\) be i.i.d. from \(P_0\), and define
\[
R(Y):=\min_{1\le i\le n}\dHam\left(Y,X_0^{(i)}\right).
\]
Then
\[
\E_{P_0}R(Y)\lesssim \frac{\mathfrak s_n(P_0)}{n}\left\{1+\frac{\log\!\left(eK^d/\mathfrak s_n(P_0)\right)}{\log K}\right\}.
\]
\end{lemma}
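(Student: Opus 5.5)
Throughout, write $p_{\vx}:=P_0(\vx)$ and $s:=\mathfrak s_n(P_0)$. The plan is to decompose $\E R(Y)$ according to whether $Y$ coincides with a training sample. Condition on $Y=\vy$. If $\vy$ appears among $X_0^{(1)},\ldots,X_0^{(n)}$ then $R(Y)=0$, and in every case $R(Y)\le d$. This gives the crude bound
\[
\E R(Y)\le\sum_{\vy}p_{\vy}(1-p_{\vy})^n\cdot d .
\]
Summing, $\sum_{\vy} p_{\vy}(1-p_{\vy})^n\lesssim \sum_{\vy}\min\{p_{\vy},1/n\}=s/n$. The resulting bound $ds/n$ is too weak when $d\gg \log(eK^d/s)/\log K$. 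So the real task is to show that an unseen $Y$ is typically within Hamming distance about $1+\log(eK^d/s)/\log K$ of some sample. The correct way to do this is to use a nearby training point rather than the trivial bound $d$.

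For a radius $r\ge0$, let $N_r(\vy)$ be the $P_0$-mass of the Hamming ball of radius $r$ around $\vy$. Then $\Prob(R(Y)>r\mid Y=\vy)=(1-N_r(\vy))^n\le \min\{1,1/(nN_r(\vy))\}$. This yields
\[
\E R(Y)=\sum_{r\ge0}\Prob(R(Y)>r)\le \sum_{r=0}^{d-1}\sum_{\vy} \frac{p_{\vy}}{\max\{1,\,n N_r(\vy)\}} .
\]
To bound the inner sum, build the graph $G_r$ on $\supp(P_0)$ in which $\vx\sim\vy$ iff $0<\dHam(\vx,\vy)\le r$. Give vertex $\vy$ the weight $w_{\vy}=p_{\vy}$, and add a fictitious cushion so that the denominator becomes $\max\{1/n,N_r(\vy)\}$. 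Concretely, split into states with $N_r(\vy)\ge1/n$ and the rest. The second group contributes at most the mass of states whose $r$-ball mass is below $1/n$. The first group is bounded via Lemma~\ref{lem:weighted-caro-wei} by $\alpha(G_r)/n$.

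Next I would bound $\alpha(G_r)$ in two ways. An independent set is an $(r+1)$-separated code in $[K]^d$, so the packing bound gives $\alpha(G_r)\le K^d/|B_{\lfloor r/2\rfloor}|\le K^{d-\lfloor r/2\rfloor}$, where $B_\rho$ denotes a Hamming ball of radius $\rho$. It is also at most the support size.

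The main obstacle is getting $s$ rather than the support size, i.e., controlling low-mass states properly. For this I would truncate: call $H=\{\vy: p_{\vy}\ge 1/n\}$, with $|H|\le s$, and note that light states have total mass at most $s/n$. I would then apply the Caro--Wei argument to an augmented weighted graph. Light states are grouped into weight budget: the weighted sum $\sum_{\vy}p_{\vy}/\max\{1/n,N_r(\vy)\}$ is at most $n\cdot(\text{light mass})$ plus the count of heavy states, giving $\lesssim s$. It is also at most $\alpha(G_r)\le K^{d-\lfloor r/2\rfloor}$. Hence
\[
\Prob(R(Y)>r)\lesssim \frac1n\min\{s,\ K^{d-r/2}\}\cdot(\text{const}).
\]
This is still weak, since it does not decay until $r\approx 2d$. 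So I would refine the bound on $\alpha$: at distance threshold $r$, use the covering of $[K]^d$ by balls of radius $r$, whose size is about $K^d/\binom{d}{r}(K-1)^r$.

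Then I sum over $r$. For $r\le r^\star:=\lceil\log(eK^d/s)/\log K\rceil$ I use the bound $s/n$ per term, contributing $(1+r^\star)s/n$. For $r>r^\star$ the packing term $K^{d}/(\binom dr (K-1)^r)$, times $1/n$, falls geometrically below $s/n$; the sum of this tail is $O(s/n)$. This gives the stated bound $\frac{s}{n}\{1+\log(eK^d/s)/\log K\}$. I expect the precise packing estimate in the tail, and the honest handling of light states in the Caro--Wei step, to be the delicate parts.
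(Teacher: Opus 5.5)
\textbf{The refined covering bound is false, and your final summation rests on it.} Your argument is sound up to $\Prob(R(Y)>r)\le\frac1n\min\{\mathfrak s_n(P_0),K^{d-\lfloor r/2\rfloor}\}$, which is exactly the inequality the paper derives. The light/heavy split is not needed, since $p_{\vy}/\max\{1,nN_r(\vy)\}$ is bounded both by $\min\{p_{\vy},1/n\}$ and by $p_{\vy}/(nN_r(\vy))$.

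The trouble is the bound you then substitute, $\alpha(G_r)\lesssim K^d/\bigl(\binom dr(K-1)^r\bigr)$. An independent set in $G_r$ is a code of minimum distance $r+1$. A ball of radius $r$ can contain many such codewords, because their pairwise distances may reach $2r$. So a covering by radius-$r$ balls does not control $\alpha(G_r)$. In fact $K^d/|B_r|$ is the Gilbert--Varshamov \emph{lower} bound on the size of such codes, not an upper bound.

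A concrete counterexample: take $K=2$, $r=1$, and $P_0$ uniform on the even-weight words. Then $G_1$ has no edges, so $\alpha(G_1)=2^{d-1}$, a factor $d/2$ above $2^d/d$. With $n=2^{d-1}$, all distances between even-weight words are even, so $\Prob(R(Y)>1)=\Prob(R(Y)>0)=(1-2^{1-d})^{n}\ge 1/4$. Your claimed tail bound is of order $1/d$.

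\textbf{No refinement is needed.} You worried that the packing bound does not decay until $r\approx 2d$. But write $s:=\mathfrak s_n(P_0)$: the target is itself of order $\frac{s}{n}(1+d-\log_K s)$, because $\log(eK^d/s)/\log K=d-\log_K s+1/\log K$.

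Pairing $r=2\ell$ with $r=2\ell+1$ gives $\E R(Y)\le\frac2n\sum_{\ell=0}^{d}\min\{s,K^{d-\ell}\}$. At most $1+\log(K^d/s)/\log K$ indices satisfy $K^{d-\ell}\ge s$. The remaining terms form a geometric series with ratio $1/K\le 1/2$, so their sum is at most $2s$. This is the paper's proof. When $s\le K^{d/2}$ the minimum is always $s$, but then your crude bound $ds/n$ is already within the target.

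Alternatively, replace your covering expression with the Singleton bound $\alpha(G_r)\le K^{d-r}$. It holds because projecting onto $d-r$ coordinates is injective on a code of minimum distance $r+1$. With it, your summation split at $r^\star$ goes through essentially verbatim.
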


\begin{proof}
For \(r=0,\ldots,d\), let \(B_r(\vy):=\{\vz\in[K]^d:\dHam(\vz,\vy)\le r\}\).
For \(r=0,\ldots,d-1\), independence gives
\[
\Prob_{P_0}\{R(Y)>r\}\le\sum_{\vy}P_0(\vy)e^{-nP_0(B_r(\vy))}.
\]
Since \(P_0(B_r(\vy))\ge P_0(\vy)\) and \(ue^{-u}\le\min\{u,1\}\),
\[
\sum_{\vy}P_0(\vy)e^{-nP_0(B_r(\vy))}\le\frac{\mathfrak s_n(P_0)}{n}.
\]
For a second bound, let \(G_r\) be the graph on \(\supp(P_0)\) in which two distinct states are adjacent if their Hamming distance is at most \(r\).
Since the closed neighborhood of \(\vy\) has total weight \(P_0(B_r(\vy))\), Lemma~\ref{lem:weighted-caro-wei} and \(ue^{-u}\le1\) give
\[
\sum_{\vy}P_0(\vy)e^{-nP_0(B_r(\vy))}\le\frac{\alpha(G_r)}{n}.
\]
If \(A\) is independent in \(G_r\), the balls \(B_{\lfloor r/2\rfloor}(\vx)\), \(\vx\in A\), are pairwise disjoint, and each contains at least \(K^{\lfloor r/2\rfloor}\) states.
Hence \(\alpha(G_r)\le K^{d-\lfloor r/2\rfloor}\), and therefore
\[
\Prob_{P_0}\{R(Y)>r\}\le\frac1n\min\left\{\mathfrak s_n(P_0),K^{d-\lfloor r/2\rfloor}\right\}.
\]
Summing the tail probabilities yields
\[
\E_{P_0}R(Y)\le\frac2n\sum_{\ell=0}^d\min\{\mathfrak s_n(P_0),K^{d-\ell}\}.
\]
There are at most \(1+\log(K^d/\mathfrak s_n(P_0))/\log K\) indices for which \(K^{d-\ell}\ge\mathfrak s_n(P_0)\), and their total contribution is at most
\[
\mathfrak s_n(P_0)\left\{1+\frac{\log(K^d/\mathfrak s_n(P_0))}{\log K}\right\}.
\]
The remaining terms form a geometric series with largest term smaller than \(\mathfrak s_n(P_0)\) and ratio \(K^{-1}\le1/2\), so their sum is at most \(2\mathfrak s_n(P_0)\).
Therefore,
\[
\sum_{\ell=0}^d\min\{\mathfrak s_n(P_0),K^{d-\ell}\}
\lesssim
\mathfrak s_n(P_0)\left\{1+\frac{\log\!\left(eK^d/\mathfrak s_n(P_0)\right)}{\log K}\right\},
\]
which proves the claim.
\end{proof}

\begin{lemma}
\label{lem:kl_empirical_1/n}
The empirical distribution at time \(1/n\), \(\widehat P_{1/n}\), satisfies
\[
\E_{P_0}\KL\left(P_0\Vert\widehat P_{1/n}\right)\lesssim \frac{\mathfrak s_n(P_0)\log\!\left(eK^d/\mathfrak s_n(P_0)\right)\log n}{n}.
\]
\end{lemma}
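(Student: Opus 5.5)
The plan is to write
\[
\KL\bigl(P_0\Vert\widehat P_{1/n}\bigr)=\sum_{\vy}p_{\vy}\log\frac{p_{\vy}}{\widehat P_{1/n}(\vy)},
\qquad
\widehat P_{1/n}(\vy)=\frac1n\sum_{i=1}^n\mP_{1/n}\bigl(X_0^{(i)},\vy\bigr),
\]
and lower bound $\widehat P_{1/n}(\vy)$ pointwise in terms of the sample. Let $N_{\vy}$ be the count of $\vy$ in the sample and $R(\vy):=\min_i\dHam(\vy,X_0^{(i)})$. With $t=1/n$, the diagonal entry of $\mP_t^{\mathrm{coord}}$ is at least $e^{-t}$ and each off-diagonal entry equals $(1-e^{-t})/K\ge 1/(2nK)$ for $n\ge2$. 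Keeping only the sample point(s) nearest to $\vy$ therefore gives
\[
\widehat P_{1/n}(\vy)\ \ge\ \frac{\max\{N_{\vy},1\}}{n}\,e^{-d/n}\,(2nK)^{-R(\vy)} .
\]
When $N_{\vy}>0$ we have $R(\vy)=0$, and when $N_{\vy}=0$ we use a single nearest neighbor. Hence
\[
\log\frac{p_{\vy}}{\widehat P_{1/n}(\vy)}\le \frac dn+\1\{N_{\vy}>0\}\log\frac{np_{\vy}}{N_{\vy}}+\1\{N_{\vy}=0\}\Bigl[\log(np_{\vy})+R(\vy)\log(2nK)\Bigr].
\]
We then multiply by $p_{\vy}$, sum over $\vy$, take expectations, and bound the resulting terms separately.

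\emph{Term $d/n$.} This contributes $d/n$. The map $s\mapsto s\log(eA/s)$ is nondecreasing on $[1,A]$, so for $\mathfrak s_n\ge1$ we get $\mathfrak s_n\log(eK^d/\mathfrak s_n)\ge\log(eK^d)\ge d$. Thus this term is absorbed into the target rate.

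\emph{Observed states.} Here $N_{\vy}\sim\mathrm{Bin}(n,p_{\vy})$. Lemma~\ref{lem:binomial-log-fluctuation} gives $\E[\1\{N_{\vy}>0\}\log(np_{\vy}/N_{\vy})]\lesssim 1/(np_{\vy})$ when $np_{\vy}\ge1$, and the quantity is $\le0$ otherwise. Summing $p_{\vy}\cdot 1/(np_{\vy})$ over the at most $\mathfrak s_n(P_0)$ states with $p_{\vy}\ge1/n$ yields $\lesssim\mathfrak s_n(P_0)/n$.

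\emph{Unobserved states, first part.} Using $\log(np_{\vy})\le\log n$, this part is at most $\log n$ times the expected missing mass $\sum_{\vy}p_{\vy}(1-p_{\vy})^n$. Since $u(1-u)^n\le \min\{u,1/n\}$, the missing mass is $\le\mathfrak s_n(P_0)/n$.

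\emph{Unobserved states, second part.} We have $\E\sum_{\vy}p_{\vy}\1\{N_{\vy}=0\}R(\vy)\le\E\sum_{\vy}p_{\vy}R(\vy)=\E R(Y)$ with $Y\sim P_0$ a fresh draw. Lemma~\ref{lem:hamming-nearest-neighbor} then bounds this term by
\[
\frac{\mathfrak s_n}{n}\Bigl\{\log(2nK)+\frac{\log(2nK)}{\log K}\log\frac{eK^d}{\mathfrak s_n}\Bigr\}.
\]
It remains to compare this with the target, which takes some care:
\begin{itemize}[leftmargin=*]
\item Since $\log(2nK)/\log K\lesssim\log n$ for $K\ge2$, the second summand is fine.
\item For the first summand, $\log(2nK)\lesssim\log n+\log K$ and $\log K\le\log(eK^d/\mathfrak s_n)+\log\mathfrak s_n\le\log(eK^d/\mathfrak s_n)+\log n$. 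Both $\log n$ and $\log(eK^d/\mathfrak s_n)$ are $\gtrsim1$, so $\log(2nK)\lesssim\log(eK^d/\mathfrak s_n)\log n$.
\end{itemize}

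The main obstacle is this last term. One must make sure the nearest-neighbor smoothing cost $R\log(nK)$ is charged only at rate $\E R(Y)$, which is what the Caro--Wei-based Lemma~\ref{lem:hamming-nearest-neighbor} provides, and that the crude stay-probability bound $e^{-d/n}$ does not introduce a term exceeding the rate. Both points are handled as above.
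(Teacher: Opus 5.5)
Your proposal is correct and follows essentially the same route as the paper. The shared steps are the nearest-neighbor lower bound on $\widehat P_{1/n}(\vy)$, the split into observed and unobserved states, Lemma~\ref{lem:binomial-log-fluctuation} for the observed part, Lemma~\ref{lem:hamming-nearest-neighbor} for the $R(\vy)\log(2nK)$ part, and the same final comparison of logarithms. The differences are minor. You bound the $\log(np_{\vy})$ term on unobserved states by $\log n$ times the missing mass, whereas the paper gets $\mathfrak s_n(P_0)/n$; this is harmless because the target rate already carries $\log n$. Also, strictly, only $\log(eK^d)\ge d\log 2$ holds when $K=2$ (not $\ge d$), but that is all the $\lesssim$ requires.
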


\begin{proof}
For \(\vy\in[K]^d\), let \(N_{\vy}:=\sum_{i=1}^n\1\{X_0^{(i)}=\vy\}\) and \(R_{\vy}:=\min_{1\le i\le n}\dHam(\vy,X_0^{(i)})\).
By coordinatewise independence, choosing an observation nearest to \(\vy\), and using the \(N_{\vy}\) observations equal to \(\vy\) when \(N_{\vy}>0\), give
\[
\widehat P_{1/n}(\vy)\ge\frac{e^{-d/n}}{n}\left(\frac{1-e^{-1/n}}{K}\right)^{R_{\vy}},
\qquad
N_{\vy}>0\ \Longrightarrow\ \widehat P_{1/n}(\vy)\ge\frac{N_{\vy}}{n}e^{-d/n}.
\]

We decompose the KL divergence according to whether \(\vy\) appears in the sample.
For the unobserved states, the first bound and \(1-e^{-1/n}\ge1/(2n)\) give
\[
\sum_{\vy:N_{\vy}=0}P_0(\vy)\log\frac{P_0(\vy)}{\widehat P_{1/n}(\vy)}
\le \sum_{\vy:N_{\vy}=0}P_0(\vy)[\log(nP_0(\vy))]_+ +\frac dn+\log(2Kn)\sum_{\vy:N_{\vy}=0}P_0(\vy)R_{\vy}.
\]
Since \(\Prob_{P_0}\{N_{\vy}=0\}=(1-P_0(\vy))^n\),
\[
\E_{P_0}\sum_{\vy:N_{\vy}=0}P_0(\vy)[\log(nP_0(\vy))]_+
\le\frac1n\sum_{\vy:nP_0(\vy)\ge1}nP_0(\vy)e^{-nP_0(\vy)}\log(nP_0(\vy))
\lesssim\frac{\mathfrak s_n(P_0)}n.
\]
Moreover, Lemma~\ref{lem:hamming-nearest-neighbor} gives
\[
\E_{P_0}\sum_{\vy}P_0(\vy)R_{\vy}\lesssim\frac{\mathfrak s_n(P_0)}n\left\{1+\frac{\log\!\left(eK^d/\mathfrak s_n(P_0)\right)}{\log K}\right\}.
\]

For the observed states, the second lower bound gives
\[
\sum_{\vy:N_{\vy}>0}P_0(\vy)\log\frac{P_0(\vy)}{\widehat P_{1/n}(\vy)}
\le\frac dn+\sum_{\vy:N_{\vy}>0}P_0(\vy)\log\frac{nP_0(\vy)}{N_{\vy}}.
\]
If \(nP_0(\vy)<1\), the corresponding summand is nonpositive.
If \(nP_0(\vy)\ge1\), Lemma~\ref{lem:binomial-log-fluctuation} gives its expectation at most \(C/n\).
Since at most \(\mathfrak s_n(P_0)\) states satisfy \(nP_0(\vy)\ge1\), the expected observed contribution is \(\lesssim(d+\mathfrak s_n(P_0))/n\).

Combining the two parts,
\[
\E_{P_0}\KL(P_0\Vert\widehat P_{1/n})\lesssim\frac{d+\mathfrak s_n(P_0)}n+\frac{\mathfrak s_n(P_0)\log(Kn)}n\left\{1+\frac{\log\!\left(eK^d/\mathfrak s_n(P_0)\right)}{\log K}\right\}.
\]
Since \(u\mapsto u\log(eK^d/u)\) is increasing on \([1,K^d]\), we have \(d\lesssim\mathfrak s_n(P_0)\log(eK^d/\mathfrak s_n(P_0))\).
Also, using \(\mathfrak s_n(P_0)\le n\) and \(K,n\ge2\),
\[
\log(Kn)\left\{1+\frac{\log\!\left(eK^d/\mathfrak s_n(P_0)\right)}{\log K}\right\}\lesssim \log\!\left(\frac{eK^d}{\mathfrak s_n(P_0)}\right)\log n.
\]
The claimed bound follows.
\end{proof}

Since KL divergence does not satisfy a triangle inequality, we use Lemma~\ref{lem:local-kl-transfer} instead.
Its proof relies on the entropy variational inequality in Lemma~\ref{lem:entropy-variational}.

\begin{lemma}[Entropy variational inequality]
\label{lem:entropy-variational}
Let \(\mu\) and \(\nu\) be probability distributions on a finite state space.
Then, for every real-valued function \(f\),
\[
\E_\mu f\le\KL(\mu\Vert\nu)+\log\E_\nu e^f.
\]
\end{lemma}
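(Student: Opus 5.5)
The inequality is the Donsker--Varadhan (Gibbs) variational bound on a finite space, and I would prove it directly by a change of measure followed by Jensen's inequality. If \(\KL(\mu\Vert\nu)=\infty\) there is nothing to prove, since the right-hand side is \(+\infty\): the term \(\log\E_\nu e^f\) is finite because \(e^f>0\) and the space is finite. So assume \(\KL(\mu\Vert\nu)<\infty\). This means \(\mu(x)>0\) implies \(\nu(x)>0\), so on \(S:=\supp(\mu)\) the likelihood ratio \(\nu(x)/\mu(x)\) is well defined and positive.

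Next I write
\[
\E_\mu f-\KL(\mu\Vert\nu)=\sum_{x\in S}\mu(x)\left(f(x)+\log\frac{\nu(x)}{\mu(x)}\right)=\sum_{x\in S}\mu(x)\log\frac{\nu(x)e^{f(x)}}{\mu(x)}.
\]
This is an expectation under \(\mu\) of the logarithm of a positive quantity. Concavity of \(\log\) (Jensen's inequality) bounds it by
\[
\log\sum_{x\in S}\nu(x)e^{f(x)}\le\log\sum_{x}\nu(x)e^{f(x)}=\log\E_\nu e^f.
\]
The last step uses only that \(e^f>0\), so dropping the restriction to \(S\) can only increase the sum. Rearranging gives the claim.

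There is no real obstacle here. The only points needing care are the convention \(0\log(0/q)=0\), which justifies summing over \(S\) only, and the infinite-KL case. For completeness, an alternative route is nonnegativity of \(\KL(\mu\Vert\nu^f)\) for the Gibbs tilt \(\nu^f(x):=\nu(x)e^{f(x)}/\E_\nu e^f\). Expanding that KL gives exactly \(\KL(\mu\Vert\nu)-\E_\mu f+\log\E_\nu e^f\ge0\), which is the same inequality.
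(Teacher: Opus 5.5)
Your proof is correct, including the Jensen step on $\supp(\mu)$ and the handling of the infinite-KL case. The paper uses exactly your alternative route, nonnegativity of $\KL(\mu\Vert\nu_f)$ for the tilt $\nu_f\propto\nu e^f$; your main Jensen argument is that same nonnegativity written out directly, so the two proofs are essentially the same.
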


\begin{proof}
The claim is immediate if \(\KL(\mu\Vert\nu)=\infty\).
Otherwise, define \(\nu_f(\vx):=\nu(\vx)e^{f(\vx)}/\E_\nu e^f\).
Then
\[
0\le\KL(\mu\Vert\nu_f)=\KL(\mu\Vert\nu)-\E_\mu f+\log\E_\nu e^f,
\]
which proves the claim.
\end{proof}

\begin{lemma}
\label{lem:local-kl-transfer}
Let \(\mu,\nu,\xi\) be probability distributions on a finite state space.
Suppose that, for some \(B\ge1\) and \(\eta<\infty\),
\[
\nu(\vx)\ge B^{-1}\quad\text{for every }\vx,
\qquad
\KL(\nu\Vert\xi)\le\frac{\eta}{B}.
\]
Then there exists \(C_\eta<\infty\), depending only on \(\eta\), such that
\[
\KL(\mu\Vert\xi)\le2\KL(\mu\Vert\nu)+C_\eta\KL(\nu\Vert\xi).
\]
\end{lemma}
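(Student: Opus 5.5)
The plan is to split off the $\mu$-versus-$\nu$ part of the divergence, reduce the remainder to a $\chi^2$-type quantity through Lemma~\ref{lem:entropy-variational}, and use the two hypotheses to get a pointwise upper bound on $\nu/\xi$. That pointwise bound makes $\chi^2$ comparable to KL.

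\textbf{Step 1 (reduction).} We may assume $\KL(\mu\Vert\nu)<\infty$, so that $\supp(\mu)\subseteq\supp(\nu)$. Since $\KL(\nu\Vert\xi)<\infty$ and $\nu>0$ everywhere, also $\xi>0$ everywhere. Then
\[
\KL(\mu\Vert\xi)=\KL(\mu\Vert\nu)+\E_\mu\log\frac{\nu}{\xi}.
\]
Apply Lemma~\ref{lem:entropy-variational} with $f=\log(\nu/\xi)$. This gives
\[
\E_\mu f\le \KL(\mu\Vert\nu)+\log\sum_{\vx}\frac{\nu(\vx)^2}{\xi(\vx)}=\KL(\mu\Vert\nu)+\log\bigl(1+\chi^2(\nu\Vert\xi)\bigr)\le \KL(\mu\Vert\nu)+\chi^2(\nu\Vert\xi).
\]
Hence
\[
\KL(\mu\Vert\xi)\le 2\KL(\mu\Vert\nu)+\chi^2(\nu\Vert\xi),
\]
and it remains to show $\chi^2(\nu\Vert\xi)\le C_\eta\KL(\nu\Vert\xi)$.

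\textbf{Step 2 (pointwise ratio bound; the main obstacle).} We show that $\nu(\vx)/\xi(\vx)\le c_\eta$ for every $\vx$. Fix $\vx$ and write $p=\nu(\vx)\ge B^{-1}$ and $q=\xi(\vx)$. By data processing under the map $\vy\mapsto\1\{\vy=\vx\}$,
\[
\frac{\eta}{B}\ge\KL(\nu\Vert\xi)\ge p\log\frac pq+(1-p)\log\frac{1-p}{1-q}.
\]
The second term is at least $(1-p)\log(1-p)\ge -p$, using $\log(1-q)\le 0$ and $(1-p)\log(1-p)\ge -p$. So $p(\log(p/q)-1)\le \eta/B\le \eta p$, which gives $p/q\le e^{1+\eta}=:c_\eta$.

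This is exactly where both hypotheses are used together. A TV/Pinsker argument would only give $|p-q|\lesssim\sqrt{\eta/B}$, which is useless relative to $p\sim 1/B$. The lower bound $\nu\ge B^{-1}$ is what lets the budget $\eta/B$ be compared with $p$ itself.

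\textbf{Step 3 ($\chi^2$ versus KL on a bounded ratio).} Write $u=\nu/\xi\in(0,c_\eta]$. Then
\[
\chi^2(\nu\Vert\xi)=\sum_{\vx}\xi(\vx)(u-1)^2,
\qquad
\KL(\nu\Vert\xi)=\sum_{\vx}\xi(\vx)\phi(u),\quad \phi(u)=u\log u-u+1.
\]
The second identity holds because the linear terms $-u+1$ sum to zero against $\xi$. The function $\phi$ is nonnegative, vanishes only at $u=1$, and satisfies $\phi(u)\sim (u-1)^2/2$ near $1$ and $\phi(0)=1$. Hence $g(u):=(u-1)^2/\phi(u)$ is continuous on $[0,c_\eta]$ and bounded by some $C_\eta$ depending only on $\eta$. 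Therefore $\chi^2(\nu\Vert\xi)\le C_\eta\KL(\nu\Vert\xi)$.

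Combining this with Step 1 proves the lemma.
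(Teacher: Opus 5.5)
Your proof is correct and follows essentially the same route as the paper. Both apply Lemma~\ref{lem:entropy-variational} with $f=\log(\nu/\xi)$ to reduce to $\log(1+\chi^2(\nu,\xi))$, derive a pointwise bound on $\nu/\xi$ from $\nu\ge B^{-1}$ and $\KL(\nu\Vert\xi)\le\eta/B$, and then compare $\chi^2$ with KL by continuity on a compact ratio range. The only cosmetic difference is how the ratio bound is obtained: you use data processing to a binary partition, while the paper reads a two-sided bound on $\xi/\nu$ directly off the single-state Bregman term $\DBr(\nu(\vx),\xi(\vx))\le\KL(\nu\Vert\xi)$; your one-sided bound suffices.
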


\begin{proof}
Let \(h(u):=u-1-\log u\) and \(u_{\vx}:=\xi(\vx)/\nu(\vx)\).
Since \(\nu\) has full support and \(\KL(\nu\Vert\xi)<\infty\), we also have \(\xi(\vx)>0\) for every \(\vx\).
Moreover,
\[
h(u_{\vx})=\frac{\DBr(\nu(\vx),\xi(\vx))}{\nu(\vx)}\le B\KL(\nu\Vert\xi)\le\eta.
\]
The set \(\{u>0:h(u)\le\eta\}\) is compact, and \(u\mapsto(1-u)^2/(uh(u))\) extends continuously to \(u=1\) with value \(2\).
Hence this function is bounded on the set by a constant depending only on \(\eta\), and therefore
\[
\chi^2(\nu,\xi):=\sum_{\vx}\frac{(\nu(\vx)-\xi(\vx))^2}{\xi(\vx)}
\le C_\eta\KL(\nu\Vert\xi).
\]
Applying Lemma~\ref{lem:entropy-variational} with \(f(\vx):=\log(\nu(\vx)/\xi(\vx))\) gives
\[
\KL(\mu\Vert\xi)\le2\KL(\mu\Vert\nu)+\log\!\left(1+\chi^2(\nu,\xi)\right)
\le2\KL(\mu\Vert\nu)+C_\eta\KL(\nu\Vert\xi).
\]
\end{proof}

\subsection{Score-Entropy Identities and Sampler Guarantees}

The following lemma is the discrete diffusion score-entropy loss analogue of the standard denoising identity.
\begin{lemma}
\label{lem:se_dse_equivalent}
Let \(\mu\) be a distribution on \([K]^d\), and define \(\mu_t:=\mu\mP_t\) and \(\sigma_t^{\star,\mu}(\vy,\vx):=\mu_t(\vy)/\mu_t(\vx)\).
For every \(t>0\), there exists a finite quantity \(C_t(\mu)\), independent of the candidate score \(\sigma\), such that
\[
\mathcal L_t^{\mathrm{DSE}}(\sigma;\mu)=\mathcal L_t^{\mathrm{SE}}(\sigma;\mu)+C_t(\mu).
\]
Consequently, for any positive score functions \(\sigma\) and \(\widetilde\sigma\),
\[
\mathcal L_{\mathcal T}^{\mathrm{DSE}}(\sigma;\mu)-\mathcal L_{\mathcal T}^{\mathrm{DSE}}(\widetilde\sigma;\mu)
=
\mathcal L_{\mathcal T}^{\mathrm{SE}}(\sigma;\mu)-\mathcal L_{\mathcal T}^{\mathrm{SE}}(\widetilde\sigma;\mu).
\]
\end{lemma}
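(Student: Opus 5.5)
The plan is to expand both losses pointwise in the pair $(X_t,\vy)$ and isolate the only term in which the two differ. Fix $t>0$. For every $\vx$ and every $\vy$ with $\mQ(\vy,\vx)>0$, i.e.\ $\dHam(\vx,\vy)=1$, Lemma~\ref{lem:upper_bounds} gives $\mu_t(\vx)\ge B_0(t)^{-1}>0$, so $\sigma_t^{\star,\mu}(\vy,\vx)$ is well defined, positive and finite. Using $\DBr(u,v)=u\log u-u+v-u\log v$, we write
\[
\mathcal L_t^{\mathrm{SE}}(\sigma;\mu)=\sum_{\vx}\mu_t(\vx)\sum_{\vy:\mQ(\vy,\vx)>0}\mQ(\vy,\vx)\Bigl[\sigma_t(\vy,\vx)-\sigma_t^{\star,\mu}(\vy,\vx)\log\sigma_t(\vy,\vx)\Bigr]+C^{(1)}_t(\mu),
\]
where $C^{(1)}_t(\mu):=\sum_{\vx}\mu_t(\vx)\sum_{\vy}\mQ(\vy,\vx)\bigl(\sigma^\star\log\sigma^\star-\sigma^\star\bigr)$ is finite and does not depend on $\sigma$. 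All sums here are finite because the state space is finite and $\sigma$ is positive.

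Next, expand $\mathcal L_t^{\mathrm{DSE}}(\sigma;\mu)$ by conditioning on $X_t=\vx$. The term $\E[\sum_{\vy}\mQ(\vy,X_t)\sigma_t(\vy,X_t)]$ has the same form as in the SE expansion, since $X_t\sim\mu_t$. The key step is the cross term. For fixed $\vx,\vy$,
\[
\E\bigl[\1\{X_t=\vx\}\,\sigma^\star_{t\mid0}(\vy,\vx\mid X_0)\bigr]=\sum_{\vx_0}\mu(\vx_0)\mP_t(\vx_0,\vx)\frac{\mP_t(\vx_0,\vy)}{\mP_t(\vx_0,\vx)}=\mu_t(\vy)=\mu_t(\vx)\,\sigma_t^{\star,\mu}(\vy,\vx).
\]
The cancellation in the middle is legitimate because every entry of $\mP_t$ is at least $((1-e^{-t})/K)^d>0$. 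Multiplying by the deterministic factor $\mQ(\vy,\vx)\log\sigma_t(\vy,\vx)$ and summing over $\vx$ and $\vy$ shows that the DSE cross term equals the SE cross term. Hence
\[
\mathcal L_t^{\mathrm{DSE}}(\sigma;\mu)=\mathcal L_t^{\mathrm{SE}}(\sigma;\mu)+C_t(\mu),\qquad C_t(\mu):=-C^{(1)}_t(\mu).
\]

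For the grid statement, sum this identity over $k\in[N]$ with weights $t_k-t_{k-1}$. This gives $\mathcal L_{\mathcal T}^{\mathrm{DSE}}=\mathcal L_{\mathcal T}^{\mathrm{SE}}+\sum_k(t_k-t_{k-1})C_{t_k}(\mu)$, where the additive term is independent of $\sigma$. Subtracting the same identity for $\widetilde\sigma$ removes it and yields the claimed difference identity. Every grid time satisfies $t_k\ge t_1>0$, so the pointwise identity applies at each one.

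I expect no serious obstacle. The only care needed is to confirm that all quantities are finite, which follows from the strict positivity of the kernel $\mP_t$ for $t>0$ and from the finiteness of the sums.
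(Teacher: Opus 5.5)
Your proposal is correct and follows essentially the same route as the paper: you expand the Bregman divergence in the SE loss and reduce the DSE cross term to the SE cross term via $\E[\1\{X_t=\vx\}\sigma^\star_{t\mid0}(\vy,\vx\mid X_0)]=\mu_t(\vy)$. That identity is the paper's Bayes-rule step $\E[\sigma^\star_{t\mid0}(\vy,\vx\mid X_0)\mid X_t=\vx]=\sigma_t^{\star,\mu}(\vy,\vx)$, written as a joint expectation instead of a conditional one. Your constant $-C^{(1)}_t(\mu)=\E_{X_t\sim\mu_t}\sum_{\vy:\mQ(\vy,X_t)>0}\mQ(\vy,X_t)\sigma_t^{\star,\mu}(\vy,X_t)\bigl(1-\log\sigma_t^{\star,\mu}(\vy,X_t)\bigr)$ coincides with the paper's $C_t(\mu)$, and the grid statement follows by the same weighted summation and cancellation.
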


\begin{proof}
Fix \(t>0\), let \(X_0\sim\mu\), and conditionally on \(X_0\), let \(X_t\sim\mP_t(X_0,\cdot)\).
For fixed \(\vx,\vy\), the conditional score satisfies
\[
\sigma_{t\mid0}^{\star}(\vy,\vx\mid X_0)=\frac{\mP_t(X_0,\vy)}{\mP_t(X_0,\vx)}.
\]
Conditioning on \(X_t=\vx\) and using Bayes' rule,
\[
\Prob(X_0=\vz\mid X_t=\vx)=\frac{\mu(\vz)\mP_t(\vz,\vx)}{\mu_t(\vx)}.
\]
Therefore,
\[
\begin{aligned}
\E\!\left[\sigma_{t\mid0}^{\star}(\vy,\vx\mid X_0)\mid X_t=\vx\right]
&=\sum_{\vz}\frac{\mP_t(\vz,\vy)}{\mP_t(\vz,\vx)}
\frac{\mu(\vz)\mP_t(\vz,\vx)}{\mu_t(\vx)}\\
&=\frac{\mu_t(\vy)}{\mu_t(\vx)}
=\sigma_t^{\star,\mu}(\vy,\vx).
\end{aligned}
\]
Conditioning the DSE loss on \(X_t\) and expanding the Bregman divergence in the SE loss therefore gives
\[
\mathcal L_t^{\mathrm{DSE}}(\sigma;\mu)=\mathcal L_t^{\mathrm{SE}}(\sigma;\mu)+C_t(\mu),
\]
where
\[
C_t(\mu):=\E_{X_t\sim\mu_t}\sum_{\vy:\mQ(\vy,X_t)>0}\mQ(\vy,X_t)\sigma_t^{\star,\mu}(\vy,X_t)\bigl(1-\log\sigma_t^{\star,\mu}(\vy,X_t)\bigr).
\]
Since \(t>0\), all forward probabilities are positive, so \(C_t(\mu)\) is finite.
Applying the identity to both \(\sigma\) and \(\widetilde\sigma\), subtracting to cancel \(C_t(\mu)\), and summing over the grid with weights \(t_k-t_{k-1}\) proves the second statement.
\end{proof}

We next record \citet[Theorem~1]{dmitriev_efficient_2026} in our notation.
As noted there, the same guarantee applies under early stopping.

\begin{theorem}[$\tau$-leaping sampler guarantee]
\label{thm:tau_leaping_sampler}
For a fixed grid \(\mathcal T=\{\delta=t_0<t_1<\cdots<t_N=T\}\), let \(\Delta:=\max_{1\le k\le N}(t_k-t_{k-1})\) and suppose \(\Delta\lesssim1\).
Then the \(\tau\)-leaping sampler initialized from \(\pi\) satisfies
\[
\KL\left(P_\delta\Vert Q_\delta^{\tau,\mathcal T,\sigma}\right)
\lesssim
\mathcal L_{\mathcal T}^{\mathrm{SE}}(\sigma;P_0)
+e^{-T}d\log K
+\Delta d\log\left(\frac{K}{\Delta}\right).
\]
\end{theorem}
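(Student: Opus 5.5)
The plan is to compare path measures of continuous-time Markov chains. Let $\overleftarrow{\mathbb P}$ be the law on $[0,T-\delta]$ (reverse time) of the exact reverse chain started from $P_T$, and let $\mathbb Q^\tau$ be the law of the $\tau$-leaping chain started from $\pi$. On each interval $[t_{k-1},t_k]$ I would realize $\tau$-leaping as a chain whose jump rates are frozen at the left endpoint in reverse time (forward time $t_k$) and at the state occupied there. Coordinates jump independently, with intensities $K^{-1}\sigma_{t_k}(\vy,X_{t_k})$ toward the single-coordinate substitutions. By data processing applied to the time-$\delta$ marginal, and then the chain rule,
\[
\KL\bigl(P_\delta\Vert Q_\delta^{\tau,\mathcal T,\sigma}\bigr)\le\KL(P_T\Vert\pi)+\E_{\overleftarrow{\mathbb P}}\log\frac{d\overleftarrow{\mathbb P}}{d\mathbb Q^\tau}\Big|_{\text{given }X_T}.
\]
For the first term, $\KL(P_0\Vert\pi)\le d\log K$. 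Tensorization together with the one-coordinate (modified) log-Sobolev inequality for the complete-graph generator $\mQ^{\mathrm{coord}}$ gives exponential entropy decay, hence $\KL(P_T\Vert\pi)\le e^{-T}d\log K$.

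For the path term I would use the Girsanov formula for jump processes. It expresses the conditional path KL as
\[
\sum_{k=1}^N\int_{t_{k-1}}^{t_k}\E\sum_{\vy:\dHam(\vy,X_t)=1}K^{-1}\,\DBr\bigl(\sigma^\star_t(\vy,X_t),\,\sigma_{t_k}(\vy,X_{t_k})\bigr)\,dt,
\]
up to a correction for simultaneous multi-coordinate jumps under $\tau$-leaping. That correction has probability $O(d^2\Delta^2)$ per step, and its contribution is absorbed into the discretization term. I would then split the integrand into three pieces: (i) the grid score error $\DBr(\sigma^\star_{t_k}(\cdot,X_{t_k}),\sigma_{t_k}(\cdot,X_{t_k}))$, which after integration gives exactly $\mathcal L^{\mathrm{SE}}_{\mathcal T}(\sigma;P_0)$; (ii) a time-mismatch term comparing $\sigma^\star_t$ with $\sigma^\star_{t_k}$ at a fixed state; and (iii) a state-mismatch term on the event that $X_t\ne X_{t_k}$. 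Since $\DBr$ obeys no triangle inequality, I would combine these using the identity
\[
\DBr(a,c)=\DBr(a,b)+\DBr(b,c)+(a-b)\log(b/c)
\]
together with Cauchy--Schwarz on the cross term. Cross terms are then controlled by the square roots of the pieces, at the cost of constants.

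For (ii), I would use that $t\mapsto\log P_t(\vx)$ is Lipschitz with constant $\lesssim d$, by Kolmogorov's equation and the ratio bounds of Lemma~\ref{lem:upper_bounds}, so the time mismatch over an interval of length $\Delta$ costs $O(d\Delta)$. For (iii), the probability of any jump in an interval is $\lesssim d\Delta\,B_1(t)$. When a jump occurs, the score logarithm is bounded by $\log B_1(t)\lesssim\log(K/t)$ via Lemma~\ref{lem:upper_bounds}, and this is the source of the factor $\log(K/\Delta)$. Integrating over $t\in[\delta,T]$ then yields $\Delta d\log(K/\Delta)$. The main obstacle is (iii) near small times, where $B_1(t)\sim K/t$ blows up. A crude bound gives $\int dB_1(t)\Delta\,dt$, which diverges logarithmically. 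The fix I would try first is to avoid multiplying the jump probability by $B_1$ pointwise, and instead use the expected reverse jump intensity $\E\sum_{\vy}K^{-1}\sigma^\star_t(\vy,X_t)$. That expectation equals the forward exit rate $d(K-1)/K$ under stationarity of the time-reversal flux, so it stays bounded by $d$. The only remaining cost is the logarithmic size of the Bregman divergence, which yields the stated $\log(K/\Delta)$ rather than a worse power.
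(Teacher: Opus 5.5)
The paper gives no proof of this statement. It restates Theorem~1 of Dmitriev et al.\ (2026), so a self-contained argument has to reproduce that reference's discretization analysis, and your sketch has gaps exactly there. The parts that are fine are the reduction via data processing and the chain rule, and the initialization bound via the modified log-Sobolev inequality (constant $1$ for $\mQ^{\mathrm{coord}}$, tensorized).

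\textbf{The Girsanov step fails as you set it up.} With rates frozen at $X_{t_k}$ and jumps directed to the substitutions $\vy$ of $X_{t_k}$, some path events have positive probability under the true reverse chain but zero probability under $\mathbb Q^\tau$. If a jump only resets coordinate $i$ to $y^i$, a coordinate that has left $X_{t_k}^i$ can never return to it within the interval. If a jump moves to $\vy$ itself, the chain can never reach Hamming distance two from $X_{t_k}$. Either way $\overleftarrow{\mathbb P}\not\ll\mathbb Q^\tau$, so the path KL is $+\infty$. An event of probability $O(d^2\Delta^2)$ on which the reference measure vanishes cannot be absorbed as a correction term. Your integrand shows the same defect: it pairs $\sigma^\star_t(\vy,X_t)$ with $\sigma_{t_k}(\vy,X_{t_k})$, although $\vy$ need not be a neighbour of $X_{t_k}$. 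You need a common mark space, for example pairs (coordinate, shift modulo $K$), on which every single-coordinate move of the true chain has positive $\tau$-leaping intensity. You must then check that this realization has the one-step kernel of the sampler you are analyzing, or else compare one-step kernels directly.

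\textbf{The quantitative steps also do not go through.}

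In (ii), Kolmogorov's equation gives $\partial_t\log P_t(\vx)=g_t(\vx)-d(K-1)/K$, where $g_t(\vx):=K^{-1}\sum_{\vy:\dHam(\vy,\vx)=1}\sigma^\star_t(\vy,\vx)$. Lemma~\ref{lem:upper_bounds} bounds this only by $dB_1(t)\asymp dK/t$ as $t\to0$, and it really is of order $d/t$ there. For $P_0=\delta_{\vz}$ and $\dHam(\vx,\vz)=m$, one has $\partial_t\log P_t(\vx)\approx m/t$ as $t\to0$. So there is no Lipschitz constant $\lesssim d$. When $\delta=0$, as in Corollary~\ref{cor:tv}, the true score at a fixed pair is unbounded on the first interval and cannot be frozen pointwise. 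Even for $t\gtrsim1$, feeding an $O(d\Delta)$ log-score error into the Bregman terms and summing over the $d(K-1)$ neighbours and the grid gives order $d^3\Delta^2T$. The assumption $\Delta\lesssim1$ does not make this $\lesssim\Delta d\log(K/\Delta)$.

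In (iii), the identity $\E g_t(X_t)=d(K-1)/K$ is correct, but it only lowers the jump probability to $\lesssim d\Delta$. On a jump event the Bregman mass is summed over all $d(K-1)$ marks. For non-product $P_0$ all of these intensities change, so the natural bound on the integrand is $g_t(X_t)\log B_1(t)$, not $\log B_1(t)$. That gives $d^2\Delta\log B_1(t)$ per unit time. Moreover, $\DBr(a,b)=a\log(a/b)-a+b$ is not logarithmically small when $b\gg a$. The linear terms cancel only in total expectation, not on the event $\{X_t\neq X_{t_k}\}$.

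The final integration does not produce the stated remainder either. The integral $\int_\delta^T\log(K/t)\,dt$ grows linearly in $T$, and $\int_0^\infty\log B_1(t)\,dt$ is of order $\log^2K$ for large $K$. Since $\log(1/t)$ is integrable at $0$, no $\log(1/\Delta)$ factor arises this way.

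Your Cauchy--Schwarz step on $(a-b)\log(b/c)$ also needs $b\log^2(b/c)\lesssim\DBr(b,c)$. That holds only up to a factor $1+\log_+(b/c)$, so it requires bounded scores and loses a $\log M$.

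What is missing is the mechanism that keeps the remainder linear in $d$, free of $T$, and with an absolute constant in front of $\mathcal L^{\mathrm{SE}}_{\mathcal T}$. In particular, you need to control how a move in one coordinate perturbs the scores of all the others. This is the substance of the cited theorem, and your sketch does not supply it.
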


\subsection{ReLU Network Constructions}

We first record two ReLU network constructions following \citet[Lemmas~F.6--F.7]{oko_diffusion_2023}.
For multiplication, we use the bounded-input construction in the proof of Lemma~F.6, restricted to \([0,1]^d\) and clipped to \([0,1]\).
We omit the proofs.

\begin{lemma}[ReLU multiplication network]
\label{lem:multiplication_network}
For every integer \(d\ge2\) and \(\epsilon\in(0,1)\), there exists a ReLU network \(\operatorname{Mult}_{d,\epsilon}:[0,1]^d\to[0,1]\) such that
\[
\sup_{u\in[0,1]^d}\left|\operatorname{Mult}_{d,\epsilon}(u)-\prod_{j=1}^d u_j\right|\le\epsilon.
\]
Moreover, the network can be chosen with
\[
L_{\mathrm{mult}}\lesssim\log d\bigl(\log\epsilon^{-1}+\log d\bigr),\qquad W_{\mathrm{mult}}\lesssim d,\qquad S_{\mathrm{mult}}\lesssim d\bigl(\log\epsilon^{-1}+\log d\bigr),\qquad B_{\mathrm{mult}}\lesssim1.
\]
\end{lemma}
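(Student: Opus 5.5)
The plan is to build $\operatorname{Mult}_{d,\epsilon}$ from the Yarotsky-type squaring network, applied through a binary-tree product. The first ingredient is a network $\operatorname{Sq}_m$ on $[0,1]$. It is the sawtooth approximation $u-\sum_{s=1}^m g_s(u)/4^s$, where $g_s$ is the $s$-fold composition of the hat function $g(u)=2\rho(u)-4\rho(u-1/2)+2\rho(u-1)$. This network has depth $O(m)$, width $O(1)$ and $O(m)$ nonzero weights, all bounded by an absolute constant, and it satisfies $\sup_{u\in[0,1]}|\operatorname{Sq}_m(u)-u^2|\le 4^{-m-1}$.

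The second ingredient is a two-input multiplier on $[0,1]^2$. Polarization gives
\[
ab=2\Bigl(\tfrac{a+b}{2}\Bigr)^2-\tfrac12 a^2-\tfrac12 b^2 .
\]
Replacing each square by $\operatorname{Sq}_m$ and clipping the output to $[0,1]$ with two ReLUs yields $\operatorname{Mult}_2$. It has error $\lesssim 4^{-m}$, its output stays in $[0,1]$, and its weights remain $\lesssim 1$ because all rescalings are by factors $1/2$ or $2$. Clipping costs nothing in accuracy, since $ab\in[0,1]$ and projecting onto $[0,1]$ is $1$-Lipschitz.

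Next, pad the input to $2^{\lceil\log_2 d\rceil}$ coordinates by appending constant $1$'s; these are produced through biases, with identity passthroughs $\rho(u)$ for the nonnegative inputs. Then multiply pairwise in a balanced binary tree of depth $\lceil\log_2 d\rceil$. At each level, run all copies of $\operatorname{Mult}_2$ in parallel. Carry any unused values forward with identity ReLU chains so that depths match.

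The error analysis is a telescoping induction, and this is the step I expect to need the most care. Suppose $\tilde a,\tilde b\in[0,1]$ approximate $a,b\in[0,1]$ with errors $\epsilon_a,\epsilon_b$. Then
\[
|\operatorname{Mult}_2(\tilde a,\tilde b)-ab|\le \epsilon_a+\epsilon_b+c4^{-m}.
\]
Clipping keeps every intermediate value in $[0,1]$, which is what makes this induction valid. Over the tree, the total error is therefore at most $c4^{-m}$ times the number of internal nodes, i.e. $\lesssim d\,4^{-m}$. Choosing $m\asymp\log\epsilon^{-1}+\log d$ makes it $\le\epsilon$.

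The size bounds then follow from this choice of $m$:
\begin{itemize}
\item Depth is $\lceil\log_2 d\rceil\cdot O(m)=O(\log d(\log\epsilon^{-1}+\log d))$.
\item Width is $O(d)$, since the first level has $d/2$ parallel multipliers of constant width.
\item Sparsity sums $O(m)$ over $O(d)$ tree nodes, plus the identity paddings, which add $O(m)$ per carried coordinate per level. A node-count argument bounds the padding total by $O(dm)$, giving $S\lesssim d(\log\epsilon^{-1}+\log d)$.
\item The weight magnitude $B\lesssim1$ holds throughout.
\end{itemize}
The last bookkeeping point is ensuring the identity paddings do not inflate sparsity by an extra $\log d$ factor. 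If $d$ is a power of two, no padding chains are needed at all. Otherwise the appended constant $1$'s can be placed so that only $O(1)$ carried coordinates occur per level, which still gives $S=O(dm)$.
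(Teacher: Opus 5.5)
Your construction is correct and is essentially the one the paper relies on: the paper omits the proof and points to the bounded-input construction behind Lemma F.6 of Oko et al. That construction is the Yarotsky--Schmidt-Hieber sawtooth squaring network, combined by polarization into a clipped two-input multiplier on $[0,1]^2$ and then arranged in a balanced binary product tree, with $m\asymp\log\epsilon^{-1}+\log d$ absorbing the $O(d)$ accumulated per-node errors. One minor simplification: once you pad to $2^{\lceil\log_2 d\rceil}$ inputs and apply $\operatorname{Mult}_2$ at every internal node, no identity carry chains are needed, and the bound $S=O(dm)$ follows directly because there are fewer than $2d$ internal nodes, each costing $O(m)$ parameters.
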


\begin{lemma}[ReLU reciprocal network]
\label{lem:reciprocal_network}
For every \(\epsilon\in(0,1)\), there exists a ReLU network \(\operatorname{Rec}_{\epsilon}:\mathbb R\to\mathbb R\) such that, for every \(u\in[\epsilon,\epsilon^{-1}]\) and \(u'\in\mathbb R\),
\[
\left|\operatorname{Rec}_{\epsilon}(u')-\frac1u\right|\le\epsilon+\frac{|u'-u|}{\epsilon^2}.
\]
Moreover, the network can be chosen with
\[
L_{\mathrm{rec}}\lesssim\log^2\epsilon^{-1},\qquad W_{\mathrm{rec}}\lesssim\log^3\epsilon^{-1},\qquad S_{\mathrm{rec}}\lesssim\log^4\epsilon^{-1},\qquad B_{\mathrm{rec}}\lesssim\epsilon^{-2}.
\]
\end{lemma}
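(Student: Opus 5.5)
The statement to prove is Lemma~\ref{lem:reciprocal_network}, the ReLU reciprocal network, which follows \citet[Lemma~F.7]{oko_diffusion_2023}. I would build it in two stages. First, clip the input: $\tilde u:=\operatorname{clip}_{[\epsilon,\epsilon^{-1}]}(u')$, realized exactly by the two-layer ReLU expression $\epsilon+\rho(u'-\epsilon)-\rho(u'-\epsilon^{-1})$. Clipping is $1$-Lipschitz, and $u\in[\epsilon,\epsilon^{-1}]$ is a fixed point, so $|\tilde u-u|\le|u'-u|$. Second, I construct a network $R$ with $\sup_{v\in[\epsilon,\epsilon^{-1}]}|R(v)-1/v|\le\epsilon$. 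The triangle inequality together with the Lipschitz bound $|1/\tilde u-1/u|\le|\tilde u-u|/\epsilon^2$ on $[\epsilon,\epsilon^{-1}]$ then gives
\[
|R(\tilde u)-1/u|\le\epsilon+\frac{|u'-u|}{\epsilon^2},
\]
which is exactly the claimed form.

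For the approximation of $1/v$ on $[\epsilon,\epsilon^{-1}]$, I would use a dyadic partition of unity combined with local Taylor or Newton-type approximations. Cover the interval by $J\asymp\log\epsilon^{-1}$ dyadic blocks $[2^{j},2^{j+1}]$. On each block, $v\mapsto 2^{j}/v$ is analytic with uniformly bounded derivatives after rescaling $w=2^{-j}v\in[1,2]$. There, $1/w=\sum_{k=0}^{m}(1-w/2)^k/2+\text{tail}$, where the tail is at most $2^{-m}$. The powers $(1-w/2)^k$ are computed with the square/multiplication networks underlying Lemma~\ref{lem:multiplication_network}. Taking $m\asymp\log\epsilon^{-1}$ gives block accuracy $\epsilon^{2}$ before rescaling, and the final error is still at most $\epsilon$ after multiplying by $2^{-j}\le\epsilon^{-1}$.

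To glue the blocks, I would use piecewise-linear hat functions $\phi_j$ (exact ReLU networks) forming a partition of unity subordinate to slightly enlarged blocks. I would then multiply each $\phi_j$ by its local approximant using a bounded-input multiplication network, and sum over $j$. An alternative I would consider is Newton iteration $y\leftarrow y(2-vy)$ for $O(\log\log\epsilon^{-1}+\log\epsilon^{-1})$ steps, which avoids partitions but requires controlling multiplication error amplification.

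Resource accounting: each block uses $m\asymp\log\epsilon^{-1}$ products of depth $\asymp\log\epsilon^{-1}$, computed sequentially, giving depth $\log^2\epsilon^{-1}$. Running $J\asymp\log\epsilon^{-1}$ blocks in parallel with width $\asymp m\log\epsilon^{-1}$ each gives width $\log^3\epsilon^{-1}$ and sparsity $\log^4\epsilon^{-1}$. The weights involve the rescalings $2^{\pm j}$ and the clipping constants, hence $B\lesssim\epsilon^{-2}$.

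The main obstacle is the error propagation in the gluing and rescaling step. Products of approximate quantities must be kept inside bounded ranges so that the multiplication network's sup-error stays valid, and each local error gets amplified by $2^{-j}\le\epsilon^{-1}$. I would handle this by requesting internal accuracy $\epsilon^{3}$ from every multiplication subnetwork, which only changes constants in the $\log\epsilon^{-1}$ factors. I would also insert clipping to $[0,1]$ after each product.
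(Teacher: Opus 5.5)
Your argument is correct and is the standard route behind Oko et al., Lemma F.7, which the paper cites without proof: clipping $u'$ to $[\epsilon,\epsilon^{-1}]$ produces exactly the $|u'-u|/\epsilon^2$ term, and dyadic geometric-series pieces of degree $O(\log\epsilon^{-1})$, glued by a partition of unity and built from product networks, give sup-error $\epsilon$ within the stated resource bounds. Two points of bookkeeping to tidy. First, the $\log^3\epsilon^{-1}$ width and $\log^4\epsilon^{-1}$ sparsity correspond to computing each power in parallel with the $k$-input network of Lemma~\ref{lem:multiplication_network}; your sequential chaining is cheaper in width and sparsity, which is harmless. Second, clip $w=2^{-j}v$ to the enlarged block before forming $1-w/2$, and choose the hats so that $1-w/2\in[0,1)$ there (for example, hats supported on $[2^{j-1},2^{j+1}]$). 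This keeps every input inside the domain of the $[0,1]$-valued product networks.
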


Using Lemmas~\ref{lem:multiplication_network} and~\ref{lem:reciprocal_network}, we construct a ReLU network that approximates the discrete score simultaneously over a fixed time grid.
\begin{lemma}[ReLU approximation of discrete scores on a fixed grid]
\label{lem:grid-indexed-score-network}
Let \(\mathcal T=\{\delta=t_0<t_1<\cdots<t_N=T\}\) be a fixed grid, and let \(\mu=\sum_{v=1}^D w_v\delta_{\vz_v}\) with \(w_v>0\) and \(\sum_{v=1}^D w_v=1\).
Write \(\mu_t:=\mu\mP_t\).
For every \(\varepsilon\in(0,1)\), there exists a ReLU network \(G_\varepsilon:\mathbb R^{1+2dK}\to\mathbb R\) with range contained in \([B_1(t_1)^{-1},B_1(t_1)]\) such that
\[
\max_{1\le k\le N}\max_{\dHam(\vx,\vy)=1}\left|G_\varepsilon\left(\frac{k}{N},\ve(\vy),\ve(\vx)\right)-\frac{\mu_{t_k}(\vy)}{\mu_{t_k}(\vx)}\right|\le\varepsilon.
\]
The network may be chosen with
\[
L\lesssim\log^2\left(\frac{B_0(t_1)}{\varepsilon}\right),\qquad W\lesssim d(KN+D)+\log^3\left(\frac{B_0(t_1)}{\varepsilon}\right),
\]
\[
S\lesssim dK^2N+dD\log\left(\frac{B_0(t_1)}{\varepsilon}\right)+\log^4\left(\frac{B_0(t_1)}{\varepsilon}\right),\qquad B\lesssim N+B_0(t_1)^2+\varepsilon^{-2}.
\]
\end{lemma}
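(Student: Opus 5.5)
We build the network from the explicit mixture form of the score. For $\mu=\sum_v w_v\delta_{\vz_v}$ and $\dHam(\vx,\vy)=1$,
\[
\frac{\mu_{t}(\vy)}{\mu_{t}(\vx)}=\frac{\sum_{v=1}^D w_v\mP_t(\vz_v,\vy)}{\sum_{v=1}^D w_v\mP_t(\vz_v,\vx)},\qquad
\mP_t(\vz_v,\vx)=\prod_{i=1}^d\Bigl(e^{-t}\1\{z_v^i=x^i\}+\tfrac{1-e^{-t}}{K}\Bigr).
\]
The network will compute the numerator and denominator as weighted sums of products, apply $\operatorname{Rec}$ to the denominator, multiply, and finally clip to $[B_1(t_1)^{-1},B_1(t_1)]$. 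Clipping costs two extra ReLU layers. It does not increase the error, because the true ratio lies in that interval by Lemma~\ref{lem:upper_bounds}.

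\textbf{Step 1 (time decoding).} From the scalar input $k/N$, a single hidden layer of hat functions $\phi_k(u)=\rho(1-N|u-k/N|)$ produces the one-hot vector of $k$ exactly on the grid points. Each $\phi_k$ is a sum of three ReLUs with weights $O(N)$, which accounts for $B\lesssim N$.

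\textbf{Step 2 (coordinate factors).} For each coordinate $i$, we form $a_i(t_k):=e^{-t_k}+(1-e^{-t_k})/K$ and $b_i(t_k):=(1-e^{-t_k})/K$ as linear combinations of the $\phi_k$. For each $v$ we then form
\[
f_{v,i}(\vx)=b(t_k)+e^{-t_k}\,\ve_{z_v^i}^\top\ve_{x^i}.
\]
This expression contains a product of the time indicator and the input bit. Because both are in $\{0,1\}$ on grid inputs, we realize it exactly by $\rho\bigl(\sum_k e^{-t_k}\phi_k+e\cdot x\text{-bit}-1\bigr)$-type gadgets. To avoid an $N$-fold blowup per $v$, we first compute, for each $i$ and each $a\in[K]$, the quantity $g_{i,a}=\sum_k(\ldots)\phi_k$ gated by the bit $\ve_a^\top\ve_{x^i}$. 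This costs $dK N$ width and $dK^2N$ parameters. Each $f_{v,i}$ then selects the entry $a=z_v^i$ linearly, using $dD$ width.

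\textbf{Step 3 (products and sums).} Apply $\operatorname{Mult}_{d,\epsilon'}$ (Lemma~\ref{lem:multiplication_network}) to $(f_{v,i})_i$ for each $v$, in parallel for $\vx$ and for $\vy$. This uses width $dD$ and sparsity $dD\log(1/\epsilon')$, and it holds because all factors lie in $[0,1]$. Summing with weights $w_v$ approximates $\mu_{t_k}(\vx)$ to accuracy $\epsilon'$, since $\sum w_v=1$. Only one coordinate differs between $\vx$ and $\vy$, but computing both in full is simpler and costs only a factor $2$.

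\textbf{Step 4 (division and error budget).} By Lemma~\ref{lem:upper_bounds} and $t_k\ge t_1$, the denominator satisfies $\mu_{t_k}(\vx)\ge B_0(t_1)^{-1}$. Apply $\operatorname{Rec}_{\epsilon''}$ with $\epsilon''\asymp\varepsilon/B_0(t_1)$, and choose $\epsilon'\asymp\epsilon''^3$. By Lemma~\ref{lem:reciprocal_network}, the reciprocal error is then $\lesssim\epsilon''$. Multiplying by the numerator, which is at most $1$, requires one more multiplication network. That network needs inputs in $[0,1]$, so we rescale the reciprocal by $B_0(t_1)$ and undo the scaling afterwards. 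The rescaling introduces weights of size $B_0(t_1)$, and together with $\operatorname{Rec}$ this gives $B\lesssim B_0(t_1)^2+\varepsilon^{-2}$. The depths add up to $\log^2(B_0(t_1)/\varepsilon)$, since $\log d\log(1/\epsilon')\lesssim\log^2(B_0/\varepsilon)$, which yields the stated bounds on $L$, $W$ and $S$.

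\textbf{Main obstacle.} The delicate part is propagating the error through the reciprocal and the final multiplication without exceeding the allowed magnitudes. The factor $1/\epsilon''^2$ in Lemma~\ref{lem:reciprocal_network} forces $\epsilon'$ to be polynomially small in $\varepsilon/B_0(t_1)$, and the rescaling needed before the last multiplication must be tracked carefully. A secondary issue is keeping the time-gating cost at $dK^2N$ rather than $dDN$; this is why Step 2 builds the per-coordinate, per-symbol table before the per-sample selection.
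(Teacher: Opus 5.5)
Your construction is the same as the paper's: hat-function time selectors, a per-coordinate, per-symbol table that realizes $\mP^{\mathrm{coord}}_{t_k}(b,x^i)$ exactly on grid inputs, $2D$ parallel $\operatorname{Mult}_d$ networks followed by $w_v$-weighted sums, a reciprocal network, a rescaled two-input multiplication, and a final clip justified by Lemma~\ref{lem:upper_bounds}. Your accuracy argument and your bounds on $L$, $W$, $S$ go through.

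\textbf{The gap is the weight bound $B$.} With $\epsilon''\asymp\varepsilon/B_0(t_1)$, Lemma~\ref{lem:reciprocal_network} gives $B_{\mathrm{rec}}\lesssim\epsilon''^{-2}\asymp B_0(t_1)^2\varepsilon^{-2}$. That is the \emph{product} of the two quantities, not the sum $B_0(t_1)^2+\varepsilon^{-2}$ you assert. As written, you only obtain $B\lesssim N+B_0(t_1)^2\varepsilon^{-2}$, which can be far larger than the stated bound.

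\textbf{The fix.} The paper does this with its $\eta$: decouple the two jobs of $\epsilon''$ and take $\epsilon''=\min\{B_0(t_1)^{-1},c\varepsilon\}$ for a small constant $c$.
\begin{itemize}
\item The condition $\epsilon''\le B_0(t_1)^{-1}$ is needed only so that $\mu_{t_k}(\vx)\in[\epsilon'',1/\epsilon'']$.
\item Absolute accuracy $O(\varepsilon)$ for the reciprocal suffices, because it is multiplied by the numerator $\mu_{t_k}(\vy)\le 1$.
\item Take $\epsilon'=c\varepsilon\,\epsilon''^2$ (the paper's $\varepsilon_1=\varepsilon\eta^2/16$). Then the reciprocal error is at most $\epsilon''+c\varepsilon\le 2c\varepsilon$.
\item The numerator error is amplified by the clipped reciprocal, which is at most $B_0(t_1)$. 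This contributes $\epsilon' B_0(t_1)=c\varepsilon\,\epsilon''\,(\epsilon'' B_0(t_1))\le c\varepsilon$.
\item Now $\epsilon''^{-2}\le B_0(t_1)^2+c^{-2}\varepsilon^{-2}$.
\item Both $\log(1/\epsilon')$ and $\log(1/\epsilon'')$ remain $\lesssim\log(B_0(t_1)/\varepsilon)$, so $L$, $W$, $S$ are unaffected.
\end{itemize}

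\textbf{Two smaller points to make explicit.}
\begin{itemize}
\item The approximate reciprocal may slightly exceed $B_0(t_1)$, so clip it to $[1,B_0(t_1)]$ before dividing by $B_0(t_1)$. This is needed because $\operatorname{Mult}$ is only guaranteed on $[0,1]^2$. It costs nothing, because $1/\mu_{t_k}(\vx)$ lies in that interval.
\item State that the final multiplication network has accuracy $\asymp\varepsilon/B_0(t_1)$, since undoing the rescaling multiplies its error by $B_0(t_1)$.
\end{itemize}
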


\begin{proof}
We first approximate the two forward marginals, then approximate the reciprocal of the denominator, and finally multiply.
For \(k\in[N]\), define
\[
J_k(r):=\rho\left(1-N\rho\left(r-\frac{k}{N}\right)-N\rho\left(\frac{k}{N}-r\right)\right),
\]
so that \(J_k(\ell/N)=\1\{k=\ell\}\).
For \(i\in[d]\) and \(a\in[K]\), define
\(I_{k,i,a}^{\vx}(r,\ve(\vx)):=\rho(J_k(r)+(\ve(\vx))_{i,a}-1)\), and define \(I_{k,i,a}^{\vy}\) analogously.
At encoded inputs, \(I_{k,i,a}^{\vx}(\ell/N,\ve(\vx))=\1\{k=\ell,x^i=a\}\).
For \(i\in[d]\) and \(b\in[K]\), define
\[
G_{i,b}^{\vx}:=\sum_{k=1}^N\sum_{a=1}^K
\mP_{t_k}^{\mathrm{coord}}(b,a)I_{k,i,a}^{\vx},
\]
so that \(G_{i,b}^{\vx}(k/N,\ve(\vx))=\mP_{t_k}^{\mathrm{coord}}(b,x^i)\), and define \(G_{i,b}^{\vy}\) analogously.

Set \(\eta:=\min\{B_0(t_1)^{-1},\varepsilon/16\}\) and \(\varepsilon_1:=\varepsilon\eta^2/16\).
For each \(v\in[D]\), Lemma~\ref{lem:multiplication_network} applied to
\(G_{1,z_v^1}^{\vx},\ldots,G_{d,z_v^d}^{\vx}\)
gives an \(\varepsilon_1\)-approximation to \(\mP_{t_k}(\vz_v,\vx)\) at every encoded grid input.
Taking the weighted sum over \(v\) produces networks \(F_{\vx}\) and \(F_{\vy}\) satisfying
\[
|F_{\vx}-\mu_{t_k}(\vx)|\le\varepsilon_1,
\qquad
|F_{\vy}-\mu_{t_k}(\vy)|\le\varepsilon_1
\]
at every encoded input.
Clip \(F_{\vx}\) to \([B_0(t_1)^{-1},1]\), which cannot increase its error by Lemma~\ref{lem:upper_bounds}.
Since \(\eta\le\mu_{t_k}(\vx)\le1\le\eta^{-1}\), Lemma~\ref{lem:reciprocal_network}, followed by clipping to \([1,B_0(t_1)]\), gives a network \(R_{\vx}\) such that
\[
\left|R_{\vx}-\mu_{t_k}(\vx)^{-1}\right|
\le
\eta+\frac{\varepsilon_1}{\eta^2}
\le
\frac{\varepsilon}{8}.
\]
Next, clip \(F_{\vy}\) to \([0,1]\), apply
\(\operatorname{Mult}_{2,\varepsilon/(4B_0(t_1))}\)
to \(F_{\vy}\) and \(R_{\vx}/B_0(t_1)\), and multiply the output by \(B_0(t_1)\).
The resulting network \(\widetilde G_\varepsilon\) satisfies
\[
\left|
\widetilde G_\varepsilon
-
\frac{\mu_{t_k}(\vy)}{\mu_{t_k}(\vx)}
\right|
\le
\frac{\varepsilon}{4}
+
B_0(t_1)\varepsilon_1
+
\frac{\varepsilon}{8}
<
\varepsilon,
\]
where \(B_0(t_1)\varepsilon_1\le\varepsilon/16\) because \(\eta\le B_0(t_1)^{-1}\).
Finally, define
\[
G_\varepsilon
:=
\operatorname{clip}_{[B_1(t_1)^{-1},B_1(t_1)]}
(\widetilde G_\varepsilon).
\]
Since \(B_1\) is decreasing and \(t_k\ge t_1\), Lemma~\ref{lem:upper_bounds} shows that the true score lies in this interval, so the final clipping cannot increase the error.

It remains to collect the architecture bounds.
The selector and transition layers have constant depth, width \(O(dKN)\), sparsity \(O(dK^2N)\), and weights of order \(N\).
The \(2D\) multiplication networks contribute width \(O(dD)\) and sparsity \(O(dD(\log\varepsilon_1^{-1}+\log d))\), while the reciprocal contributes depth \(O(\log^2\eta^{-1})\), width \(O(\log^3\eta^{-1})\), and sparsity \(O(\log^4\eta^{-1})\).
Since \(B_0(t_1)\ge K^d\), \(\eta^{-1}\le16B_0(t_1)/\varepsilon\), and \(\log\varepsilon_1^{-1}\lesssim\log(B_0(t_1)/\varepsilon)\), these contributions give the stated bounds on \(L,W,S\).
Finally, the largest weights arise from the time selectors and reciprocal network, and \(\eta^{-2}\lesssim B_0(t_1)^2+\varepsilon^{-2}\), yielding
\[
B\lesssim N+B_0(t_1)^2+\varepsilon^{-2}.
\]
\end{proof}

\section{Proofs for TV Results}
\label{sec:tv_bounds}
\subsection{Upper Bound: Proofs of Theorem~\ref{thm:tv} and Corollary~\ref{cor:tv}}

\begin{proof}[Proof of Theorem~\ref{thm:tv}]
Fix a realization of the observations, so that \(\widehat P_0\) is supported on \(D_n\) points.
Recall that \(M=B_1(t_1)\), and set
\[
\varepsilon_{\mathrm{TV}}:=\sqrt{\frac{\mathfrak s_n(P_0)}{dnMT}}.
\]
Since \(\mathfrak s_n(P_0)\le n\), \(d\ge2\), \(M\ge1\), and \(T\ge1\), we have \(\varepsilon_{\mathrm{TV}}\in(0,1)\).
Applying Lemma~\ref{lem:grid-indexed-score-network} to \(\widehat P_0\) gives a score \(\sigma^\dagger\) such that
\[
\max_{1\le k\le N}\max_{\dHam(\vx,\vy)=1}\left|\sigma_{t_k}^\dagger(\vy,\vx)-\widehat\sigma_{t_k}^\star(\vy,\vx)\right|\le\varepsilon_{\mathrm{TV}},
\]
where \(\widehat\sigma_{t_k}^\star(\vy,\vx):=\widehat P_{t_k}(\vy)/\widehat P_{t_k}(\vx)\).

Both \(\sigma^\dagger\) and \(\widehat\sigma^\star\) take values in \([M^{-1},M]\).
Since \(f''(u)=1/u\), Taylor's theorem gives \(\DBr(u,v)\le M(u-v)^2/2\) for \(u,v\in[M^{-1},M]\).
Moreover, \(\sum_{\vy:\mQ(\vy,\vx)>0}\mQ(\vy,\vx)=d(K-1)/K\le d\), and hence
\[
\mathcal L_{\mathcal T}^{\mathrm{SE}}(\sigma^\dagger;\widehat P_0)\le\frac{dM\varepsilon_{\mathrm{TV}}^2}{2}\sum_{k=1}^N(t_k-t_{k-1})\le\frac{dMT\varepsilon_{\mathrm{TV}}^2}{2}=\frac{\mathfrak s_n(P_0)}{2n}.
\]
By Lemma~\ref{lem:se_dse_equivalent} and the \(\varepsilon_{\mathrm{opt}}\)-optimality of \(\widehat\sigma\),
\[
\mathcal L_{\mathcal T}^{\mathrm{SE}}(\widehat\sigma;\widehat P_0)\le\mathcal L_{\mathcal T}^{\mathrm{SE}}(\sigma^\dagger;\widehat P_0)+\varepsilon_{\mathrm{opt}}\lesssim\frac{\mathfrak s_n(P_0)}{n}.
\]

Applying Assumption~\ref{ass:score_based_sampler} conditionally on the observations and using the assumed sampler remainder gives
\[
\KL\left(\widehat P_\delta\Vert Q_\delta^{\mathcal A}\right)\lesssim\frac{\mathfrak s_n(P_0)}{n}.
\]
Thus, by Pinsker's inequality, \(\TV(\widehat P_\delta,Q_\delta^{\mathcal A})\lesssim\sqrt{\mathfrak s_n(P_0)/n}\).

By coupling the forward process initialized from \(\widehat P_0\) with its initial state,
\[
\TV(\widehat P_0,\widehat P_\delta)\le1-\exp\left(-\frac{d(K-1)}{K}\delta\right)\le d\delta.
\]
Therefore, the triangle inequality, Lemma~\ref{lem:tv_emp}, and the assumption on \(\delta\) yield
\[
\E_{P_0}\TV\left(P_0,Q_\delta^{\mathcal A}\right)\lesssim\sqrt{\frac{\mathfrak s_n(P_0)}{n}}+d\delta\lesssim\sqrt{\frac{\mathfrak s_n(P_0)}{n}}.
\]

It remains to verify the architecture bounds.
By Lemma~\ref{lem:grid-indexed-score-network}, it suffices to control \(B_0(t_1)/\varepsilon_{\mathrm{TV}}\).
Since \(B_0(t_1)=(K/(1-e^{-t_1}))^d\) and \(M=B_1(t_1)\le K/(1-e^{-t_1})\),
\[
\log\left(\frac{B_0(t_1)}{\varepsilon_{\mathrm{TV}}}\right)\lesssim d\log\left(\frac{K}{1-e^{-t_1}}\right)+\log\left(\frac{nT}{\mathfrak s_n(P_0)}\right),
\]
where we used \(K/(1-e^{-t_1})\ge2\), \(\mathfrak s_n(P_0)\le n\), and \(T\ge1\).
Substituting this bound into Lemma~\ref{lem:grid-indexed-score-network} gives the stated bounds on \(L,W,S\).
Finally,
\[
B_0(t_1)^2=\left(\frac{K}{1-e^{-t_1}}\right)^{2d},
\qquad
\varepsilon_{\mathrm{TV}}^{-2}=\frac{dnMT}{\mathfrak s_n(P_0)},
\]
which gives the stated bound on \(B\).
\end{proof}

\begin{proof}[Proof of Corollary~\ref{cor:tv}]
Let \(\Lambda:=\log(Kdn/\mathfrak s_n(P_0))\).
Since \(\mathfrak s_n(P_0)\le n\) and \(K,d\ge2\), we have \(\Lambda>1\).

For the stated grid, \(\Delta=\mathfrak s_n(P_0)/(dn\Lambda)\).
By Theorem~\ref{thm:tau_leaping_sampler},
\[
\mathcal R_{\tau}(\mathcal T,\theta;M)
=
e^{-T}d\log K+\Delta d\log\left(\frac{K}{\Delta}\right).
\]
The definitions of \(T\) and \(\Delta\) give
\[
e^{-T}d\log K=\frac{\mathfrak s_n(P_0)}{en},
\qquad
\Delta d\log\left(\frac{K}{\Delta}\right)
=
\frac{\mathfrak s_n(P_0)}{n}\left(1+\frac{\log\Lambda}{\Lambda}\right)
\lesssim\frac{\mathfrak s_n(P_0)}{n}.
\]
Hence \(\mathcal R_{\tau}(\mathcal T;M)\lesssim\mathfrak s_n(P_0)/n\).
Moreover, \(\delta=0\) and \(T\ge\log(2e\log2)>1\), so Theorem~\ref{thm:tv} applies.

It remains to simplify the architecture bounds.
Since \(t_1=\Delta\le1/(d\Lambda)<1\), the inequality \(1-e^{-\Delta}\ge\Delta e^{-\Delta}\) gives
\[
\frac{K}{1-e^{-\Delta}}
\le
\frac{eKdn\Lambda}{\mathfrak s_n(P_0)},
\qquad
d\Delta\le\frac1\Lambda.
\]
Therefore,
\[
\log\left(\frac{K}{1-e^{-\Delta}}\right)\lesssim\Lambda.
\]
Also, since \(\log K\le K/e\),
\[
T=\Lambda+\log\left(\frac{e\log K}{K}\right)\le\Lambda.
\]
Consequently,
\[
N=\left\lceil\frac{T}{\Delta}\right\rceil
\lesssim
\frac{dn}{\mathfrak s_n(P_0)}\Lambda^2,
\qquad
\log\left(\frac{nT}{\mathfrak s_n(P_0)}\right)\lesssim\Lambda.
\]
Substituting these bounds into Theorem~\ref{thm:tv} yields
\[
L\lesssim d^2\Lambda^2, \qquad W\lesssim \frac{d^2Kn}{\mathfrak s_n(P_0)}\Lambda^2+dD_n+d^3\Lambda^3, \qquad S\lesssim \frac{d^2K^2n}{\mathfrak s_n(P_0)}\Lambda^2+d^2D_n\Lambda+d^4\Lambda^4.
\]
Finally, \(M=B_1(\Delta)\le K/(1-e^{-\Delta})\lesssim Kdn\Lambda/\mathfrak s_n(P_0)\), and
\[
\left(\frac{K}{1-e^{-\Delta}}\right)^{2d}
\lesssim
\left(\frac{Kdn\Lambda}{\mathfrak s_n(P_0)}\right)^{2d},
\]
where we used \(d\Delta\le1/\Lambda\).
Since \(T\le\Lambda\), the remaining terms \(N\) and \(dnMT/\mathfrak s_n(P_0)\) are dominated by the same quantity.
Thus
\[
B\lesssim
\left[
\frac{Kdn}{\mathfrak s_n(P_0)}
\log\left(\frac{Kdn}{\mathfrak s_n(P_0)}\right)
\right]^{2d}.
\]
\end{proof}

\subsection{Minimax Lower Bound: Proof of Theorem~\ref{thm:tv_lower}}

\begin{proof}[Proof of Theorem~\ref{thm:tv_lower}]
The upper bound follows immediately from Lemma~\ref{lem:tv_emp} and \(\mathfrak s_n(P_0)\le s_n\) for \(P_0\in\mathcal P_{s_n}\).

For the lower bound, let
\[
m:=\left\lfloor\frac{s_n}{2}\right\rfloor,
\qquad
\gamma:=\frac{1}{2\sqrt{2}}\sqrt{\frac{m}{n}}.
\]
Since \(2\le s_n\le n\), we have \(0<\gamma\le1/4\), and since \(2m\le s_n\le K^d\), choose distinct states \(\vx_1,\vy_1,\ldots,\vx_m,\vy_m\in[K]^d\).
For each \(\theta\in\{-1,1\}^m\), define \(P_0^\theta\in\Delta_{K^d}\) by
\[
P_0^\theta(\vx_i):=\frac{1+\gamma\theta_i}{2m},
\qquad
P_0^\theta(\vy_i):=\frac{1-\gamma\theta_i}{2m},
\qquad i\in[m],
\]
and set \(P_0^\theta(\vz)=0\) elsewhere.
Since \(\mathfrak s_n(P_0^\theta)\le|\supp(P_0^\theta)|=2m\le s_n\), every \(P_0^\theta\) belongs to \(\mathcal P_{s_n}\).

Fix any possibly randomized estimator \(\widetilde P\), and define \(\widetilde\theta_i=1\) if \(\widetilde P(\vx_i)\ge\widetilde P(\vy_i)\) and \(\widetilde\theta_i=-1\) otherwise.
By the triangle inequality,
\[
\TV(\widetilde P,P_0^\theta)\ge\frac{\gamma}{2m}\sum_{i=1}^m\1\{\widetilde\theta_i\ne\theta_i\}.
\]
Let \(\Prob_\theta\) and \(\E_\theta\) denote probability and expectation under \((P_0^\theta)^{\otimes n}\), including any auxiliary randomness of \(\widetilde P\).
Averaging uniformly over \(\theta\) gives
\[
\sup_{\theta}\E_\theta\TV(\widetilde P,P_0^\theta)
\ge
\frac{\gamma}{2m}\sum_{i=1}^m\frac{1}{2^m}\sum_{\theta}\Prob_\theta(\widetilde\theta_i\ne\theta_i).
\]

Fix \(i\in[m]\) and \(\theta_{-i}\in\{-1,1\}^{m-1}\), and let \(P_0^+\) and \(P_0^-\) denote the distributions obtained by setting \(\theta_i=1\) and \(\theta_i=-1\), respectively.
A direct calculation gives
\[
\KL(P_0^+\Vert P_0^-)
=
\frac{\gamma}{m}\log\left(\frac{1+\gamma}{1-\gamma}\right)
\le
\frac{4\gamma^2}{m},
\]
where we used \(\log((1+t)/(1-t))\le4t\) for \(0\le t\le1/2\).
Therefore,
\[
\KL\left((P_0^+)^{\otimes n}\Vert(P_0^-)^{\otimes n}\right)\le\frac12,
\]
and Pinsker's inequality gives \(\TV((P_0^+)^{\otimes n},(P_0^-)^{\otimes n})\le1/2\).
Le Cam's two-point inequality then yields
\[
\frac12\left\{\Prob_+(\widetilde\theta_i=-1)+\Prob_-(\widetilde\theta_i=1)\right\}\ge\frac14.
\]
Averaging over \(\theta_{-i}\) shows that
\[
\frac{1}{2^m}\sum_{\theta}\Prob_\theta(\widetilde\theta_i\ne\theta_i)\ge\frac14.
\]
Consequently,
\[
\sup_{\theta}\E_\theta\TV(\widetilde P,P_0^\theta)\ge\frac{\gamma}{8}\ge\frac{1}{32\sqrt{2}}\sqrt{\frac{s_n}{n}},
\]
where we used \(m=\lfloor s_n/2\rfloor\ge s_n/4\).

Since \(\{P_0^\theta:\theta\in\{-1,1\}^m\}\subseteq\mathcal P_{s_n}\), every estimator \(\widetilde P\) satisfies
\[
\sup_{P_0\in\mathcal P_{s_n}}\E_{P_0}\TV(P_0,\widetilde P)
\ge
\frac{1}{32\sqrt{2}}\sqrt{\frac{s_n}{n}}.
\]
Taking the infimum over all estimators \(\widetilde P\) completes the proof.
\end{proof}

\section{Proofs for KL Results}
\label{sec:kl_bounds}
\subsection{Upper Bound: Proofs of Theorem~\ref{thm:kl} and Corollary~\ref{cor:kl}}

\begin{proof}[Proof of Theorem~\ref{thm:kl}]
The proof follows that of Theorem~\ref{thm:tv}, except that Lemma~\ref{lem:local-kl-transfer} replaces the triangle inequality.
Set
\[
\varepsilon_{\mathrm{KL}}:=\frac{1}{\sqrt{dMTB_0(\delta)}}.
\]
Since \(d,M,T,B_0(\delta)\ge1\), we have \(\varepsilon_{\mathrm{KL}}\in(0,1)\).
Applying Lemma~\ref{lem:grid-indexed-score-network} to \(\widehat P_0\) and arguing as in the proof of Theorem~\ref{thm:tv} gives a clipped score \(\sigma^\dagger\) satisfying
\[
\mathcal L_{\mathcal T}^{\mathrm{SE}}(\sigma^\dagger;\widehat P_0)\le\frac{dMT\varepsilon_{\mathrm{KL}}^2}{2}=\frac{1}{2B_0(\delta)}.
\]
Hence, by Lemma~\ref{lem:se_dse_equivalent} and the \(\varepsilon_{\mathrm{opt}}\)-optimality of \(\widehat\sigma\),
\[
\mathcal L_{\mathcal T}^{\mathrm{SE}}(\widehat\sigma;\widehat P_0)\lesssim\frac1{B_0(\delta)}.
\]

Applying Assumption~\ref{ass:score_based_sampler} conditionally on the observations and using the assumed sampler remainder gives
\[
\KL\left(\widehat P_\delta\Vert Q_\delta^{\mathcal A}\right)\lesssim\frac1{B_0(\delta)}.
\]
By Lemma~\ref{lem:upper_bounds}, \(\widehat P_\delta(\vx)\ge B_0(\delta)^{-1}\) for every \(\vx\).
Therefore, Lemma~\ref{lem:local-kl-transfer} yields
\[
\KL\left(P_0\Vert Q_\delta^{\mathcal A}\right)\lesssim\KL(P_0\Vert\widehat P_\delta)+\frac1{B_0(\delta)}.
\]
Since \(\delta=1/n\), Lemma~\ref{lem:kl_empirical_1/n} controls the first term.
Moreover,
\[
B_0(\delta)^{-1}=\left(\frac{1-e^{-1/n}}{K}\right)^d\le (Kn)^{-d}\lesssim\frac{\log n}{n}\le\frac{\mathfrak s_n(P_0)\log(eK^d/\mathfrak s_n(P_0))\log n}{n}.
\]
Taking expectations therefore gives
\[
\E_{P_0}\KL\left(P_0\Vert Q_\delta^{\mathcal A}\right)\lesssim\frac{\mathfrak s_n(P_0)\log\!\left(eK^d/\mathfrak s_n(P_0)\right)\log n}{n}.
\]

It remains to verify the architecture bounds.
Since \(t_1>\delta=1/n\) and \(1-e^{-1/n}\ge1/(2n)\),
\[
B_0(t_1)\le B_0(\delta)\le(2Kn)^d,
\qquad
M\le\frac{K}{1-e^{-1/n}}\le2Kn.
\]
Thus
\[
\log\left(\frac{B_0(t_1)}{\varepsilon_{\mathrm{KL}}}\right)\lesssim d\log(Kn)+\log T.
\]
Substituting this into Lemma~\ref{lem:grid-indexed-score-network} gives the stated bounds on \(L,W,S\).
Finally,
\[
B_0(t_1)^2=\left(\frac{K}{1-e^{-t_1}}\right)^{2d},
\qquad
\varepsilon_{\mathrm{KL}}^{-2}=dMT\left(\frac{K}{1-e^{-1/n}}\right)^d,
\]
which gives the stated bound on \(B\).
\end{proof}

\begin{proof}[Proof of Corollary~\ref{cor:kl}]
Define
\[
\Xi:=\frac{K}{1-e^{-1/n}}.
\]
Since \(1-e^{-1/n}\le1/n\), we have \(\Xi\ge Kn\).
First note that \(T=\log(ed\Xi^d\log K)\ge1\).

For the stated grid, \(\Delta=(d^2\Xi^d\log(Kn))^{-1}\).
By Theorem~\ref{thm:tau_leaping_sampler},
\[
\mathcal R_{\tau}(\mathcal T,\theta;M)
\lesssim e^{-T}d\log K+\Delta d\log\left(\frac{K}{\Delta}\right).
\]
Since
\[
e^{-T}d\log K=\frac{1}{e\Xi^d},
\qquad
\log\left(\frac{K}{\Delta}\right)=\log\left(d^2K\Xi^d\log(Kn)\right)\lesssim d\log(Kn),
\]
we obtain \(\mathcal R_{\tau}(\mathcal T;M)\lesssim\Xi^{-d}\), as required by Theorem~\ref{thm:kl}.

It remains to simplify the architecture bounds.
Since \(t_1=\delta+\Delta>\delta=1/n\),
\[
\frac{K}{1-e^{-t_1}}\le\Xi,
\qquad
M=B_1(t_1)\le\Xi.
\]
Moreover, \(T\lesssim d\log(Kn)\), and hence
\[
N=\left\lceil\frac{T-\delta}{\Delta}\right\rceil\lesssim d^3\Xi^d\log^2(Kn).
\]
Substituting these bounds into Theorem~\ref{thm:kl}, and using \(D_n\le n\le\Xi^d\), gives
\[
L\lesssim d^2\log^2(Kn),
\qquad
W\lesssim d^4K\Xi^d\log^2(Kn),
\qquad
S\lesssim d^4K^2\Xi^d\log^2(Kn).
\]
Finally, the bound on \(B\) in Theorem~\ref{thm:kl}, together with \(M\le\Xi\), \(T\lesssim d\log(Kn)\), and the bound on \(N\), gives $B\lesssim\Xi^{2d}.$
\end{proof}

\subsection{Minimax Lower Bound: Proof of Theorem~\ref{thm:kl_lower}}

The empirical distribution $\widehat P_0$ assigns zero probability to unobserved states and therefore has infinite KL risk.
We instead use the adaptive add-$\lambda$ estimator analyzed by \citet{mourtada_estimation_2026}, specialized to the state space $[K]^d$.

\begin{lemma}
\label{lem:kl_ad}
Define the adaptive additive-smoothing estimator
\[
\widehat P^{\mathrm{ad}}(\vx):=\frac{N_{\vx}+D_n/K^d}{n+D_n}, \qquad \vx\in[K]^d.
\]
Then
\[
\E_{P_0}\KL\left(P_0\Vert\widehat P^{\mathrm{ad}}\right)
\le 4.4\,\frac{\mathfrak s_n(P_0)\log\left(eK^d/\mathfrak s_n(P_0)\right)}{n}.
\]
\end{lemma}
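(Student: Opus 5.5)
We would bound the KL risk of the adaptive add-$D_n/K^d$ estimator directly, splitting states by whether they appear in the sample. Write $S:=K^d$, $\lambda:=D_n/S$, and $p_{\vx}:=P_0(\vx)$. Then
\[
\KL(P_0\Vert\widehat P^{\mathrm{ad}})=\sum_{\vx:N_{\vx}>0}p_{\vx}\log\frac{p_{\vx}(n+D_n)}{N_{\vx}+\lambda}+\sum_{\vx:N_{\vx}=0}p_{\vx}\log\frac{p_{\vx}(n+D_n)S}{D_n}.
\]

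\textbf{Observed states.} We drop $\lambda$ in the denominator and use $\log(1+D_n/n)\le D_n/n$. This bounds the contribution by
\[
\frac{D_n}{n}+\sum_{\vx:N_{\vx}>0}p_{\vx}\log\frac{np_{\vx}}{N_{\vx}}.
\]
Taking expectations, Lemma~\ref{lem:D_n_s_n} gives $\E D_n\le\mathfrak s_n(P_0)$. For the remaining sum we argue as in the proof of Lemma~\ref{lem:kl_empirical_1/n}. States with $np_{\vx}<1$ contribute nonpositively by Lemma~\ref{lem:binomial-log-fluctuation}. Each state with $np_{\vx}\ge1$ contributes $\lesssim p_{\vx}/(np_{\vx})=1/n$, and there are at most $\mathfrak s_n(P_0)$ such states. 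Hence this part is $\lesssim\mathfrak s_n(P_0)/n$.

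\textbf{Unobserved states.} Let $U:=\sum_{\vx:N_{\vx}=0}p_{\vx}$ be the missing mass. Bounding $n+D_n\le2n$, the contribution is at most
\[
\sum_{N_{\vx}=0}p_{\vx}\log(np_{\vx})+U\log\frac{2S}{D_n}.
\]
The first term has positive part with expectation $\lesssim\mathfrak s_n(P_0)/n$, exactly as in the proof of Lemma~\ref{lem:kl_empirical_1/n}.

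The second term is the crux, because $U$ and $D_n$ are dependent and $\log(S/D_n)$ is large when $D_n$ is small. We would first try the following route. Since $D_n\ge1$,
\[
\log\frac{2S}{D_n}\le\log\frac{2eS}{\mathfrak s_n}+\Bigl[\log\frac{\mathfrak s_n}{D_n}\Bigr]_+ .
\]
We also have $\E U=\sum_{\vx}p_{\vx}(1-p_{\vx})^n\le\mathfrak s_n/n$, using $ue^{-nu}\le\min\{u,1/(en)\}$.

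The main term is therefore $\frac{\mathfrak s_n}{n}\log(eS/\mathfrak s_n)$, up to constants; the additive $\log2$ is absorbed since $\log(eS/\mathfrak s_n)\ge1$.

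For the correction term we need $\E[U[\log(\mathfrak s_n/D_n)]_+]\lesssim\mathfrak s_n/n$. We would obtain this from a lower-tail concentration of $D_n$ around its mean $\E D_n\ge(1-e^{-1})\mathfrak s_n$. $D_n$ is a sum of negatively associated indicators, or alternatively one can use the self-bounding property with the concentration in \citet[Lemma~10.7]{mourtada_estimation_2026}. This gives $\Prob(D_n\le\mathfrak s_n/4)\le e^{-c\mathfrak s_n}$. Combining it with $U\le1$ and $\log(\mathfrak s_n/D_n)\le\log\mathfrak s_n$ yields a contribution of order $e^{-c\mathfrak s_n}\log\mathfrak s_n$. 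This is not obviously $\lesssim\mathfrak s_n/n$, so we would instead use Cauchy--Schwarz with $\E U^2\lesssim(\mathfrak s_n/n)^2$: the missing mass has variance $\lesssim\mathfrak s_n/n^2$, so $\E U^2\le(\E U)^2+\mathfrak s_n/n^2\le2\mathfrak s_n^2/n^2$ using $\mathfrak s_n\ge1$. This gives $\lesssim(\mathfrak s_n/n)e^{-c\mathfrak s_n/2}\log\mathfrak s_n\lesssim\mathfrak s_n/n$.

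Tracking the constants in these steps, or directly invoking the analysis of \citet{mourtada_estimation_2026} specialized to $S=K^d$, gives the explicit factor $4.4$. This joint control of $U$ and $\log(1/D_n)$ is the step we expect to be the main obstacle.
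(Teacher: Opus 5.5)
\textbf{The direct argument.} Your decomposition is a valid, self-contained route to the \emph{rate}, and it differs from the paper's proof, which is a citation plus a conversion. The following steps all go through:
\begin{itemize}
\item dropping $D_n/K^d$ on observed states and invoking Lemma~\ref{lem:binomial-log-fluctuation};
\item bounding $\E U\le\mathfrak s_n(P_0)/n$;
\item controlling $\E\bigl[U[\log(\mathfrak s_n/D_n)]_+\bigr]$ by splitting on $\{D_n\le\mathfrak s_n/4\}$ and applying Cauchy--Schwarz with $\operatorname{Var}(U)\le\sum_{\vx}p_{\vx}^2(1-p_{\vx})^n\le\mathfrak s_n(P_0)/n^2$. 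This variance bound holds because the indicators $\1\{N_{\vx}=0\}$ are negatively correlated.
\end{itemize}
One attribution is off. As used in Lemma~\ref{lem:D_n_s_n}, the concentration result of \citet{mourtada_estimation_2026} is an \emph{upper}-tail bound on $D_n$. For $\Prob(D_n\le\mathfrak s_n/4)\le e^{-c\mathfrak s_n}$ you need negative association of the occupancy indicators, or the lower-tail inequality for self-bounding functions. With that fix, you obtain $\E\KL(P_0\Vert\widehat P^{\mathrm{ad}})\le C\,\mathfrak s_n(P_0)\log(eK^d/\mathfrak s_n(P_0))/n$ for some universal $C$. That is all Theorem~\ref{thm:kl_lower} uses, and your route has the merit of showing where the $\log(eK^d/\mathfrak s_n)$ comes from: missing mass times $\log(K^d/D_n)$.

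\textbf{The gap: the constant $4.4$.} The statement asserts the explicit constant $4.4$, and your steps do not produce it, so ``tracking the constants'' is not a proof. When $\log(eK^d/\mathfrak s_n)$ is near $1$, your bounds as stated give:
\begin{itemize}
\item $1+\log 2$ from the main term;
\item $1$ from the $D_n/n$ term;
\item $\log 4$ from the good-event correction.
\end{itemize}
These already total about $4.1$. On top of that come the positive-part term, the Cauchy--Schwarz bad-event term, and the unspecified constant of Lemma~\ref{lem:binomial-log-fluctuation}.

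\textbf{What the fallback needs.} The constant comes only from the fallback you mention in passing, and that fallback needs a step you did not identify. \citet[Proposition~4.6]{mourtada_estimation_2026} bounds the risk by $\{2.4\,\mathfrak s_n(P_0)+2\,\mathfrak s^\circ_{n/2}(P_0)\log(eK^d/\mathfrak s^\circ_{n/2}(P_0))\}/(n+1)$. This is stated in terms of a different functional, $\mathfrak s^\circ_r(P_0):=\sum_{\vx}\min\{e^{1-rP_0(\vx)},rP_0(\vx)\}$. The paper's proof consists exactly of converting this bound:
\begin{itemize}
\item The pointwise inequality $\min\{e^{1-u/2},u/2\}\le\min\{u,1\}$, applied with $u=nP_0(\vx)$, gives $\mathfrak s^\circ_{n/2}(P_0)\le\mathfrak s_n(P_0)$.
\item Monotonicity of $t\mapsto t\log(eK^d/t)$ on $(0,K^d]$ transfers the logarithmic term.
\item The bound $\mathfrak s_n\le\mathfrak s_n\log(eK^d/\mathfrak s_n)$ lets the two coefficients add to $2.4+2=4.4$.
\end{itemize}
To prove the lemma as stated, you should replace the constant-tracking claim with this citation and conversion.
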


\begin{proof}
For \(r>0\), define
\[
\mathfrak s_r^\circ(P_0):=\sum_{\vx\in[K]^d}\min\left\{e^{1-rP_0(\vx)},rP_0(\vx)\right\}.
\]
By \citet[Proposition~4.6]{mourtada_estimation_2026}, specialized to an alphabet of size \(K^d\),
\[
\E_{P_0}\KL\left(P_0\Vert\widehat P^{\mathrm{ad}}\right)
\le\frac{2.4\mathfrak s_n(P_0)+2\mathfrak s_{n/2}^\circ(P_0)\log\left(eK^d/\mathfrak s_{n/2}^\circ(P_0)\right)}{n+1}.
\]
For every \(u\ge0\),
\[
\min\left\{e^{1-u/2},\frac{u}{2}\right\}\le\min\{u,1\},
\]
as follows by considering \(u\le1\), \(1\le u\le2\), and \(u\ge2\).
Applying this pointwise with \(u=nP_0(\vx)\) gives \(\mathfrak s_{n/2}^\circ(P_0)\le\mathfrak s_n(P_0)\).
Since \(t\mapsto t\log(eK^d/t)\) is increasing on \((0,K^d]\) and \(\mathfrak s_n(P_0)\le K^d\),
\[
\mathfrak s_{n/2}^\circ(P_0)\log\left(\frac{eK^d}{\mathfrak s_{n/2}^\circ(P_0)}\right)\le\mathfrak s_n(P_0)\log\left(\frac{eK^d}{\mathfrak s_n(P_0)}\right),
\qquad
\mathfrak s_n(P_0)\le\mathfrak s_n(P_0)\log\left(\frac{eK^d}{\mathfrak s_n(P_0)}\right).
\]
Substituting these bounds and using \(n+1\ge n\) proves the claim.
\end{proof}

Now we prove that rate in Lemma~\ref{lem:kl_ad} is actually minimax risk.

\begin{proof}[Proof of Theorem~\ref{thm:kl_lower}]
Since \(t\mapsto t\log(eK^d/t)\) is increasing on \((0,K^d]\), the upper bound follows from Lemma~\ref{lem:kl_ad}.

For the lower bound, set
\[
m:=\min\left\{\lfloor s_n\rfloor,\left\lfloor\frac{K^d}{2}\right\rfloor\right\}.
\]
Since \(2\le s_n\le\min\{n,K^d\}\), we have \(2\le m\le n\) and \(2m\le K^d\).
Fix \(\vx_\star\in[K]^d\), and let
\[
\mathcal J:=\left\{J\subseteq[K]^d\setminus\{\vx_\star\}:|J|=m-1\right\}.
\]
For each \(J\in\mathcal J\), define \(P_0^J\) by
\[
P_0^J(\vx_\star):=1-\frac{m-1}{n},\qquad P_0^J(\vx):=\frac1n\1\{\vx\in J\},\quad \vx\ne\vx_\star.
\]
Since \(\mathfrak s_n(P_0^J)\le|\supp(P_0^J)|=m\le s_n\), every \(P_0^J\) belongs to \(\mathcal P_{s_n}\).

Draw \(J\) uniformly from \(\mathcal J\).
For any possibly randomized estimator \(\widetilde P\),
\[
\sup_{P_0\in\mathcal P_{s_n}}\E_{P_0}\KL(P_0\Vert\widetilde P)\ge \E_J\E_{P_0^J}\KL(P_0^J\Vert\widetilde P).
\]
Let \(O:=\{\vx\ne\vx_\star:N_{\vx}\ge1\}\) be the set of observed rare states.
Under \(P_0^J\), we have \(O\subseteq J\), and the likelihood under \(P_0^{J'}\) is proportional to \(\1\{O\subseteq J'\}\).
Hence, conditional on the sample, the posterior distribution of \(J\) is uniform over the sets \(J'\in\mathcal J\) containing \(O\).

Let \(\overline P:=\E[P_0^J\mid X_0^{(1)},\ldots,X_0^{(n)}]\).
Then
\[
\overline P(\vx)=
\begin{cases}
1-\frac{m-1}{n}, & \vx=\vx_\star,\\[2pt]
\frac1n, & \vx\in O,\\[2pt]
\frac1n\frac{m-1-|O|}{K^d-1-|O|}, & \vx\notin O\cup\{\vx_\star\}.
\end{cases}
\]
For every \(Q\in\Delta_{K^d}\),
\[
\E\!\left[\KL(P_0^J\Vert Q)\mid X_0^{(1)},\ldots,X_0^{(n)}\right]
=
C(X_0^{(1)},\ldots,X_0^{(n)})+\KL(\overline P\Vert Q),
\]
where \(C\) does not depend on \(Q\).
Thus the conditional Bayes risk is minimized by \(Q=\overline P\), and the same conclusion holds for randomized estimators after averaging over their auxiliary randomness.
Consequently,
\[
\E_J\E_{P_0^J}\KL(P_0^J\Vert\widetilde P)\ge\E_J\E_{P_0^J}\KL(P_0^J\Vert\overline P).
\]

If \(m-1-|O|>0\), only the unobserved states in \(J\setminus O\) contribute, and
\[
\KL(P_0^J\Vert\overline P)
=
\frac{m-1-|O|}{n}\log\left(\frac{K^d-1-|O|}{m-1-|O|}\right)
\ge
\frac{m-1-|O|}{n}\log\left(\frac{K^d}{m}\right).
\]
The same lower bound holds trivially when \(m-1-|O|=0\).
Moreover,
\[
\E_{P_0^J}[m-1-|O|]=(m-1)\left(1-\frac1n\right)^n.
\]
Therefore,
\[
\E_J\E_{P_0^J}\KL(P_0^J\Vert\overline P)
\ge
\frac{m-1}{n}\left(1-\frac1n\right)^n\log\left(\frac{K^d}{m}\right).
\]
Since \(n\ge2\), \(m\ge2\), and \(K^d\ge2m\),
\[
\left(1-\frac1n\right)^n\ge\frac14,\qquad m-1\ge\frac m2,\qquad \log\left(\frac{K^d}{m}\right)\ge\frac13\log\left(\frac{eK^d}{m}\right).
\]
Hence
\[
\E_J\E_{P_0^J}\KL(P_0^J\Vert\widetilde P)\ge\frac1{24}\frac{m\log(eK^d/m)}{n}.
\]

It remains to compare \(m\) with \(s_n\).
If \(s_n\le K^d/2\), then \(m=\lfloor s_n\rfloor\ge s_n/2\), so
\[
m\log\left(\frac{eK^d}{m}\right)\ge\frac{s_n}{2}\log\left(\frac{eK^d}{s_n}\right).
\]
If \(s_n>K^d/2\), then \(m=\lfloor K^d/2\rfloor\ge K^d/3\), while \(s_n\log(eK^d/s_n)\le K^d\), and hence
\[
m\log\left(\frac{eK^d}{m}\right)\ge\frac13s_n\log\left(\frac{eK^d}{s_n}\right).
\]
Thus, in both cases,
\[
m\log\left(\frac{eK^d}{m}\right)\ge\frac13s_n\log\left(\frac{eK^d}{s_n}\right).
\]
Since \(\widetilde P\) was arbitrary,
\[
\inf_{\widetilde P}\sup_{P_0\in\mathcal P_{s_n}}\E_{P_0}\KL(P_0\Vert\widetilde P)
\ge
\frac1{72}\frac{s_n\log(eK^d/s_n)}{n}.
\]
\end{proof}

\section{Exact One-Step Empirical Reverse Sampling}
\label{app:exact_sampler}

We construct an exact one-step empirical reverse sampler that attains the same TV and KL rates as the fixed-grid samplers in Theorems~\ref{thm:tv} and~\ref{thm:kl}, while requiring only \(O(d(D_n+K))\) operations per generated sample once the distinct observations and their multiplicities have been stored.
This shows that the unfavorable computational dependence in Corollary~\ref{cor:kl} is not intrinsic to the statistical KL rate.
Rather than discretizing reverse time or simulating intermediate jumps, the sampler draws directly from a finite-time empirical reverse kernel.

\subsection{Exact reverse kernel and sampling algorithm}
\label{app:exact_sampler_algorithm}

For \(0\le\delta<T\), define the empirical reverse transition kernel from forward time \(T\) to \(\delta\) by
\begin{equation}
\label{eq:empirical-reverse-kernel}
\overleftarrow{\mP}_{T\to\delta,\widehat P}(\vx,\vy)
:=
\frac{\widehat P_\delta(\vy)\mP_{T-\delta}(\vy,\vx)}{\widehat P_T(\vx)},
\qquad \vx,\vy\in[K]^d.
\end{equation}
By Lemma~\ref{lem:se_dse_equivalent}, the unrestricted minimizer of the exact empirical denoising score-entropy loss is the empirical score \(\widehat\sigma_t^\star(\vy,\vx)=\widehat P_t(\vy)/\widehat P_t(\vx)\).
If this score is available for every \(t\in(\delta,T]\) and the reverse dynamics are simulated exactly, the resulting reverse rates are those of the time reversal of the forward process initialized from \(\widehat P_0\).
Bayes' rule then gives \eqref{eq:empirical-reverse-kernel} as its finite-time transition kernel.
We initialize this kernel from the stationary distribution \(\pi\), and denote the resulting output distribution by
\[
Q_\delta^{\mathrm{os}}:=\pi\overleftarrow{\mP}_{T\to\delta,\widehat P}.
\]
Thus, relative to the exact empirical reverse process, the only initialization discrepancy is replacing \(\widehat P_T\) by \(\pi\).

Write \(\supp(\widehat P_0)=\{\vz_1,\ldots,\vz_{D_n}\}\), so that \(\widehat P_0(\vz_i)=N_{\vz_i}/n\).
Given \(X_T=\vx\), Bayes' rule gives
\[
w_i(\vx):=\Prob_{\widehat P_0}(X_0=\vz_i\mid X_T=\vx)
=
\frac{N_{\vz_i}\mP_T(\vz_i,\vx)}{\sum_{j=1}^{D_n}N_{\vz_j}\mP_T(\vz_j,\vx)},
\qquad i\in[D_n],
\]
where \(\mP_T(\vz_i,\vx)=\prod_{\ell=1}^d\mP_T^{\mathrm{coord}}(z_i^\ell,x^\ell)\).

Conditional on \(I=i\) and \(X_T=\vx\), the state at time \(\delta\) follows the forward bridge from \(\vz_i\) to \(\vx\).
For each coordinate \(\ell\in[d]\), define
\[
B_{i,\ell}^{T,\delta}(a\mid x^\ell)
:=
\frac{\mP_\delta^{\mathrm{coord}}(z_i^\ell,a)\mP_{T-\delta}^{\mathrm{coord}}(a,x^\ell)}
{\mP_T^{\mathrm{coord}}(z_i^\ell,x^\ell)},
\qquad a\in[K].
\]
Since the forward process evolves independently across coordinates, the bridge law factorizes as
\[
\Prob(X_\delta=\vy\mid X_0=\vz_i,X_T=\vx)
=
\prod_{\ell=1}^d B_{i,\ell}^{T,\delta}(y^\ell\mid x^\ell).
\]
Thus, after sampling \(I\), the \(K^d\)-state bridge can be sampled through \(d\) independent \(K\)-state distributions.
The complete procedure is given in Algorithm~\ref{alg:exact-empirical-reverse}.

\begin{algorithm}[t]
\caption{Exact one-step empirical reverse sampling}
\label{alg:exact-empirical-reverse}
\begin{algorithmic}[1]
\Require Times \(0\le\delta<T\), distinct observations \(\{\vz_i\}_{i=1}^{D_n}\), and counts \(\{N_{\vz_i}\}_{i=1}^{D_n}\)
\Ensure A sample \(Y\sim Q_\delta^{\mathrm{os}}\)
\State Draw \(X_T\sim\pi\)
\For{\(i=1,\ldots,D_n\)}
\State Compute \(\displaystyle \widetilde w_i\gets N_{\vz_i}\prod_{\ell=1}^d\mP_T^{\mathrm{coord}}(z_i^\ell,X_T^\ell)\)
\EndFor
\State Draw \(I\in[D_n]\) with \(\displaystyle \Prob(I=i\mid X_T)=\frac{\widetilde w_i}{\sum_{j=1}^{D_n}\widetilde w_j}\)
\For{\(\ell=1,\ldots,d\)}
\State Independently draw \(\displaystyle Y^\ell\sim B_{I,\ell}^{T,\delta}(\cdot\mid X_T^\ell)\)
\EndFor
\State \Return \(Y=(Y^1,\ldots,Y^d)\)
\end{algorithmic}
\end{algorithm}

\begin{proposition}[Correctness and computational cost]
\label{prop:one-step-correctness}
Conditional on the observations, Algorithm~\ref{alg:exact-empirical-reverse} returns an exact sample from \(Q_\delta^{\mathrm{os}}\).
If the distinct observations and their multiplicities have been stored, one execution requires \(O(d(D_n+K))\) operations.
\end{proposition}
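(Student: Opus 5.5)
We prove correctness by showing that the two-stage procedure samples the joint law of $(X_0,X_\delta)$ under the forward process started at $\widehat P_0$, conditioned on $X_T=\vx$, and then marginalizes out $X_0$. Fix $X_T=\vx$. Expand the kernel \eqref{eq:empirical-reverse-kernel} using $\widehat P_\delta(\vy)=\sum_i(N_{\vz_i}/n)\mP_\delta(\vz_i,\vy)$ and $\widehat P_T(\vx)=\sum_j (N_{\vz_j}/n)\mP_T(\vz_j,\vx)$, the latter following from Chapman--Kolmogorov $\mP_T=\mP_\delta\mP_{T-\delta}$. This gives
\[
\overleftarrow{\mP}_{T\to\delta,\widehat P}(\vx,\vy)=\sum_{i=1}^{D_n} w_i(\vx)\,\frac{\mP_\delta(\vz_i,\vy)\mP_{T-\delta}(\vy,\vx)}{\mP_T(\vz_i,\vx)}.
\]
The second factor is exactly the bridge law $\Prob(X_\delta=\vy\mid X_0=\vz_i,X_T=\vx)$ by the Markov property. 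Because $\mP_s=\bigotimes_\ell \mP_s^{\mathrm{coord}}$ for every $s$, this factor splits as $\prod_\ell B_{i,\ell}^{T,\delta}(y^\ell\mid x^\ell)$. Each $B_{i,\ell}^{T,\delta}(\cdot\mid x^\ell)$ is a probability vector on $[K]$, since its numerator sums over $a$ to $\mP_T^{\mathrm{coord}}(z_i^\ell,x^\ell)$ by one-coordinate Chapman--Kolmogorov. The weights $\widetilde w_i$ computed in the algorithm are proportional to $w_i(\vx)$, and all $\widetilde w_i>0$ because $\mP_T^{\mathrm{coord}}>0$ for $T>0$.

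It follows that lines 2--9 of Algorithm~\ref{alg:exact-empirical-reverse} draw $Y$ from the mixture above given $X_T=\vx$, so $Y\mid X_T=\vx \sim \overleftarrow{\mP}_{T\to\delta,\widehat P}(\vx,\cdot)$. Averaging over $X_T\sim\pi$ then yields $Y\sim \pi\overleftarrow{\mP}_{T\to\delta,\widehat P}=Q_\delta^{\mathrm{os}}$. Two edge cases need care. The definition requires $\widehat P_T(\vx)>0$, which holds by Lemma~\ref{lem:upper_bounds}. If $\delta=0$, then $\mP_0=\mathrm{Id}$, so the bridge is the point mass at $z_i^\ell$ and the formulas still hold.

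For the cost, drawing $X_T$ uniformly takes $O(d)$ operations. The entries of $\mP_T^{\mathrm{coord}}$ take only two values, $e^{-T}+(1-e^{-T})/K$ and $(1-e^{-T})/K$, so each $\widetilde w_i$ is a product of $d$ such numbers times a count. Computing all of them therefore takes $O(dD_n)$ operations, and sampling $I$ by normalization and inverse CDF takes $O(D_n)$. For each coordinate $\ell$, the $K$ values $B_{I,\ell}^{T,\delta}(a\mid X_T^\ell)$ are products of closed-form one-coordinate transition entries, so forming and sampling them takes $O(K)$, or $O(dK)$ over all coordinates. The total is $O(d(D_n+K))$.

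The main obstacle is the careful identification of the reverse kernel with the bridge mixture, specifically the Chapman--Kolmogorov normalization of the bridge. To avoid numerical underflow, the $\widetilde w_i$ can be computed in log-space; this does not change the operation count.
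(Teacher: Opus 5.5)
Your proof is correct and follows essentially the same route as the paper: both identify $\overleftarrow{\mP}_{T\to\delta,\widehat P}(\vx,\cdot)$ as the $w_i(\vx)$-weighted mixture of coordinatewise-factorized forward bridges (the paper averages the bridge law over $I$, while you expand the kernel into that mixture), and both count $O(dD_n)$ for the weights plus $O(dK)$ for the bridges. Your extra checks (bridge normalization via one-coordinate Chapman--Kolmogorov, positivity of $\widehat P_T$ and of the weights, and the $\delta=0$ case) are valid details the paper leaves implicit; the only slip is cosmetic, since $\widehat P_T(\vx)=\sum_j (N_{\vz_j}/n)\mP_T(\vz_j,\vx)$ is simply the definition $\widehat P_T=\widehat P_0\mP_T$ rather than a consequence of Chapman--Kolmogorov.
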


\begin{proof}
Conditional on \(I=i\) and \(X_T=\vx\), the bridge factorization gives
\[
\Prob(Y=\vy\mid I=i,X_T=\vx)
=
\frac{\mP_\delta(\vz_i,\vy)\mP_{T-\delta}(\vy,\vx)}{\mP_T(\vz_i,\vx)}.
\]
Averaging over \(I\) yields
\[
\Prob(Y=\vy\mid X_T=\vx)
=
\frac{\widehat P_\delta(\vy)\mP_{T-\delta}(\vy,\vx)}{\widehat P_T(\vx)}
=
\overleftarrow{\mP}_{T\to\delta,\widehat P}(\vx,\vy).
\]
Averaging over \(X_T\sim\pi\) proves that \(Y\sim Q_\delta^{\mathrm{os}}\).

Computing the \(D_n\) mixture weights requires \(O(dD_n)\) operations.
Conditional on \(I\), evaluating and sampling the \(d\) coordinate bridges requires \(O(dK)\) operations.
Thus, the total cost is \(O(d(D_n+K))\).
\end{proof}

\subsection{Statistical guarantees}
\label{app:exact_sampler_guarantees}

We next show that the one-step sampler attains the TV and KL rates of Theorems~\ref{thm:tv} and~\ref{thm:kl} without constructing a fixed-grid score network.

\begin{theorem}[Statistical guarantees for the exact one-step sampler]
\label{thm:exact-one-step-guarantees}
For \(0\le\delta<T\), the following statements hold.

\medskip
\noindent\textup{(i)} \emph{TV loss.}
For every realization of the observations,
\[
\TV(P_0,Q_\delta^{\mathrm{os}})
\le
\TV(P_0,\widehat P_0)+d\delta+de^{-T}.
\]
Consequently, if \(d\delta\lesssim\sqrt{\mathfrak s_n(P_0)/n}\) and \(T\ge\log(d\sqrt{n/\mathfrak s_n(P_0)})\), then
\[
\E_{P_0}\TV(P_0,Q_\delta^{\mathrm{os}})
\lesssim
\sqrt{\frac{\mathfrak s_n(P_0)}{n}}.
\]

\medskip
\noindent\textup{(ii)} \emph{KL loss.}
Suppose additionally that \(\delta>0\).
For every realization of the observations,
\[
\KL(P_0\Vert Q_\delta^{\mathrm{os}})
\le
\KL(P_0\Vert\widehat P_\delta)
+
d\log\left(1+(K-1)e^{-T}\right).
\]
Consequently, setting \(\delta=1/n\) and choosing \(T\ge\log(Kdn/\mathfrak s_n(P_0))\) gives
\[
\E_{P_0}\KL(P_0\Vert Q_{1/n}^{\mathrm{os}})
\lesssim
\frac{\mathfrak s_n(P_0)\log\!\left(eK^d/\mathfrak s_n(P_0)\right)\log n}{n}.
\]
\end{theorem}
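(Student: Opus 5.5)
The plan rests on one identity. By Bayes' rule the empirical reverse kernel \eqref{eq:empirical-reverse-kernel} maps \(\widehat P_T\) exactly to \(\widehat P_\delta\), that is, \(\widehat P_T\overleftarrow{\mP}_{T\to\delta,\widehat P}=\widehat P_\delta\). Since \(Q_\delta^{\mathrm{os}}=\pi\overleftarrow{\mP}_{T\to\delta,\widehat P}\), the only discrepancy between \(Q_\delta^{\mathrm{os}}\) and \(\widehat P_\delta\) comes from using \(\pi\) instead of \(\widehat P_T\) at the initial time. Both parts then follow from applying a Markov kernel to this initialization mismatch.

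\textbf{(i) TV.} Since TV contracts under Markov kernels, \(\TV(\widehat P_\delta,Q_\delta^{\mathrm{os}})\le\TV(\widehat P_T,\pi)\). Lemma~\ref{lem:uniform-forward-mixing}, applied with initial law \(\widehat P_0\), bounds this by \(de^{-T}\). For the early-stopping error, I reuse the coupling from the proof of Theorem~\ref{thm:tv}, which gives \(\TV(\widehat P_0,\widehat P_\delta)\le d\delta\). The triangle inequality then yields the pathwise bound
\[
\TV(P_0,Q_\delta^{\mathrm{os}})\le\TV(P_0,\widehat P_0)+d\delta+de^{-T}.
\]
Take expectations and apply Lemma~\ref{lem:tv_emp} to the first term. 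The stated choice \(T\ge\log(d\sqrt{n/\mathfrak s_n(P_0)})\) gives \(de^{-T}\le\sqrt{\mathfrak s_n(P_0)/n}\), which completes (i).

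\textbf{(ii) KL, mixture step.} KL has no triangle inequality, so I instead bound the log-likelihood ratio pointwise. Writing \(\overleftarrow{\mP}:=\overleftarrow{\mP}_{T\to\delta,\widehat P}\), we have
\[
Q_\delta^{\mathrm{os}}(\vy)=\sum_{\vx}\pi(\vx)\overleftarrow{\mP}(\vx,\vy)=\sum_{\vx}\widehat P_T(\vx)\frac{\pi(\vx)}{\widehat P_T(\vx)}\overleftarrow{\mP}(\vx,\vy)\ge\Bigl(\min_{\vx}\frac{\pi(\vx)}{\widehat P_T(\vx)}\Bigr)\widehat P_\delta(\vy).
\]
Next, \(\widehat P_T(\vx)\le\max_{\vz}\mP_T(\vz,\vx)=(e^{-T}+(1-e^{-T})/K)^d\). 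Hence \(\pi(\vx)/\widehat P_T(\vx)\ge(1+(K-1)e^{-T})^{-d}\). Therefore, for every \(\vy\),
\[
\log\frac{P_0(\vy)}{Q_\delta^{\mathrm{os}}(\vy)}\le\log\frac{P_0(\vy)}{\widehat P_\delta(\vy)}+d\log\bigl(1+(K-1)e^{-T}\bigr).
\]
Integrating against \(P_0\) gives the pathwise inequality in (ii). The assumption \(\delta>0\) guarantees that \(\widehat P_\delta\) has full support, by Lemma~\ref{lem:upper_bounds}, so every term is finite.

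\textbf{(ii) KL, rate.} With \(\delta=1/n\), Lemma~\ref{lem:kl_empirical_1/n} bounds \(\E_{P_0}\KL(P_0\Vert\widehat P_{1/n})\) by the target rate. For the remainder, \(\log(1+u)\le u\) gives
\[
d\log\bigl(1+(K-1)e^{-T}\bigr)\le dKe^{-T}\le\frac{\mathfrak s_n(P_0)}{n}
\]
when \(T\ge\log(Kdn/\mathfrak s_n(P_0))\). This is dominated by the rate, since \(\log(eK^d/\mathfrak s_n(P_0))\ge1\) and \(\log n\ge\log 2\).

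The main obstacle is the KL step. The key point is to use the uniform likelihood-ratio lower bound \(\pi/\widehat P_T\), which gives a pointwise (not averaged) domination of \(\widehat P_\delta\) by \(Q_\delta^{\mathrm{os}}\). This avoids Lemma~\ref{lem:local-kl-transfer} and its demanding accuracy scale \(B_0(\delta)^{-1}\).
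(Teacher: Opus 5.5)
Your proposal is correct and follows the paper's proof essentially step for step. For (i), you bound the initialization mismatch by TV contraction of the reverse kernel, $\TV(\widehat P_T,\pi)\le de^{-T}$, and combine it with the $d\delta$ refresh coupling and Lemma~\ref{lem:tv_emp}. For (ii), you use the pointwise domination $Q_\delta^{\mathrm{os}}\ge(1+(K-1)e^{-T})^{-d}\widehat P_\delta$, derived from $\widehat P_T\le(1+(K-1)e^{-T})^d\pi$, and then apply Lemma~\ref{lem:kl_empirical_1/n}. You also check explicitly that the stated choices of $T$ bring the remainders below $\sqrt{\mathfrak s_n(P_0)/n}$ and $\mathfrak s_n(P_0)/n$, which the paper leaves implicit.
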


\begin{proof}
Since \(\widehat P_T\overleftarrow{\mP}_{T\to\delta,\widehat P}=\widehat P_\delta\), contraction of total variation under the reverse kernel gives
\[
\TV(\widehat P_\delta,Q_\delta^{\mathrm{os}})
\le
\TV(\widehat P_T,\pi)
\le de^{-T},
\]
where the last inequality follows from the coordinatewise refresh coupling.
The same coupling gives \(\TV(\widehat P_0,\widehat P_\delta)\le d\delta\).
Therefore,
\[
\TV(P_0,Q_\delta^{\mathrm{os}})
\le
\TV(P_0,\widehat P_0)+d\delta+de^{-T},
\]
and the TV conclusion follows from Lemma~\ref{lem:tv_emp}.

For the KL statement, the one-coordinate transition probabilities satisfy
\[
\mP_T^{\mathrm{coord}}(a,b)\le\frac{1+(K-1)e^{-T}}{K}.
\]
Hence
\[
\widehat P_T(\vx)\le\pi(\vx)\left(1+(K-1)e^{-T}\right)^d.
\]
Using \eqref{eq:empirical-reverse-kernel},
\[
Q_\delta^{\mathrm{os}}(\vy)
\ge
\left(1+(K-1)e^{-T}\right)^{-d}\widehat P_\delta(\vy),
\]
which implies
\[
\KL(P_0\Vert Q_\delta^{\mathrm{os}})
\le
\KL(P_0\Vert\widehat P_\delta)+d\log\left(1+(K-1)e^{-T}\right).
\]
For \(\delta=1/n\) and \(T\ge\log(Kdn/\mathfrak s_n(P_0))\), the second term is at most \(\mathfrak s_n(P_0)/n\).
The KL conclusion therefore follows from Lemma~\ref{lem:kl_empirical_1/n}.
\end{proof}

\end{document}